\theoremstyle{plain}
\newtheorem{theorem}{Theorem}[section]
\newtheorem{lemma}[theorem]{Lemma}
\theoremstyle{definition}
\newtheorem{assumption}[theorem]{Assumption}
\theoremstyle{remark}
\newtheorem*{remark}{\textbf{Remark}} % No Number
\newcommand{\mycomment}[1]{}
\newcommand{\GG}[1]{}
\def\eqref#1{(\ref{#1})}
\def\vw{{\bm{w}}}
\def\vx{{\bm{x}}}
\DeclareMathAlphabet{\mathsfit}{\encodingdefault}{\sfdefault}{m}{sl}
\SetMathAlphabet{\mathsfit}{bold}{\encodingdefault}{\sfdefault}{bx}{n}
\def\gD{{\mathcal{D}}}
\def\gS{{\mathcal{S}}}
\def\gX{{\mathcal{X}}}
\def\gY{{\mathcal{Y}}}
\def\sP{{\mathbb{P}}}
\def\sR{{\mathbb{R}}}
\icmltitlerunning{Lightweight Dataset Pruning without Full Training via Example Difficulty and Prediction Uncertainty}
\begin{document}

\twocolumn[
\icmltitle{Lightweight Dataset Pruning without Full Training\\ via Example Difficulty and Prediction Uncertainty}

\icmlsetsymbol{equal}{*}

\begin{icmlauthorlist}
\icmlauthor{Yeseul Cho}{equal,sch}
\icmlauthor{Baekrok Shin}{equal,sch}
\icmlauthor{Changmin Kang}{sch}
\icmlauthor{Chulhee Yun}{sch}

\end{icmlauthorlist}

\icmlaffiliation{sch}{Kim Jaechul Graduate School of AI, KAIST, Seoul, South Korea}

\icmlcorrespondingauthor{Chulhee Yun}{chulhee.yun@kaist.ac.kr}

\icmlkeywords{Machine Learning, ICML}

\vskip 0.3in
]

\printAffiliationsAndNotice{\icmlEqualContribution} 

\begin{abstract}
Recent advances in deep learning rely heavily on massive datasets, leading to substantial storage and training costs.
Dataset pruning aims to alleviate this demand by discarding redundant examples.
However, many existing methods require training a model with a full dataset over a large number of epochs before being able to prune the dataset, which ironically makes the pruning process more expensive than just training the model on the entire dataset.
To overcome this limitation, we introduce the \textbf{Difficulty and Uncertainty-Aware Lightweight (DUAL)} score, which aims to identify important samples from the early training stage by considering both example difficulty and prediction uncertainty. To address a catastrophic accuracy drop at an extreme pruning ratio, we further propose a pruning ratio-adaptive sampling using Beta distribution.
Experiments on various datasets and learning scenarios such as image classification with label noise and image corruption, and model architecture generalization demonstrate the superiority of our method over previous state-of-the-art (SOTA) approaches. Specifically, on ImageNet-1k, our method reduces the time cost for pruning to 66\% compared to previous methods while achieving a SOTA 60\% test accuracy at a 90\% pruning ratio. On CIFAR datasets, the time cost is reduced to just 15\% while maintaining SOTA performance. Implementation is available at \href{https://github.com/behaapyy/dual-pruning.git}{\texttt{github/dual-pruning}.}
\end{abstract}

\section{Introduction}

Advancements in deep learning have been significantly driven by large-scale datasets. However, recent studies have revealed a power-law relationship between the generalization capacity of deep neural networks and the size of their training data \citep{hestness2017deep, rosenfeld2019constructive, gordon2021data}, meaning that the improvement of model performance becomes increasingly cost-inefficient as we scale up the dataset size.

Fortunately, \citet{sorscher2022beyond} demonstrate that the power-law scaling of error can be reduced to exponential scaling with Pareto optimal data pruning. The main goal of dataset pruning is to identify and retain the most informative samples while discarding redundant data points for training neural networks. This approach can alleviate storage and computational costs as well as training efficiency.

However, many existing pruning methods require training a model with a full dataset over a number of epochs to measure the importance of each sample, which ironically makes the pruning process more expensive than just training the model once on the original large dataset. 
For instance, several score-based methods \citep{toneva2018empirical, pleiss2020identifyingmislabeleddatausing, paul2021deep, he2024large, zhang2024spanning} require training as they utilize the dynamics from the whole training process. Some geometry-based methods \citep{xia2022moderate, yang2024mind} leverage features from the penultimate layer of the trained model, therefore training a model is also required.
Hybrid methods \citep{zheng2022coverage, maharana2023d2, tan2025data}, which address the difficulty and diversity of samples simultaneously, still hold the same limitation as they use existing score metrics. Having to compute the dot product of learned features to get the neighborhood information makes them become even more expensive to utilize.

\begin{figure*}[t]
    \raggedleft
    \includegraphics[width=0.95\linewidth]{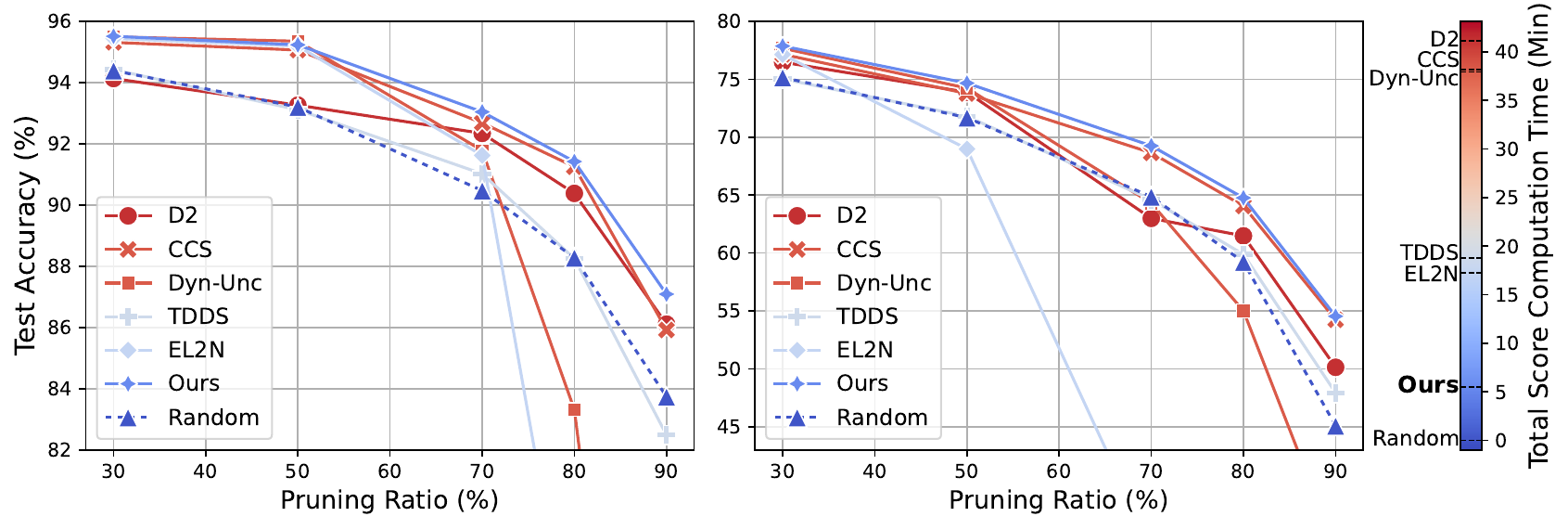}
    %\vspace{-12pt}
    \caption{Test accuracy comparison on CIFAR datasets (\textbf{Left}: Results for CIFAR-10, \textbf{Right}: Results for CIFAR-100). The color represents the total computation time, including the time spent training the original dataset for score calculation, for each pruning method. Blue indicates lower computation time, while red indicates higher computation time. Our method demonstrates its ability to minimize computation time while maintaining SOTA performance.}
    \label{fig:main_figure_cifar10_100}
\end{figure*}

In order to address this issue, we introduce the \textbf{Difficulty and Uncertainty-Aware Lightweight} (\textbf{DUAL}) score, which can measure the importance of samples in the early stage of the training process by considering example difficulty and the prediction uncertainty. Additionally, for the high pruning ratio---when the selected subset is scarce---we propose \textbf{pruning-ratio-adaptive Beta sampling}, to intentionally include easier samples which have lower scores to achieve a better representation of the data distribution~\citep{sorscher2022beyond, zheng2022coverage, acharyabalancing}.

Experiments conducted on CIFAR and ImageNet datasets under various learning scenarios verify the superiority of our method.
Specifically, on ImageNet-1k, our method reduces the time cost to 66\% compared to previous methods while achieving a SOTA performance, 60\% test accuracy at the pruning ratio of 90\%. On the CIFAR datasets, as illustrated in \cref{fig:main_figure_cifar10_100}, our method reduces the time cost to just 15\% while maintaining SOTA performance. Especially, our proposed method shows a notable performance when the dataset contains noise.

\vspace{-3pt}
\section{Related Works}
\vspace{-2pt}
Data pruning aims to remove redundant examples, keeping the most informative subset of dataset, namely the coreset.
Research in this area can be broadly categorized into two groups: \textit{score-based} and \textit{geometry-based} methods. Score-based methods define metrics to measure the importance of data points, then prioritize high-scoring samples. On the other hand, geometry-based methods focus on presenting a better representation of the true data distribution.
Recent studies propose \textit{hybrid} methods which incorporate the example difficulty score with the diversity of the coreset.

\vspace{-6.5pt}
\paragraph{Score-based.}~EL2N~\citep{paul2021deep} calculates L2 norms of the error vector, as an approximation of the gradient norm.
Entropy~\citep{coleman2020selectionproxyefficientdata} quantifies the information contained in the predicted probabilities at the end of training. However, the outcomes of such ``snapshot'' methods need multiple runs to be stabilized, as shown in \cref{fig:rank_corr}, \cref{Appendix_Experiments}.
AUM~\citep{pleiss2020identifyingmislabeleddatausing} accumulates the gap between the target probability and the second-highest prediction probability. 
Methods that utilize training dynamics throughout the entire training offer more stable measures. Forgetting~\citep{toneva2018empirical} score counts the number of forgetting events, where a correct prediction is flipped to a wrong prediction during training process.
Dyn-Unc~\citep{he2024large}, which strongly inspired our approach, prioritizes the most uncertain samples rather than typical easy or hard samples during model training. The uncertainty is measured by the variation of predictions in a sliding window, and the score averages the variation throughout the whole training process.
TDDS~\citep{zhang2024spanning} averages differences of Kullback-Leibler divergence loss of non-target probabilities for $T$ training epochs, where $T$ is highly dependent on the pruning ratio.
The information from training dynamics proves useful for pruning because it allows one to differentiate hard but useful samples from noisy ones \citep{he2024large}. However, despite its stability and effectiveness, previous methods fail to guarantee cost-effectiveness as they require at least one full training of the model on the entire dataset.

\vspace{-6pt}
\paragraph{Geometry-based.}~Geometry-based methods concentrate on providing a better representation by minimizing the redundancy of selected samples. 
SSP~\citep{sorscher2022beyond} selects the samples most distant from k-means cluster centers, while Moderate~\citep{xia2022moderate} prefers samples near the median.
However, these methods often compromise generalization performance, since they underestimate the effect of difficult examples.

Recently, hybrid approaches have emerged that harmonize both difficulty and diversity. CCS~\cite{zheng2022coverage} partitions difficulty scores into bins and selects an equal number of samples from each bin to ensure balanced representation.
$\mathbb{D}^2$~\cite{maharana2023d2} employs a message-passing mechanism with a graph structure where nodes represent difficulty scores and edges encode neighboring representations, facilitating effective sample selection. BOSS~\cite{acharyabalancing} introduces a Beta function for importance sampling based on difficulty scores, which resembles our pruning ratio-adaptive sampling; we discuss the differences in \cref{sec:betapruning}. 
SIMS~\cite{grosz2024data} defines SIM score using class separability, data integrity, and model uncertainty, and then integrates sampling strategy.
Our DUAL pruning defines a score metric with difficulty and uncertainty, and it becomes a hybrid approach when the score is combined with our proposed Beta sampling.

\vspace{-3pt}
\section{Proposed Methods}
\vspace{-2pt}
\subsection{Preliminaries}
Let $\gD \coloneq \left\{\left(\bm{x}_1, y_1\right), \cdots, \left(\bm{x}_n, y_n\right) \right\}$ be a labeled dataset of $n$ training samples, where $\bm{x}\in\gX \subset\sR^d$ and $y\in\gY \coloneq\{1, \cdots, C\}$ are the data point and the label, respectively. $C$ is a positive integer and indicates the number of classes. For each labeled data point $(\vx, y) \in \gD$, denote $\sP_k(y \mid \vx)$ as the prediction probability of $y$ given $\vx$, for the model trained with $k$ epochs. Let $\gS\subset\gD$ be the subset retained after pruning. Pruning ratio $r$ is the ratio of the size of $\gD\setminus\gS$ to $\gD$, or $r = 1-\frac{\lvert\gS\rvert}{\lvert\gD\rvert}$.

The Dynamic Uncertainty (Dyn-Unc) score~\citep{he2024large} prefers the most uncertain samples rather than easy-to-learn or hard-to-learn samples during model training. The uncertainty score is defined as the average of prediction variance throughout training. They first define the uncertainty in a sliding window of length $J$:
\begin{align}
\label{eq:U_score_window}
    \mathrm{U}_k(\bm{x}, y) \coloneq & {\sqrt{\frac{\sum_{j=0}^{J-1}\left[ \sP_{k+j}(y \mid \vx) - \Bar{\sP}_k \right]^2}{J-1}}}
\end{align}
where $\Bar{\sP}_k := \frac{\sum_{j=0}^{J-1} \sP_{k+j}  (y \mid \vx)}{J}$ is the average prediction of the model over the window $[k, k+J-1]$. Then taking the average of the uncertainty throughout the whole training process leads to Dyn-Unc score:
\begin{equation}
\label{eq:Dyn_score}
    \mathrm{U}(\bm{x}, y) = \frac{\sum_{k=1}^{T-J+1} \mathrm{U}_k(\bm{x}, y)}{T-J+1}.
\end{equation}

\subsection{Difficulty \& Uncertainty-Aware Lightweight Score}
\label{sec:DUAL_score_compute}
Following the approach of \citet{swayamdipta2020dataset} and \citet{he2024large}, we analyze data points from ImageNet-1k based on the mean and standard deviation of predictions during training, as shown in \cref{fig:Moon_plot}. We observe data points typically flow along the moon from bottom to top. Data points starting from the bottom left region with low prediction mean and low standard deviation move to the middle region with increased mean and standard deviation, and those starting at the middle region drift toward to the upper left region with high prediction mean and smaller standard deviation. 
This phenomenon is closely aligned with existing observations that neural networks typically learn easy samples first, then treat harder samples later~\citep{bengio2009curriculum, arpit2017closer, jiang2020characterizing, shen2022data}. In other words, we see that the uncertainty of easy samples rises first, and then more difficult samples start to move and show increased uncertainty score.

\cref{fig:Moon_plot} further gives a justification for this intuition. 
In \cref{fig:moon_evolution_60}, samples with the highest Dyn-Unc scores calculated at epoch 60 move upward by the end of training at epoch 90.  
This means that if we measure Dyn-Unc score at the early stage of training, it gives the highest scores to relatively easy samples rather than the most informative samples. It seems undesirable that it results in poor test accuracy on its coreset, as shown in \cref{fig:computation_30_cifar100} in Appendix~\ref{Appendix_Experiments}.

To capture the most useful samples that are likely to contribute significantly to Dyn-Unc during the whole training process (of 90 epochs) at the earlier training stage ($e.g.$ epoch of 60), we need to target the samples located near the bottom-right region of the moon-shaped distribution, as \cref{fig:moon_evolution_90} illustrates.
Inspired by this observation, we design a scoring metric that identifies such samples by taking both the \textit{uncertainty of the predictions} and the \textit{prediction probability} into consideration.

\begin{figure}[t]
    \centering
    % First subfigure
    \begin{subfigure}{\linewidth}
        \centering
        \includegraphics[width=\linewidth]{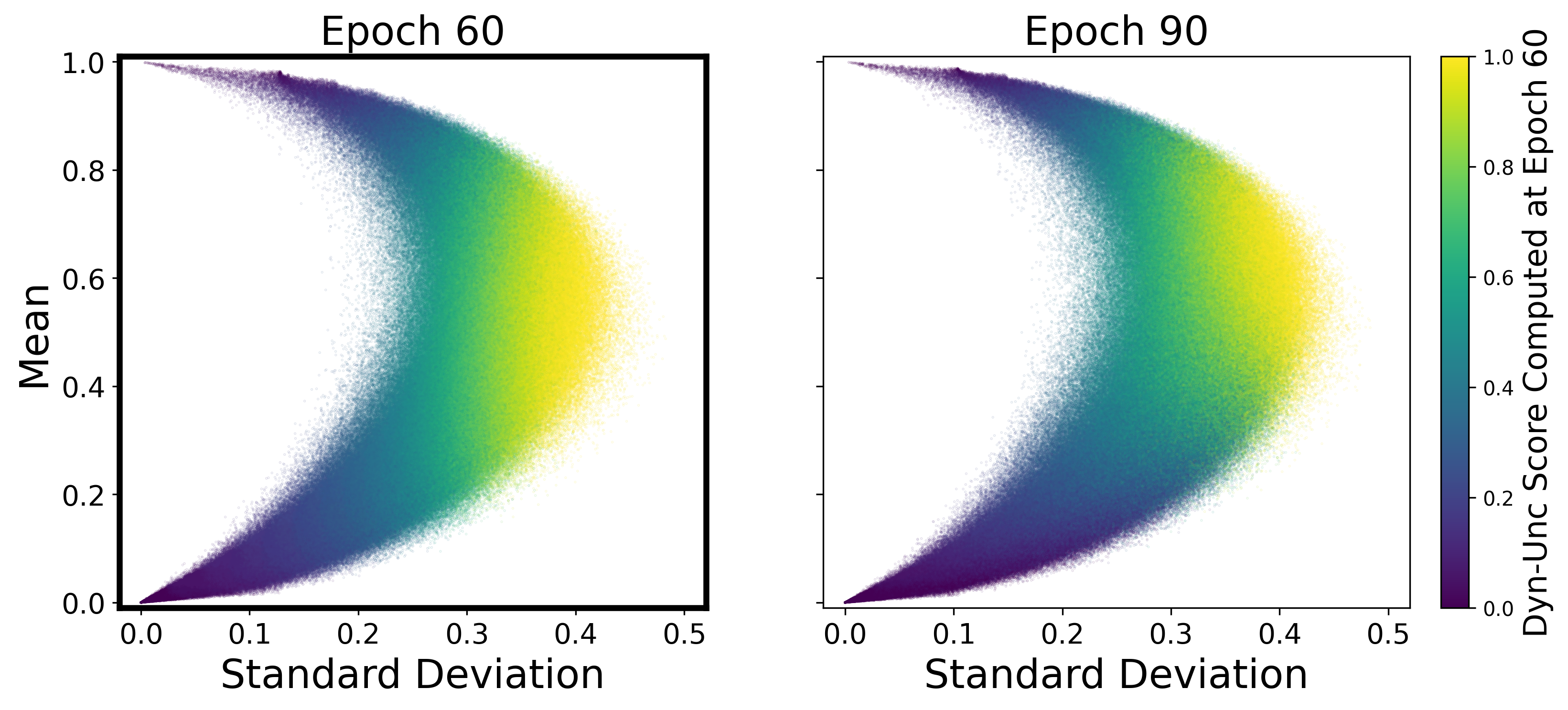}
        \caption{Score calculated at epoch $T=60$.}
        \label{fig:moon_evolution_60}
    \end{subfigure}
    \vfill
    % Second subfigure
    \begin{subfigure}{\linewidth}
        \centering
        \includegraphics[width=\linewidth]{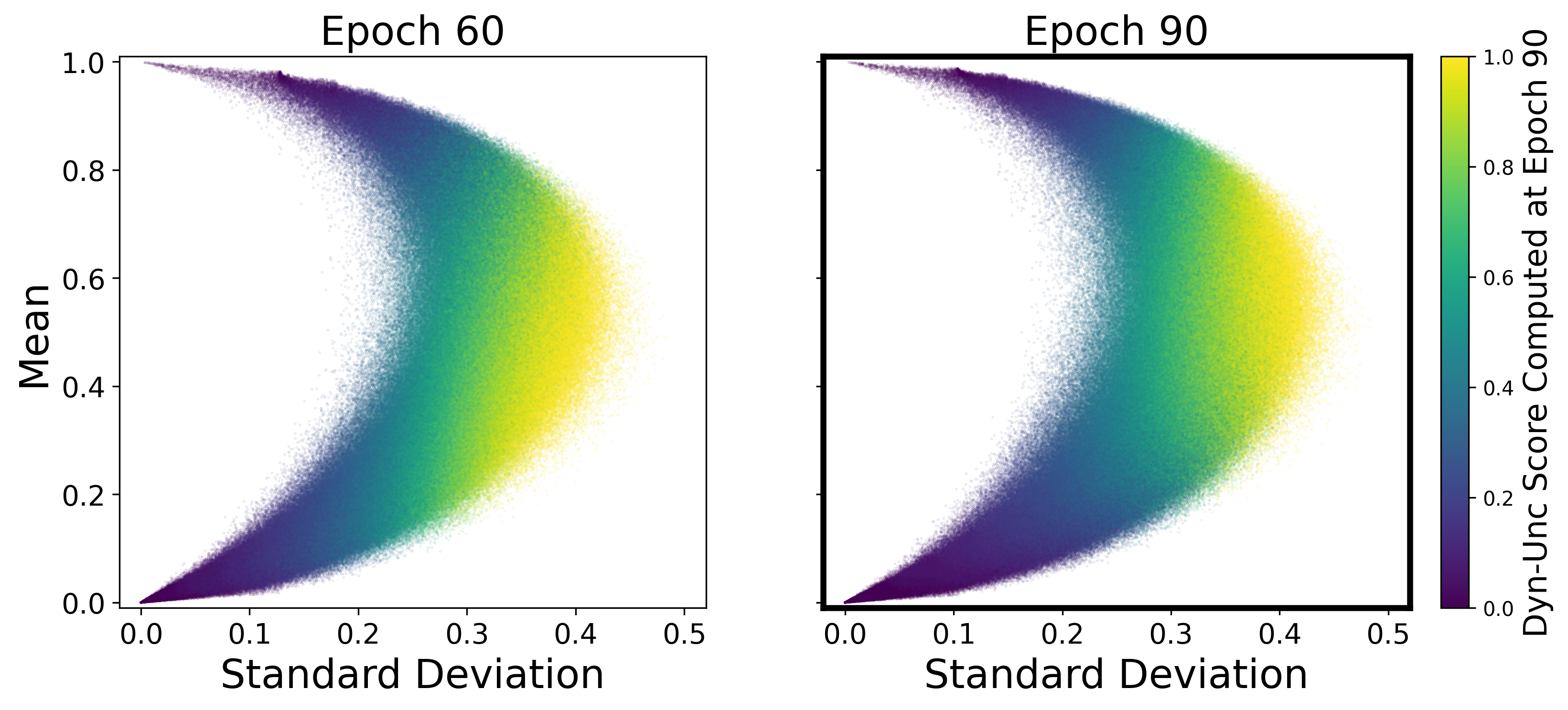}
        \caption{Score calculated at epoch $T=90$.}
        \label{fig:moon_evolution_90}
    \end{subfigure}
    \caption{The left column (``Epoch 60'') shows the prediction mean and standard deviation, computed using the predicted target probabilities up to epoch 60. The right column (``Epoch 90'') shows corresponding values up to epoch 90. In each row, samples are colored by the normalized Dyn-Unc score computed at epoch 60 for \cref{fig:moon_evolution_60} and at epoch 90 for \cref{fig:moon_evolution_90}. The epoch at which the score was computed is indicated by a bold outline for each row.}
    \label{fig:Moon_plot}
\end{figure}

Here, we propose the \textbf{Difficulty and Uncertainty-Aware Lightweight (DUAL) score}, a measure that unites example difficulty and prediction uncertainty. We define the DUAL score of a data point $(\vx, y)$ at $k \in [T-J+1]$ as
\begin{multline}
\label{eq:DUAL_score_window}
    \mathrm{DUAL}_k(\bm{x}, y) \coloneq \\ \underbrace{\left( 1-\Bar{\sP}_k \right)}_{(a)} \underbrace{\sqrt{\frac{\sum_{j=0}^{J-1}\left[ \sP_{k+j}(y \mid \vx) - \Bar{\sP}_k \right]^2}{J-1}}}_{(b)}
\end{multline}
where $\Bar{\sP}_k := \frac{\sum_{j=0}^{J-1} \sP_{k+j}  (y \mid \vx)}{J}$ is the average prediction of the model over the window $[k, k+J-1]$. Note that $\mathrm{DUAL}_k$ is the product of two terms: $(a)$ $1 - \Bar{\sP}_k $ quantifies the example difficulty averaged over the window; $(b)$ is the standard deviation of the prediction probability over the same window, estimating the prediction uncertainty.

Finally, the $\rm DUAL$ score of $(\vx, y)$ is defined as the mean of $\mathrm{DUAL}_k$ scores over all windows:
\begin{equation}
\label{eq:DUAL_score}
    \mathrm{DUAL}(\bm{x}, y) = \frac{\sum_{k=1}^{T-J+1} \mathrm{DUAL}_k(\bm{x}, y)}{T-J+1}.
\end{equation}
The DUAL score reflects training dynamics by leveraging prediction probability across several epochs, providing a reliable estimation to identify the most uncertain examples. 

A theoretical analysis of a toy example further verifies the intuition above. Consider a linearly separable binary classification task $\left\{(\vx_i\in\sR^n, y_i\in\{\pm1\})\right\}_{i=1}^N$, where $N=2$ with $\lVert \vx_1\rVert$ $\ll$ $\langle \vx_1, \vx_2 \rangle < \lVert \vx_2 \rVert$. Without loss of generality, we set $y_1 = y_2 = +1$.
A linear classifier, $f(\vx; \vw) = \vw^\top \vx$, is employed as the model in our analysis. The parameter $\vw$ is initialized at zero and updated by gradient descent. 
\citet{soudry2018implicit} prove that the parameter of linear classifiers diverges to infinity, but directionally converges to the $L_2$ maximum margin separator. This separator is determined by the support vectors closest to the decision boundary. If a valid pruning method encounters this task, then it should retain the point closer to the decision boundary, which is $\vx_1$ in our case, and prune $\vx_2$.
Due to its large norm, $\vx_2$ exhibits higher score values in the early training stage, for both Dyn-Unc and DUAL scores. It takes some time for the model to make prediction on $\vx_1$ with large confidence to increase its uncertainty level as well as prediction mean, and the scores for $\vx_1$ eventually become larger than those for $\vx_2$ as training proceeds.
In \cref{thm:main_shorter_time}, we show through a rigorous analysis that the moment of such a flip in order happens strictly earlier for DUAL than for uncertainty.

\begin{theorem}[Informal]
\label{thm:main_shorter_time}
    Define $\sigma(z) \coloneq (1+e^{-z})^{-1}$. Let $S_{t;J}^{(i)}$ be the standard deviation and $\mu_{t;J}^{(i)}$ be the mean of $\sigma(f(\vx_i; \vw_t))$ within a window from time $t$ to $t+J-1$. Denote $T_v$ and $T_{vm}$ as the first time when $S_{t;J}^{(1)} > S_{t;J}^{(2)}$ and $S_{t;J}^{(1)}(1-\mu_{t;J}^{(i)}) > S_{t;J}^{(2)}(1-\mu_{t;J}^{(2)})$ occurs, respectively. If the learning rate is small enough, then $T_{vm} < T_v$.
\end{theorem}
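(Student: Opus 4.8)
The plan is to analyze the small-learning-rate regime, where gradient descent on the logistic loss $\sum_i\log(1+e^{-\vw^\top\vx_i})$ tracks its gradient flow, and to reduce each window statistic to leading order in $\eta$. Write $m_i(t)\coloneq\vw_t^\top\vx_i$ and $p_i\coloneq\sigma(m_i)$; the update $\vw_{t+1}=\vw_t+\eta\sum_k(1-p_k)\vx_k$ gives, along the flow, $\dot p_i=p_i(1-p_i)\dot m_i$ with $\dot m_i=(1-p_1)\langle\vx_1,\vx_i\rangle+(1-p_2)\langle\vx_2,\vx_i\rangle$. Since $\vw_0=\vzero$, both predictions start at $p_i(0)=\tfrac12$ and increase. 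I abbreviate $a\coloneq\lVert\vx_1\rVert^2$, $b\coloneq\langle\vx_1,\vx_2\rangle$, $c\coloneq\lVert\vx_2\rVert^2$, so the hypotheses read $\sqrt a\ll b<\sqrt c$ (in particular $a<c$).

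Over a window of $J$ steps of size $\eta$, a second-order Taylor expansion yields $V_{t;J}^{(i)}=c_J\eta^2\dot p_i^2(1+O(\eta))$ and $\mu_{t;J}^{(i)}=p_i+O(\eta)$, where $c_J>0$ depends only on $J$ and cancels in every comparison. To leading order the two events in the theorem become $\dot p_1>\dot p_2$ (so $T_v$ is the first crossing $\dot p_1=\dot p_2$) and $\dot p_1^2(1-p_1)>\dot p_2^2(1-p_2)$ (defining $T_{vm}$). This reduction is the whole point: it converts both order-flip times into conditions on the velocities $\dot p_i$ and the levels $p_i$ of the flow.

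I then establish two facts about the flow. \emph{Ordering:} with $\delta\coloneq m_2-m_1$ one has $\delta(0)=0$, and whenever $\delta=0$ (so $p_1=p_2$, call it $p$) a direct computation gives $\dot\delta=(1-p)(b-a)+(1-p)(c-b)=(1-p)(c-a)>0$; hence $\delta$ can never return to zero, so $\delta(t)>0$ and $p_1<p_2$ for all $t>0$. \emph{Existence of the velocity crossing:} as $t\to\infty$ both $p_i\to1$ while $1-p_2\ll1-p_1$ because $\vx_2$ saturates first, so $\dot p_1/\dot p_2\to\infty$ and $T_v$ is finite. Now set $g\coloneq V^{(1)}-V^{(2)}$ and $h\coloneq V^{(1)}(1-\mu^{(1)})-V^{(2)}(1-\mu^{(2)})$. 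At $t=0$, $p_1=p_2$ and $\dot p_1<\dot p_2$ (since $\dot m_2-\dot m_1=\tfrac12(c-a)>0$), so $g(0)<0$ and $h(0)<0$. At $T_v$ the velocities coincide, $\dot p_1=\dot p_2$, whence to leading order $h(T_v)=c_J\eta^2\dot p_1^2\bigl[(1-p_1)-(1-p_2)\bigr]=c_J\eta^2\dot p_1^2(p_2-p_1)>0$ by the ordering. Continuity of $h$ with $h(0)<0<h(T_v)$ forces its first zero into $(0,T_v)$, i.e.\ $T_{vm}<T_v$.

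The main obstacle is making the leading-order reduction rigorous: the $O(\eta)$ relative errors in $V_{t;J}^{(i)}$ and $\mu_{t;J}^{(i)}$ must be bounded uniformly on $[0,T_v]$---especially in the transition region where $p_2$ nears saturation and its curvature is largest---so that they remain dominated by the $\Theta(\eta^2)$ gaps $g$ and $h$ that drive the comparison. Shrinking $\eta$ makes the remainders negligible and, since the continuous crossing times differ by a $\Theta(1)$ amount whereas one discrete step advances continuous time only by $\eta$, guarantees that the strict ordering survives discretization as $T_{vm}<T_v$. By comparison, the ordering $p_1<p_2$, the sign of $\dot m_2-\dot m_1$, and the saturation asymptotics underlying the existence of $T_v$ are elementary once the coupled dynamics of $(m_1,m_2)$ are written out.
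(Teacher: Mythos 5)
Your proposal is correct in its essentials but takes a genuinely different route from the paper. The paper works entirely in discrete time with the exponential loss $\ell(z)=e^{-z}$: it proves that the one-step increment ratio $\gamma_V(t) = \frac{\sigma(y_{t+1}^{(1)})-\sigma(y_t^{(1)})}{\sigma(y_{t+1}^{(2)})-\sigma(y_t^{(2)})}$ and the mean-factor ratio $\gamma_M(t)=\frac{1-\sigma(y_t^{(1)})}{1-\sigma(y_t^{(2)})}$ are both monotonically increasing to infinity, establishes the quantitative comparison $\gamma_V(t)\le R e^{y_1^{(2)}}\gamma_M(t)$ with $Re^{y_1^{(2)}}=C^{-3}<1$, and concludes that the product condition already flips when $\gamma_V$ reaches only $C^{-1}<1$ --- strictly before $\gamma_V$ reaches $1$, provided $\eta$ is small enough that $\gamma_V$ cannot jump over the interval $[C^{-1},1)$ in a single step. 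You instead pass to the gradient-flow limit, reduce the window statistics to leading order ($V^{(i)}\approx c_J\eta^2\dot p_i^2$, $\mu^{(i)}\approx p_i$), and run an intermediate-value argument on $h=V^{(1)}(1-\mu^{(1)})-V^{(2)}(1-\mu^{(2)})$ using $h(0)<0$ and $h(T_v)=c_J\eta^2\dot p_1^2(p_2-p_1)>0$. Both proofs hinge on the same fact --- at the variance-crossing time the prediction ordering $p_1<p_2$ is strict, so the $(1-\mu)$ factor tips the product strictly earlier --- but your IVT formulation is more elementary and loss-agnostic, while the paper's discrete ratio bounds avoid any flow approximation and give an explicit constant quantifying how much earlier the flip occurs.

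Two caveats. First, the paper's formal setting uses the exponential loss, so your update $\vw_{t+1}=\vw_t+\eta\sum_k(1-p_k)\vx_k$ is not the paper's; your argument survives the substitution (the barrier computation at $\delta=0$ becomes $\dot\delta=e^{-m}(c-a)>0$ and the saturation asymptotics are unchanged), but you should state the dynamics you are actually analyzing. Second, the burden you correctly identify --- uniform control of the $O(\eta)$ remainders in $V_{t;J}^{(i)}$ and $\mu_{t;J}^{(i)}$ over $[0,T_v]$, and the transfer of the $\Theta(1)$ continuous-time gap to the discrete indices --- is precisely where the paper spends its effort (via \cref{lem:small_step_MVT} and the monotonicity of $\gamma_V$), and it is the one step you assert rather than prove. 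As written, your argument is a correct and complete proof sketch, not yet a proof; discharging that uniform bound is required before the IVT conclusion is rigorous.
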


Technical details about \cref{thm:main_shorter_time} are provided in \Cref{sec:DUAL_theorem}, together with an empirical verification of the time-efficiency of DUAL pruning over Dyn-Unc.

Empirically, as shown in Figure~\ref{fig:moon_plot_dual}, the DUAL score targets data points in the bottom-right region during the early training phase, which eventually evolve to the middle-rightmost part by the end of training. This verifies that DUAL pruning identifies the most uncertain region faster than Dyn-Unc \emph{both in theory and practice}.
The differences arise from the additional consideration of example difficulty in our method. We believe this adjustment leads to improved generalization performance compared to Dyn-Unc, as verified through various experiments in later sections.

\begin{figure}[H]
    \centering
    \includegraphics[width=\linewidth]{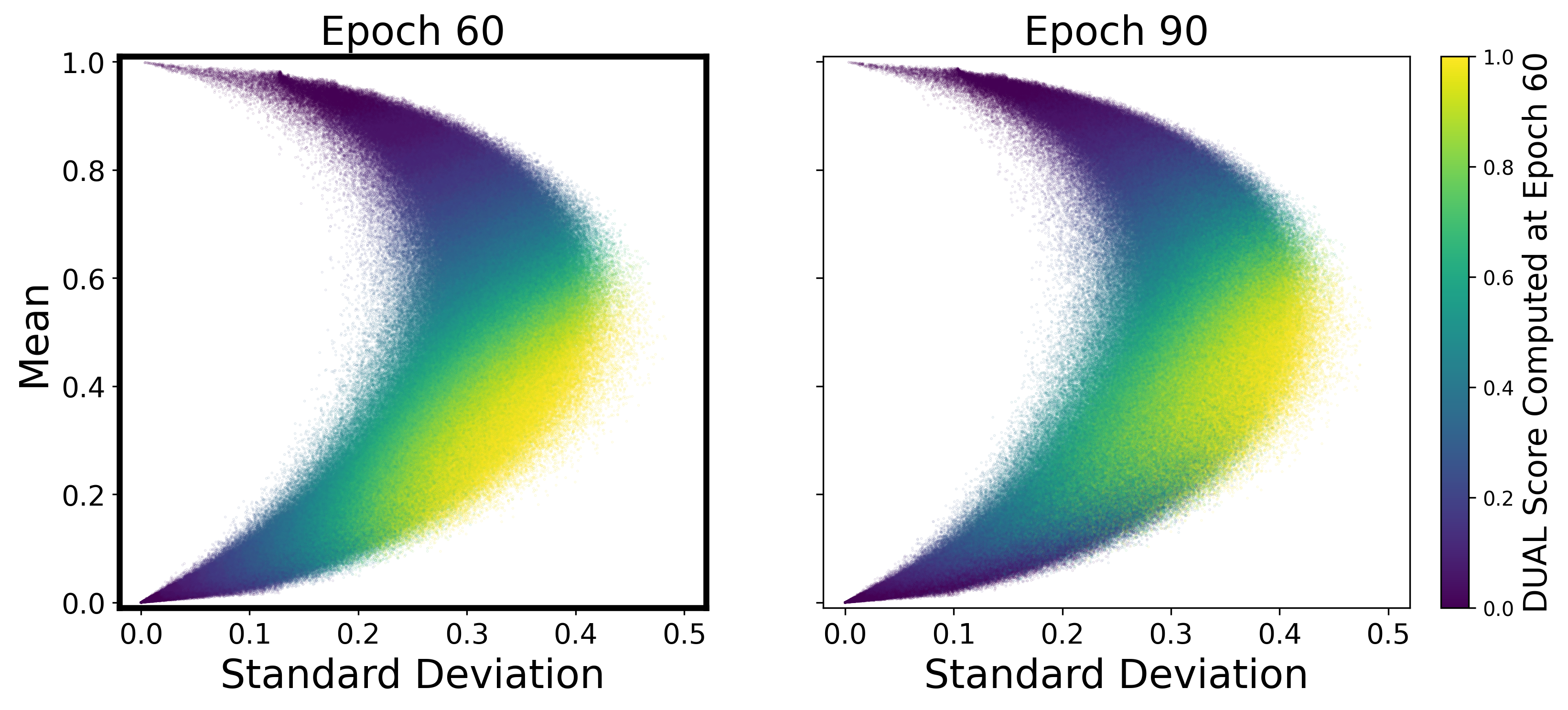}
    \caption{Our DUAL score targets similar uncertain samples in the early epoch of 60 (highlighted in bold). Selected samples are finally located in the most uncertain region when the whole training processes are considered.}
    \label{fig:moon_plot_dual}
\end{figure}

However, score-based approaches including our method, suffer from accuracy drop at the high pruning ratio due to biased representations. To address this, we propose an additional sampling strategy that adaptively selects samples regarding the coreset size.

%%%%%%%%%%CIFAR TABLE%%%%%%%%%%
\begin{table*}[ht]
\sisetup{table-format = 2.2, round-mode = places, round-precision=2, round-pad = false}
\caption{Comparison of test accuracy between the DUAL score method and existing coreset selection techniques using ResNet-18 on CIFAR-10 and CIFAR-100 datasets. Training the model on the full dataset achieves an average test accuracy of 95.30\% on CIFAR-10 and 78.91\% on CIFAR-100. The best result in each pruning ratio is highlighted in bold.}
\vspace{3pt}
\setlength{\tabcolsep}{3.1pt}
\centering
\resizebox{\linewidth}{!}{
\begin{tabular}{l c@{}cc@{}cc@{}cc@{}cc@{}c | c@{}cc@{}cc@{}cc@{}cc@{}c}
    \toprule
    \textbf{Dataset ($\rightarrow$)} & \multicolumn{10}{c}{\textbf{CIFAR10}} & \multicolumn{10}{c}{\textbf{CIFAR100}}\\
    \cmidrule(lr){2-21}
    
    \textbf{Pruning Rate ($\rightarrow$)} & \multicolumn{2}{c}{\textbf{30\%}} & \multicolumn{2}{c}{\textbf{50\%}} & \multicolumn{2}{c}{\textbf{70\%}} & \multicolumn{2}{c}{\textbf{80\%}} & \multicolumn{2}{c}{\textbf{90\%}} & \multicolumn{2}{c}{\textbf{30\%}} & \multicolumn{2}{c}{\textbf{50\%}} & \multicolumn{2}{c}{\textbf{70\%}} & \multicolumn{2}{c}{\textbf{80\%}} & \multicolumn{2}{c}{\textbf{90\%}} \\
    \midrule
    
    \textbf{Random} & 94.39 & {\scriptsize \num{ +-0.2275}} & 93.20 & {\scriptsize \num{ +-0.1188}} & 90.47 & {\scriptsize \num{ +-0.1678}} & 88.28 & {\scriptsize \num{ +-0.1731}} & 83.74 & {\scriptsize \num{ +-0.2051}} & 75.15 & {\scriptsize \num{ +-0.2825}} & 71.68 & {\scriptsize \num{ +-0.3065}} & 64.86 & {\scriptsize \num{ +-0.3939}} & 59.23 & {\scriptsize \num{ +-0.6189}} & 45.09 & {\scriptsize \num{ +-1.2610}} \\
    
    \textbf{Entropy} & 93.48  & {\scriptsize \num{ +-0.0566 }} & 92.47 & {\scriptsize \num{ +-0.1707}} & 89.54 & {\scriptsize \num{ +-0.1753}} & 88.53 & {\scriptsize \num{ +-0.1864}} & 82.57 & {\scriptsize \num{ +-0.3645}} & 75.20 & {\scriptsize \num{ +-0.2486 }} & 70.90 & {\scriptsize \num{ +-0.3464}} & 61.70 & {\scriptsize \num{ +-0.4669}} & 56.24 & {\scriptsize \num{ +-0.5082}} & 42.25 & {\scriptsize \num{ +-0.3915}} \\
    
    \textbf{Forgetting} & 95.48  & {\scriptsize \num{ +-0.1412 }} & 94.94 & {\scriptsize \num{ +-0.2116}} & 89.55 & {\scriptsize \num{ +-0.6456}} & 75.47 & {\scriptsize \num{ +-1.2726}} & 46.64 & {\scriptsize \num{ +-1.9039}} & 77.52 & {\scriptsize \num{ +-0.2585 }} & 70.93 & {\scriptsize \num{ +-0.3700}} & 49.66 & {\scriptsize \num{ +-0.1962}} & 39.09 & {\scriptsize \num{ +-0.4065}} & 26.87 & {\scriptsize \num{ +-0.7318}} \\
    
    \textbf{EL2N} & 95.44  & {\scriptsize \num{ +-0.0628 }} & 95.19 & {\scriptsize \num{ +-0.1134}} & 91.62 & {\scriptsize \num{ +-0.1397}} & 74.70 & {\scriptsize \num{ +-0.4523}} & 38.74 & {\scriptsize \num{ +-0.7506}} & 77.13 & {\scriptsize \num{ +-0.2348}} & 68.98 & {\scriptsize \num{ +-0.3539}} & 34.59 & {\scriptsize \num{ +-0.4824}} & 19.52 & {\scriptsize \num{ +-0.7925}} & 8.89 & {\scriptsize \num{ +-0.2774}} \\
    
    \textbf{AUM} & 90.62  & {\scriptsize \num{ +-0.0921 }} & 87.26 & {\scriptsize \num{ +-0.1128}} & 81.28 & {\scriptsize \num{ +-0.2564}} & 76.58 & {\scriptsize \num{ +-0.3458}} & 67.88 & {\scriptsize \num{ +-0.5275}} & 74.34 & {\scriptsize \num{ +-0.1424 }} & 69.57 & {\scriptsize \num{ +-0.2100}} & 61.12 & {\scriptsize \num{ +-0.2004}} & 55.80 & {\scriptsize \num{ +-0.3256}} & 45.00 & {\scriptsize \num{ +-0.3694}} \\
    
    \textbf{Moderate} & 94.26  & {\scriptsize \num{ +-0.0904 }} & 92.79 & {\scriptsize \num{ +-0.0856}} & 90.45 & {\scriptsize \num{ +-0.2110}} & 88.90 & {\scriptsize \num{ +-0.1684}} & 85.52 & {\scriptsize \num{ +-0.2906}} & 75.20  & {\scriptsize \num{ +-0.2486 }} & 70.90 & {\scriptsize \num{ +-0.3464}} & 61.70 & {\scriptsize \num{ +-0.4699}} & 56.24 & {\scriptsize \num{ +-0.5082}} & 42.25 & {\scriptsize \num{ +-0.3915}} \\
    
    \textbf{Dyn-Unc} & 95.49  & {\scriptsize \num{ +-0.2061 }} & \textbf{95.35} & {\scriptsize \num{ +-0.1205}} & 91.78 & {\scriptsize \num{ +-0.6516}} & 83.32 & {\scriptsize \num{ +-0.9391}} & 59.67 & {\scriptsize \num{ +-1.7929}} & 77.67  & {\scriptsize \num{ +-0.1381}} & 74.23 & {\scriptsize \num{ +-0.2214}} & 64.30 & {\scriptsize \num{ +-0.1333}} & 55.01 & {\scriptsize \num{ +-0.5465}} & 34.57 & {\scriptsize \num{ +-0.6920}} \\
    
    \textbf{TDDS} & 94.42  & {\scriptsize \num{ +-0.1252 }} & 93.11 & {\scriptsize \num{ +-0.1377}} & 91.02 & {\scriptsize \num{ +-0.1908}} & 88.25 & {\scriptsize \num{ +-0.2385}} & 82.49 & {\scriptsize \num{ +-0.2799}} & 75.02  & {\scriptsize \num{ +-0.3682}} & 71.80 & {\scriptsize \num{ +-0.3323}} & 64.61 & {\scriptsize \num{ +-0.2431}} & 59.88 & {\scriptsize \num{ +-0.2110}} & 47.93 & {\scriptsize \num{ +-0.2147}} \\
    
    \textbf{CCS} & 95.31  & {\scriptsize \num{ +-0.2238}} & 95.06 & {\scriptsize \num{ +-0.1547}} & 92.68 & {\scriptsize \num{ +-0.1704}} & 91.25 & {\scriptsize \num{ +-0.2073}} & 85.92 & {\scriptsize \num{ +-0.3901}} & 77.15 & {\scriptsize \num{ +-0.2816}} & 73.83 & {\scriptsize \num{ +-0.2073}} & 68.65 & {\scriptsize \num{ +-0.3130}} & 64.06 & {\scriptsize \num{ +-0.2084}} & 54.23 & {\scriptsize \num{ +-0.4813}} \\
    
    \textbf{D2} & 94.13  & {\scriptsize \num{ +-0.2033}} & 93.26 & {\scriptsize \num{ +-0.1623}} & 92.34 & {\scriptsize \num{ +-0.1786}} & 90.38 & {\scriptsize \num{ +-0.3376}} & 86.11 & {\scriptsize \num{ +-0.2072}} & 76.47  & {\scriptsize \num{ +-0.2934}} & 73.88 & {\scriptsize \num{ +-0.2780}} & 62.99 & {\scriptsize \num{ +-0.2775}} & 61.48 & {\scriptsize \num{ +-0.3361}} & 50.14 & {\scriptsize \num{ +-0.8951}} \\
    
    \midrule
    
    \textbf{DUAL} & 95.25  & {\scriptsize \num{ +-0.17 }} & 94.95 & {\scriptsize \num{ +-0.22 }} & 91.75 & {\scriptsize \num{ +-0.98 }} & 82.02 & {\scriptsize \num{ +-1.85 }} & 54.95 & {\scriptsize \num{ +-0.42 }} & 77.43 & {\scriptsize \num{ +-0.18 }} & 74.62 & {\scriptsize \num{ +-0.47 }} & 66.41 & {\scriptsize \num{ +-0.52 }} & 56.57 & {\scriptsize \num{ +-0.57 }} & 34.38 & {\scriptsize \num{ +-1.39 }} \\
    
    \textbf{DUAL+$\beta$ sampling} & \textbf{95.51}  & {\scriptsize \num{ +-0.0634}} & 95.23 & {\scriptsize \num{ +-0.0796}} & \textbf{93.04} & {\scriptsize \num{ +-0.4282}} & \textbf{91.42} & {\scriptsize \num{ +-0.352}} & \textbf{87.09} & {\scriptsize \num{ +-0.3599}} & \textbf{77.86}  & {\scriptsize \num{ +-0.1186}} & \textbf{74.66} & {\scriptsize \num{ +-0.1173}} & \textbf{69.25} & {\scriptsize \num{ +-0.2156}} & \textbf{64.76} & {\scriptsize \num{ +-0.2272}} & \textbf{54.54} & {\scriptsize \num{ +-0.0884}} \\
    
    \bottomrule
\end{tabular}
}
\label{tbl:main_cifar}
\vspace{-5pt}
\end{table*}

%%%%%%%%%%%%%%%%%%%%%%%%%%%%%%%

\subsection{Pruning Ratio-Adaptive Sampling}
\label{sec:betapruning}
Since the distribution of difficulty scores is dense in high-score samples, selecting only the highest-score samples may result in a biased model \cite{zhou2023probabilisticbilevelcoresetselection, maharana2023d2, choi2024bws}.
To address this, we design a sampling method to determine the subset $\gS\subset\gD$, rather than simply pruning the samples with the lowest scores. We introduce a Beta distribution that varies with the pruning ratio. The primary objective of this method is to ensure that the selected subsets gradually include more easy samples into the coreset as the pruning ratio increases.

However, the concepts of ``easy'' and ``hard'' cannot be distinguished solely based on uncertainty or DUAL score. To address this, we use the \emph{prediction mean} again for sampling. 
We utilize the Beta probability density function (PDF) to define the selection probability of each sample.
First, we assign each data point a corresponding PDF value based on its prediction mean and weight this probability using the DUAL score. The weighted probability with the DUAL score is then normalized to the range $[0, 1]$, then used as the sampling probability. We clarify that sampling probability is for selecting samples, \emph{not for pruning}. Therefore, for each pruning ratio 
$r$, we randomly select $(1-r)\cdot n$ samples without replacement, where sampling probabilities are given according to the prediction mean and DUAL score as described. The detailed algorithm for our proposed pruning method is provided in Algorithm~\ref{alg:DUAL}, Appendix~\ref{Appendix_explanation_of_dual_pruning}.

We design the Beta PDF function to assign a sampling probability concerning a prediction mean as follows:
\begin{align}
\label{eq:alpha_beta}
\begin{split}
    \beta_r &= C \cdot (1-\mu_\gD)  \left(1-r^{c_\gD}\right)\\
    \alpha_r &= C-\beta_r, \\
\end{split}
\end{align}
where $C > 0$ is a fixed constant, and the $\mu_D$ stands for the prediction mean of the highest score sample.
Recalling that the mean of Beta distribution is $\frac{\alpha_r}{\alpha_r + \beta_r}$, the above choice makes the mean of Beta distribution moves progressively with $r$, starting from $\mu_\gD$ ($r \simeq 0$, small pruning ratio) to one. In other words, with growing $r$, this Beta distribution becomes skewed towards the easier region ($r \rightarrow 1$, large pruning ratio), which in turn gives more weight to easy samples. 

The tendency of evolving should be different with datasets, thus a hyperparameter $c_\gD \geq 1$ is used to control the rate of evolution of the Beta distribution. Specifically, the choice of \( c_\gD \) depends on the complexity of the initial dataset. For smaller and more complex datasets, setting \( c_\gD \) to a smaller value retains more easy samples. For larger and simpler datasets, setting \( c_\gD \) to a larger value allows more uncertain samples to be selected.
(For your intuitive understanding, please refer to  \cref{fig:beta_pdf} and \cref{fig:cifar_coreset_visualization_beta} in the \cref{Appendix_explanation_of_dual_pruning}.) This is also aligned with the previous findings from \citet{sorscher2022beyond}; if the initial dataset is small, the coreset is more effective when it contains easier samples, while for a relatively large initial dataset, including harder samples can improve generalization performance.
More descriptions for our Beta sampling are provided in Appendix~\ref{Appendix_explanation_of_dual_pruning}.

\begin{remark}
BOSS~\citep{acharyabalancing} also uses the Beta distribution to sample easier data points during pruning, similar to our approach. However, a key distinction lies in how we define the Beta distribution's parameters, \(\alpha_r\) and \(\beta_r\). While BOSS adjusts these parameters to make the mode of the Beta distribution's PDF scale linearly with the pruning ratio $r$, we employ a non-linear combination. This non-linear approach has the crucial advantage of maintaining an almost stationary PDF at low pruning ratios. This stability is especially beneficial when the dataset becomes easier where there is no need to focus on easy examples. Furthermore, unlike previous methods, we define PDF values based on the prediction mean, rather than any difficulty score, which is another significant difference.
\end{remark}

\begin{remark}
SIMS~\citep{grosz2024data} also proposes a ratio-adaptive sampling strategy, applying importance weights over the original score distribution. However, it assumes a normal distribution of scores, which does not hold in practice (see Figure 2 of \citep{grosz2024data}). In contrast, our sampling method, by not relying on any specific score distribution, remains robust across diverse datasets.
\end{remark}

\vspace{-4pt}
\section{Experiments}
\vspace{-3pt}
\label{sec:experiment}
\subsection{Experimental Settings}
\vspace{-1pt}
We assessed the performance of our proposed method in three key scenarios: image classification, image classification with noisy labels and corrupted images. In addition, we validate cross-architecture generalization on three-layer CNN, VGG-16~\cite{simonyan2015deepconvolutionalnetworkslargescale}, ResNet-18 and ResNet-50~\cite{he2015deepresiduallearningimage}.

\paragraph{Hyperparameters}
For training CIFAR-10 and CIFAR-100, we train ResNet-18 for 200 epochs with a batch size of 128. SGD optimizer with momentum of 0.9 and weight decay of 0.0005 is used. The learning rate is initialized as 0.1 and decays with the cosine annealing scheduler. As \citet{zhang2024spanning} show that smaller batch size boosts performance at high pruning rates, we also halved the batch size for 80\% pruning, and for 90\% we reduced it to one-fourth. For ImageNet-1k, ResNet-34 is trained for 90 epochs with a batch size of 256 across all pruning ratios. An SGD optimizer with a momentum of 0.9, a weight decay of 0.0001, and an initial learning rate of 0.1 is used, combined with a cosine annealing scheduler. 

\paragraph{Baselines} The baselines considered in this study are listed as follows:\footnote{Infomax~\citep{tan2025data} was excluded as it employs different base hyperparameters in the original paper compared to other baselines and does not provide publicly available code. See Appendix~\ref{Appendix_Technical_Details_of_Baselines} for more discussion.}
(1) Random; (2) Entropy~\citep{coleman2020selectionproxyefficientdata}; (3) Forgetting~\citep{toneva2018empirical}; (4) EL2N~\citep{paul2021deep}; (5) AUM~\citep{pleiss2020identifyingmislabeleddatausing}; (6) Moderate~\citep{xia2022moderate}; (7) Dyn-Unc~\citep{he2024large}; (8) TDDS~\citep{zhang2024spanning}; (9) CCS~\citep{zheng2022coverage}; and (10) $\mathbb{D}^2$~\citep{maharana2023d2}.
To ensure a fair comparison, all methods were trained using a common set of base hyperparameters (e.g., learning rate, batch size, number of epochs), while any method-specific hyperparameters were set to the optimal values reported for each score metric in their respective original works. Technical details are provided in the \cref{Appendix_Technical_Details_of_Baselines}.

\subsection{Image Classification Benchmarks}

\cref{tbl:main_cifar} presents the test accuracy for image classification results on CIFAR-10 and CIFAR-100. Our pruning method consistently outperforms other baselines, particularly when combined with Beta sampling. While the DUAL score exhibits competitive performance in lower pruning ratios, its coreset accuracy degrades with more aggressive pruning. Our Beta sampling effectively mitigates this performance drop here.

Notably, the DUAL score requires training a single model for \emph{only 30 epochs} for computation, significantly reducing the computational cost. In contrast, the second-best methods, Dyn-Unc and CCS, rely on scores computed over 200 epochs—a full training cycle on the original dataset---which makes them significantly less efficient. 
Even when accounting for subset selection, score computation, and subset training, the total time remains lower than training the full dataset once, as illustrated in Figure~\ref{fig:total_time_consumed}. 
Specifically, on CIFAR-10, our method achieves lossless pruning up to a 50\% pruning ratio while saving 35.5\% of total training time.

We also evaluate our pruning method on the large-scale dataset, ImageNet-1k. The DUAL score is computed during training, specifically at epoch 60, which is 33\% earlier than the original train epoch used to compute scores for other baseline methods. As shown in \cref{tab:imagenet_results}, Dyn-Unc performs worse than random pruning across all pruning ratios, and we attribute this undesirable performance to its limited total training epochs (only 90), which is insufficient for Dyn-Unc to fully capture the training dynamics of each sample. In contrast, our DUAL score, combined with Beta sampling, outperforms all competitors while requiring the least computational cost. By considering both training dynamics and the difficulty of examples, DUAL can effectively identify uncertain samples early in the training process, even with limited training dynamics than full training. Remarkably, for 90\% pruning on ImageNet-1K, it maintains a test accuracy of 60.0\%, surpassing the previous state-of-the-art (SOTA) by a large margin.

\begin{table}[t]
\caption{\label{tab:imagenet_results} Comparison of test accuracy of DUAL score with existing coreset selection methods using ResNet34 for ImageNet-1K. The model trained with the full dataset achieves 73.1\% test accuracy. The best result in each pruning raio is highlighted in bold.}
\setlength{\tabcolsep}{3.1pt}
\centering
\begin{tabular}{lccccc}
    \toprule
    \textbf{Pruning Rate ($\rightarrow$)} & \textbf{30\%} & \textbf{50\%} & \textbf{70\%} & \textbf{80\%} & \textbf{90\%} \\
    \midrule
    \textbf{Random} & 72.2 & 70.3 & 66.7 & 62.5 & 52.3 \\
    \textbf{Entropy} & 72.3 & 70.8 & 64.0 & 55.8 & 39.0 \\
    \textbf{Forgetting} & 72.6 & 70.9 & 66.5 & 62.9 & 52.3 \\
    \textbf{EL2N} & 72.2 & 67.2 & 48.8 & 31.2 & 12.9 \\
    \textbf{AUM} & 72.5 & 66.6 & 40.4 & 21.1 & 9.9 \\
    \textbf{Moderate} & 72.0 & 70.3 & 65.9 & 61.3 & 52.1 \\
    \textbf{Dyn-Unc} & 70.9 & 68.3 & 63.5 & 59.1 & 49.0 \\
    \textbf{TDDS} & 70.5 & 66.8 & 59.4 & 54.4 & 46.0 \\
    \textbf{CCS} & 72.3 & 70.5 & 67.8 & 64.5 & 57.3 \\
    \textbf{D2} & 72.9 & 71.8 & 68.1 & 65.9 & 55.6 \\
    \midrule
    \textbf{DUAL} & 72.8 & 71.5 & 68.6 & 64.7 & 53.1 \\
    \textbf{DUAL+$\beta$ sampling} & \textbf{73.3} & \textbf{72.3} & \textbf{69.4} & \textbf{66.5} & \textbf{60.0} \\
    \bottomrule
\end{tabular}
\end{table}

\begin{figure}[t]
    \centering
    \includegraphics[width=0.9\linewidth]{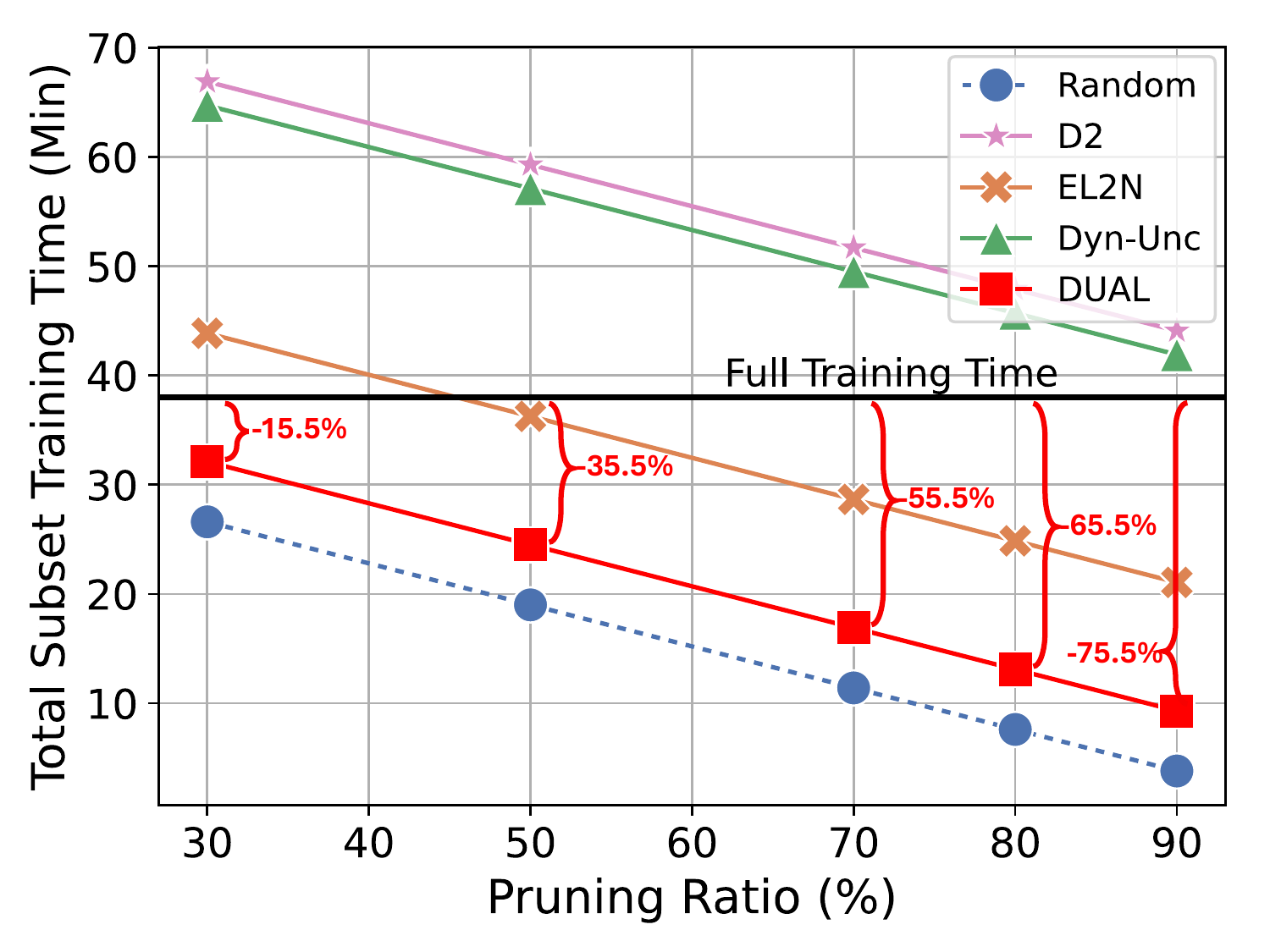}
    \caption{Comparison in total time spent (full dataset training, score estimation, and subset training) on CIFAR datasets. While other methods remain ineffective as they require more than full training, our method achieves a 15.5\% time reduction with only 30\% pruning, approaching the efficiency of random pruning. }
    \label{fig:total_time_consumed}
\end{figure}

\subsection{Experiments under More Realistic Scenarios}
\subsubsection{label noise and image corruption}
\begin{figure*}[t]
    \centering
    \begin{subfigure}{0.33\textwidth}
        \centering
        \includegraphics[width=\textwidth]{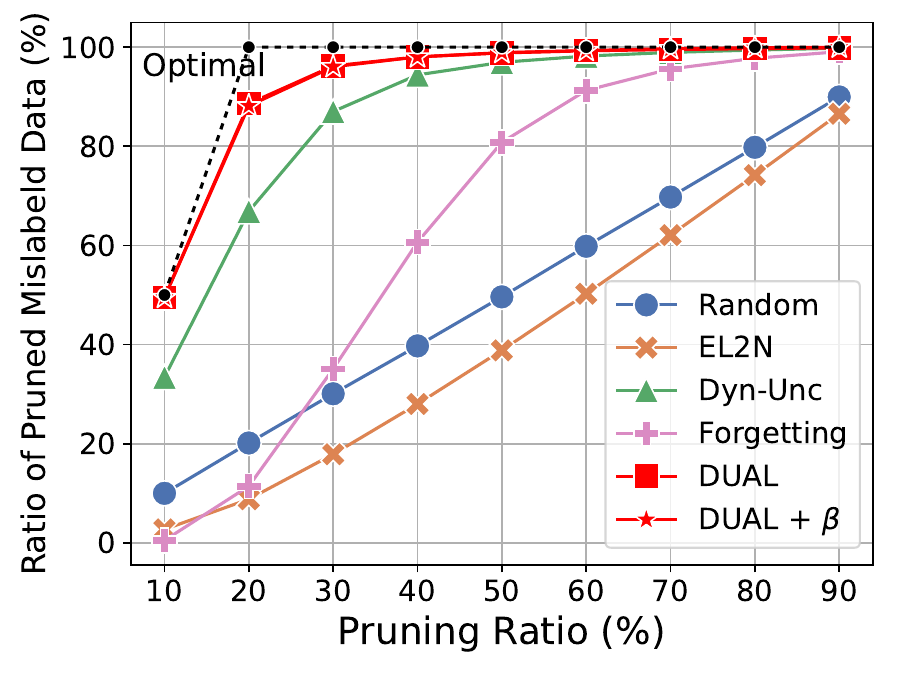}
        \caption{\label{fig:label_noise_20_ratio}Pruned mislabeled data ratio}
    \end{subfigure}
    \hfill
    \begin{subfigure}{0.33\textwidth}
        \centering
        \includegraphics[width=\textwidth]{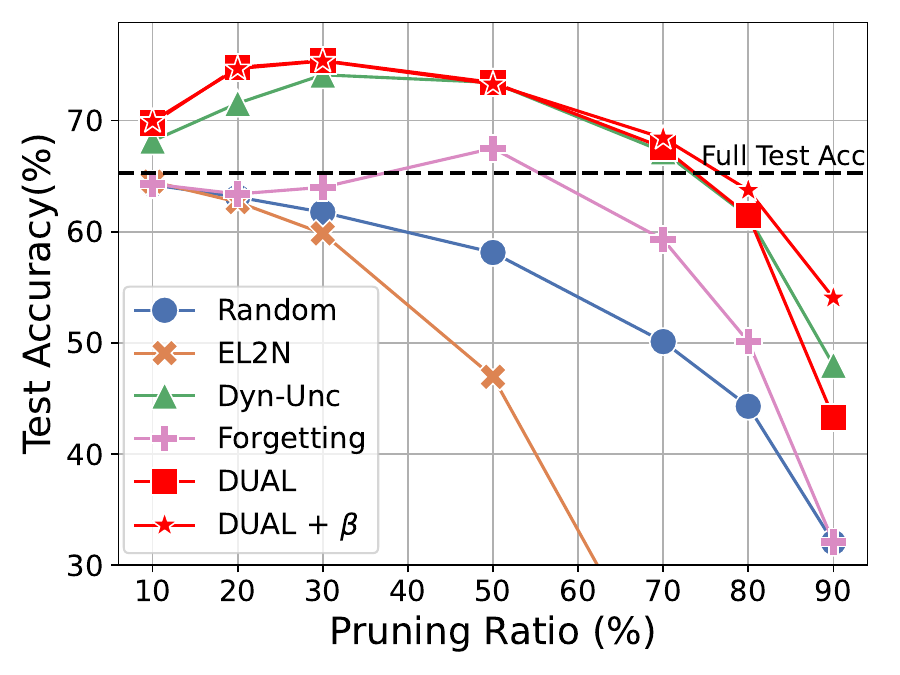}
        \caption{\label{fig:label_noise_20_testacc}Test accuracy under label noise}
    \end{subfigure}
    \hfill
    \begin{subfigure}{0.33\textwidth}
        \centering
        \includegraphics[width=\textwidth]{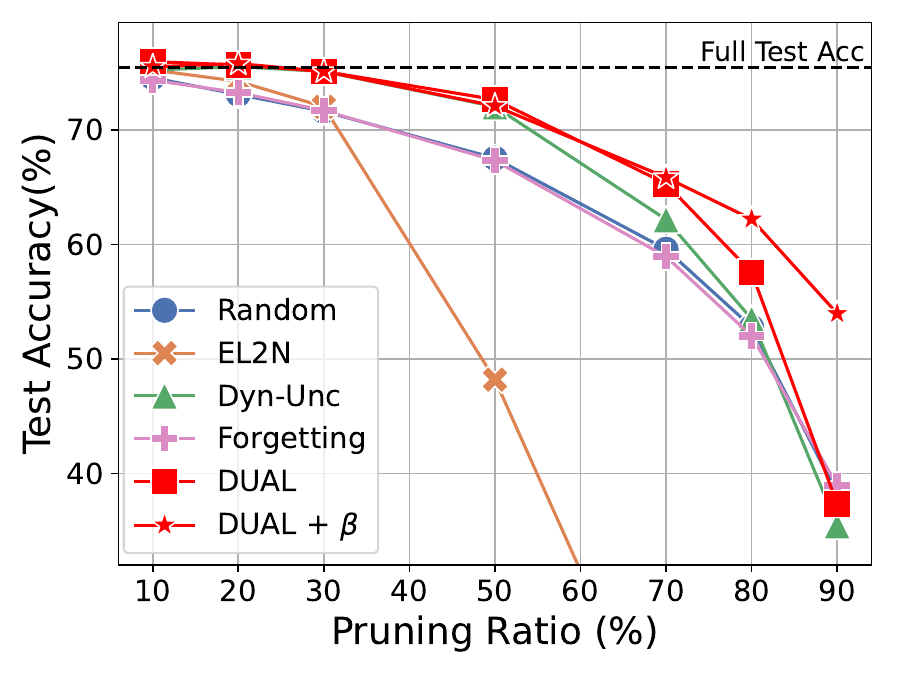}
        \caption{\label{fig:imagecorruption_20_testacc}Test accuracy under image noise}
    \end{subfigure}
    \caption{\label{fig:labelnoise_main}The left figure shows the ratio of pruned mislabeled data under 20\% label noise on CIFAR-100 trained with ResNet-18. When label noise is 20\%, the optimal value (black dashed line) corresponds to pruning 100\% of mislabeled data at a 20\% pruning ratio. The middle and right figures depict test accuracy under 20\% label noise and 20\% image corruption, respectively. Our method effectively prunes mislabeled data near the optimal value while maintaining strong generalization performance. Results are averaged over five random seeds.}
\end{figure*}
Data affected by label noise or image corruption are difficult and unnecessary samples that hinder model learning and degrade generalization performance. Therefore, filtering out these samples through data pruning is crucial. Most data pruning methods, however, either focus solely on selecting difficult samples based on example difficulty~\citep{paul2021deep, pleiss2020identifyingmislabeleddatausing, coleman2020selectionproxyefficientdata} or prioritize dataset diversity~\citep{zheng2022coverage, xia2022moderate}, making them unsuitable for effectively pruning such noisy and corrupted samples.

In contrast, methods that select uncertain samples while considering training dynamics, such as Forgetting~\citep{toneva2018empirical} and Dyn-Unc~\citep{he2024large}, demonstrate robustness by pruning both the hardest and easiest samples, ultimately improving generalization performance, as illustrated in Figure~\ref{fig:label_noise_20_ratio}. However, since noisy samples tend to be memorized after useful samples are learned~\citep{arpit2017closer, jiang2020characterizing}, there is a possibility that those noisy samples may still be treated as uncertain in the later stages of training and thus be included in the selected subset.

The DUAL score aims to identify high-uncertainty samples early in training by considering both training dynamics and example difficulty. Noisy data, typically under-learned compared to other challenging samples during this phase, exhibit lower uncertainty (Figure~\ref{fig:label_noise_visualization}, Appendix~\ref{Appendix_labelnoise_experiments}). Consequently, our method effectively prunes these noisy samples.

To verify this, we evaluate our method by introducing a specific proportion of symmetric label noise~\citep{patrini2017making, xia2020robust, li2022selective} and applying five different types of image corruptions~\citep{wang2018iterative, hendrycks2019benchmarking, xia2021instance}. We use CIFAR-100 with ResNet-18 and Tiny-ImageNet with ResNet-34 for these experiments. On CIFAR-100, we test label noise and image corruption ratios of 20\%, 30\%, and 40\% using a model trained for 30 epochs. For Tiny-ImageNet, we use a 20\% ratio of label noise and image corruption. We prune the label-noise-added dataset using a model trained for 50 epochs and the image-corrupted dataset with a model trained for 30 epochs using DUAL pruning---both significantly lower than the 200 epochs used by other methods. For detailed experimental settings, please refer to Appendix~\ref{Appendix_Technical_Details_of_Ours}.

As shown in Figure~\ref{fig:labelnoise_main}, the left plot demonstrates that DUAL pruning effectively removes mislabeled data at a ratio close to the optimal. Notably, when the pruning ratio is 10\%, nearly \emph{all pruned samples are mislabeled data}.
Consequently, as observed in Figure~\ref{fig:label_noise_20_testacc}, DUAL pruning leads to improved test accuracy compared to training on the full dataset, even up to a pruning ratio of 70\%. At lower pruning ratios, performance improves as mislabeled data are effectively removed, highlighting the advantage of our approach in handling label noise.
Similarly, for image corruption, our method prunes more corrupted data across all corruption rates compared to other methods, as shown in Figure~\ref{fig:imagecorruption_203040_ratio},~\ref{fig:imagecorruption_all} in Appendix~\ref{Appendix_imagecorruption_experiments}. As a result, this leads to higher test accuracy, as demonstrated in Figure~\ref{fig:imagecorruption_20_testacc}. 

Detailed results, including exact numerical values for different corruption rates and Tiny-ImageNet experiments, can be found in Appendix~\ref{Appendix_labelnoise_experiments} and \ref{Appendix_imagecorruption_experiments}.  Across all experiments, DUAL pruning consistently shows \emph{strong noise robustness} and outperforms other methods by a substantial margin.

\subsubsection{Cross-architecture generalization}
Next, we evaluate the ability to transfer scores across various model architectures. Especially, if we can get high quality example scores for pruning by using a simpler architecture than one for the training, our DUAL pruning would become even more efficient in time and computational cost. Therefore, we focus on the cross-architecture generalization from relatively small networks to larger ones with three-layer CNN, VGG-16, ResNet-18, and ResNet-50. Results are summarized in \cref{tab:04-cross-arch-r18-r50}.

Competitors are selected from each categorized group of the pruning approach: EL2N from difficulty-based, Dyn-Unc from uncertainty-based, and CCS from the geometry-based group. Standard deviations are omitted here due to space limit; please refer to Appendix~\ref{Appendix_cross_architecture} for details.

\begin{table}[ht]
\caption{Cross-architecture generalization performance on CIFAR-100 from ResNet-18 to ResNet-50. `(R50)' marker denotes the score is computed on ResNet-50, serving as a baseline. DUAL + $\beta$ means our Beta sampling with DUAL scores. We report an average of five runs. Test accuracy on full dataset with ResNet-50 is 80.1\%.}

    \centering
    \begin{tabular}{lcccc}
    \toprule
    \multicolumn{1}{c}{} & \multicolumn{4}{c}{ResNet-18 $\rightarrow$ ResNet-50} \\
    \hline
    Pruning Rate ($\rightarrow$) & 30\% & 50\% & 70\% & 90\% \\
    \hline
    Random & 77.17 & 73.74 & 66.66 & 40.48 \\
    EL2N  & 79.46 & 74.85 & 58.75 & 16.19 \\
    Dyn-Unc  & \textbf{79.90} & 75.78 & 61.75 & 25.08 \\
    CCS  & 77.24 & 73.81 & 66.66 & 40.31 \\
    \hline 
    DUAL   & 79.48 & \textbf{76.47} & \textbf{68.56} & 29.82 \\
    DUAL + $\beta$  & 79.53 & 75.08 & 67.54 & \textbf{50.34} \\
    \hline
    \hline
    DUAL (R50)  & 79.60 & 76.64 & 68.60 & 29.84 \\
    DUAL (R50) + $\beta$ & 79.63 & 76.49 & 70.37 & 50.27 \\
    \bottomrule
    \end{tabular}
    \label{tab:04-cross-arch-r18-r50}
\end{table}

Specifically, when pruning 90\% of the original dataset, we find that other methods all fail, showing worse test accuracies than random pruning. However, our proposed pruning method consistently shows the powerful ability to generalize across the various network architectures, outperforming or being on par with other computationally expensive baselines. More results are provided in Appendix~\ref{Appendix_cross_architecture}.

\subsection{Ablation Studies}
\paragraph{Hyperparameter Analysis}
In this section, we investigate the robustness of our hyperparameters, $T$, $J$, and $c_\gD$. We fix $J$ across all experiments, as it has minimal impact on selection, indicating its robustness (Figure~\ref{fig:J_varying}, Appendix~\ref{Appendix_Experiments}). In Figure~\ref{fig:TC_varying}, we assess the robustness of $T$ by varying it from 20 to 200 on CIFAR-100. We find that while $T$ is highly robust in early epochs, increasing it eventually degrades generalization. This is expected, as larger $T$ overemphasizes difficult samples due to our difficulty-aware selection. Thus, pruning in earlier epochs (30 to 50) proves more effective and robust. For $c_\gD$, we vary it from 3 to 6 and observe consistent performance, indicating robustness to its choice as well.

\begin{figure}[h]
    \centering
    \includegraphics[width=0.99\linewidth]{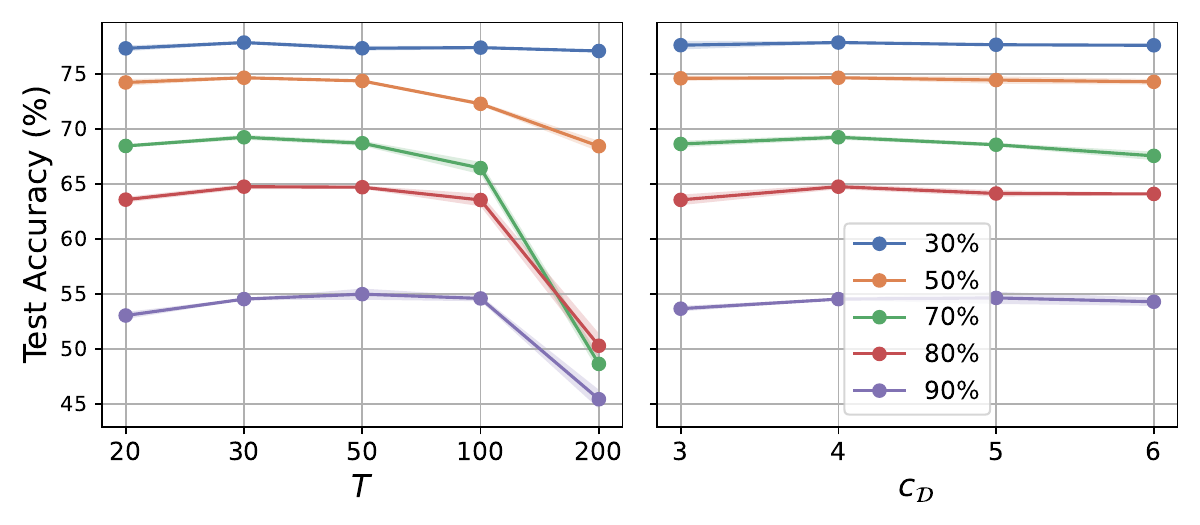}
    \caption{\textbf{Left}: T varying while $J=10$ and $c_\gD=4$. \textbf{Right}: $c_\gD$ varying while $T=30$ and $J=10$. Three runs are averaged.}
    \label{fig:TC_varying}
\end{figure}

\paragraph{Beta sampling with existing scores}
Next, we study the impact of our proposed pruning-ratio-adaptive Beta sampling on existing score metrics. We apply our Beta sampling strategy to other score-based methods, including Forgetting, EL2N, and Dyn-Unc, on the CIFAR10 and CIFAR100 datasets. Compared to vanilla threshold pruning, which selects only the highest-scoring samples, we observe that previous methods become competitive when Beta sampling is adjusted (see \cref{tab:abl_beta_cifar10_100_90_main}).
For the case of random pruning combined with Beta sampling, we do not use any score but select samples only with Beta sampling.

\begin{table}[ht]
\caption{Comparison on CIFAR-10 and CIFAR-100 for $90\%$ pruning rate. 
We report average accuracy with five runs. The best performance is in bold in each column.}
\label{tab:abl_beta_cifar10_100_90_main}
\setlength{\tabcolsep}{3.1pt}
\centering
\begin{tabular}{lcc}
    \toprule
    \multicolumn{1}{c}{} & \multicolumn{2}{c}{CIFAR-10} \\
    \midrule
    Method & Thresholding & $\beta$-Sampling \\
    \midrule
    Random &  \textbf{83.74} \scriptsize{$\pm$ 0.21} & 83.31 (-0.43) \scriptsize{$\pm$ 0.14} \\
    EL2N &  38.74 \scriptsize{$\pm$ 0.75} & 87.00 (+48.26) \scriptsize{$\pm$ 0.45} \\
    Forgetting &  46.64 \scriptsize{$\pm$ 1.90} & 85.67 (+39.03) \scriptsize{$\pm$0.13} \\
    Dyn-Unc &  59.67 \scriptsize{$\pm$ 1.79} & 85.33 (+25.66) \scriptsize{$\pm$ 0.20}   \\
    \hline
    Ours & 54.95 \scriptsize{$\pm$ 0.42} & \textbf{87.09} (+32.14) \scriptsize{$\pm$ 0.36}  \\
    \bottomrule
\end{tabular}

\begin{tabular}{lcc}
    \toprule
    \multicolumn{1}{c}{} & \multicolumn{2}{c}{CIFAR-100} \\
    \midrule
    Method & Thresholding & $\beta$-Sampling \\
    \midrule
    Random & \textbf{45.09} \scriptsize{$\pm$ 1.26} & 51.76 (+6.67) \scriptsize{$\pm$ 0.25} \\
    EL2N &  8.89 \scriptsize{$\pm$ 0.28} & 53.97 (+45.08)  \scriptsize{$\pm$ 0.63}  \\
    Forgetting &  26.87 \scriptsize{$\pm$ 0.73} & 52.40 (+25.53) \scriptsize{$\pm$ 0.43} \\
    Dyn-Unc &  34.57 \scriptsize{$\pm$ 0.69} & 51.85 (+17.28) \scriptsize{$\pm$ 0.35}   \\
    \hline
    Ours & 34.28 \scriptsize{$\pm$ 1.39} & \textbf{54.54} (+20.26) \scriptsize{$\pm$ 0.09}  \\
    \bottomrule
\end{tabular}
\end{table}

Even with random pruning, our Beta sampling continues to perform well. 
Notably, EL2N, which performs poorly on its own, becomes significantly more effective when combined with our sampling method. Similar improvements are also seen with Forgetting and Dyn-Unc scores. This is because our proposed Beta sampling enhances the diversity of selected samples, especially when used with example difficulty-based methods. More results conducted at 80\% are included in the \cref{Appendix_beta_samapling}.
    
\vspace{-7pt}
\paragraph{Additional Analysis}
In addition to the main results presented in this paper, we conducted various experiments to validate the effectiveness of our method, which is provided in Appendix~\ref{Appendix_Experiments}. For instance, these investigations include: (i) calculating the Spearman rank correlation between individual DUAL scores and their average DUAL score across five runs to assess score consistency; and (ii) analyzing coreset performance under a time budget. These analyses are presented in Figure~\ref{fig:rank_corr} and Figure~\ref{fig:limited_epoch}, respectively, in Appendix~\ref{Appendix_Experiments}.

Furthermore, detailed results on extreme cases of label noise (ranging from 20\% to 40\% for CIFAR-100 and 20\% for Tiny-ImageNet) are presented in \cref{Appendix_labelnoise_experiments}. Similar comprehensive results for various image corruptions can be found in \cref{Appendix_imagecorruption_experiments}. The generalization performance of our method across other network architectures is further detailed in \cref{Appendix_cross_architecture}. Additionally, results for long-tailed data classification using the CIFAR-10-LT and CIFAR-100-LT datasets are provided in \cref{Appendix_long_tail}. Lastly, a comparison with dynamic pruning methods such as~\citet{qininfobatch} and \citet{yuan2025instance} is provided in \cref{Appendix_Dynamic_Pruning}.
\vspace{-3pt}
\section{Conclusion}

This paper introduces Difficulty and Uncertainty-Aware Lightweight (DUAL), a novel scoring metric for cost-effective pruning. DUAL is the first metric to combine difficulty and uncertainty into a single measure, and its effectiveness in identifying the most informative samples early in training is further supported by theoretical analysis. Further, we propose pruning-ratio-adaptive sampling to consider the sample diversity when the pruning ratio is extremely high. Our proposed DUAL score, combined with Beta sampling, shows remarkable performance, particularly under label noise and image corruption by effectively distinguishing noisy samples. Future work could explore extending this approach to unsupervised settings.

\vspace{-3pt}
\section*{Acknowledgement}
This work was supported by two Institute of Information \& communications Technology Planning \& Evaluation (IITP) grants (No. RS-2019-II190075, Artificial Intelligence Graduate School Program (KAIST); No. RS-2024-00457882, National AI Research Lab Project) funded by the Korean government (MSIT), and a National Research Foundation of Korea (NRF) grant (No. NRF-2019R1A5A1028324) funded by the Korean government (MSIT).
\vspace{-3pt}
\section*{Impact Statement}
This paper presents work on data pruning to advance machine learning, with the potential for positive societal impact through improved efficiency.

\bibliography{references}
\bibliographystyle{icml2025}

%%%%%%%%%%%%%%%%%%%%%%%%%%%%%%%%%%%%%%%%%%%%%%%%%%%%%%%%%%%%%%%%%%%%%%%%%%%%%%%
%%%%%%%%%%%%%%%%%%%%%%%%%%%%%%%%%%%%%%%%%%%%%%%%%%%%%%%%%%%%%%%%%%%%%%%%%%%%%%%
% APPENDIX
%%%%%%%%%%%%%%%%%%%%%%%%%%%%%%%%%%%%%%%%%%%%%%%%%%%%%%%%%%%%%%%%%%%%%%%%%%%%%%%
%%%%%%%%%%%%%%%%%%%%%%%%%%%%%%%%%%%%%%%%%%%%%%%%%%%%%%%%%%%%%%%%%%%%%%%%%%%%%%%
\newpage
\appendix
\onecolumn
\newpage
\section{Technical Details}

\subsection{Details on Baseline Implementation}
\label{Appendix_Technical_Details_of_Baselines}
\textbf{EL2N}~\cite{paul2021deep} is defined the error L2 norm between the true labels and predictions of model. Then examples with low scores are pruned out. We calculate error norm at epoch 20 from five independent runs, then the average was used for EL2N score.

\textbf{Forgetting}~\cite{toneva2018empirical} is defined as the number of forgetting events, where the model prediction goes wrong after the correct prediction, up until the end of training. Rarely are unforgotten samples pruned out.

\textbf{AUM}~\cite{pleiss2020identifyingmislabeleddatausing} accumulates the margin, which means the gap between the target probabilities and the second largest prediction of model. They calculate the margin at every epoch and then transform it into an AUM score at the end of the training. Here samples with small margin are considered as mislabeled samples, thus data points with small AUM scores are eliminated.

\textbf{Entropy}~\cite{coleman2020selectionproxyefficientdata} is calculated as the entropy of prediction probabilities at the end of training, then the samples which have high entropy are selected into coreset. 

\textbf{Dyn-Unc}~\cite{he2024large} is also calculated at the end of training, with the window length J set as 10. Samples with high uncertainties are selected into the subset after pruning.

\textbf{TDDS}~\cite{zhang2024spanning} adapts different hyperparameter for each pruning ratio. As they do not provide full information for implementation, we have no choice but set parameters for the rest case arbitrarily.
The provided setting for (pruning ratio, computation epoch $T$, the length of sliding window $K$) is (0.3, 70, 10), (0.5, 90, 10), (0.7, 80, 10), (0.8, 30, 10), and (0.9, 10, 5) for CIFAR-100, and for ImageNet-(0.3, 20, 10), (0.5, 20, 10), (0.7, 30, 20). Therefore, we set the parameter for CIFAR-10 as the same with CIFAR-100, and 80\%, 90\% pruning on ImageNet-1K, we set them as (30, 20), following the choice for 70\% pruning.

\textbf{CCS}~\cite{zheng2022coverage} for stratified sampling method, we adapt AUM score as the original CCS paper does. They assign different hard cutoff rate for each pruning ratio, For CIFAR10, the cutoff rate is (30\%, 0), (50\%, 0), (70\%, 10\%), (80\%, 10\%), (90\%, 30\%). For CIFAR100 and ImageNet-1K, we set them as the same with the original paper. As explicitly mentioned in Appendix B of~\citet{zheng2022coverage}, we use the AUM score calculated at the end of training. This means scores are computed at epoch 200 for the CIFAR-10/100 datasets and at epoch 90 for the ImageNet-1K dataset.

\textbf{D2}~\cite{maharana2023d2} for $\mathbb{D}^2$ pruning, we set the initial node using forgetting scores for CIFAR-10 and CIFAR-100, we set the number of neighbors $k$, and message passing weight $\gamma$ as the same with the original paper.

\begin{remark}
As detailed by \citet{zhang2024spanning} (Section 5.2), TDDS employs 90 epochs for initial full-dataset training. Subsequently, an exhaustive search is conducted to determine an optimal epoch for score computation (e.g., 30 epoch for pruning ImageNet by 70-90\%). In our evaluations, we utilized these reported optimal epochs for TDDS across all pruning ratios. While this approach leads to a shorter score computation period for TDDS (excluding the significant overhead of the exhaustive search itself), it is crucial to note that our proposed method consistently achieves significantly higher test accuracy. This advantage is demonstrated by both their reported results and our reproduced experiments.
\end{remark}

Note that, Infomax~\citep{tan2025data} was excluded as it employs different base hyperparameters in the original paper compared to other baselines and does not provide publicly available code.  Additionally, implementation details, such as the base score metric used to implement Infomax, are not provided. As we intend to compare other baseline methods with the same training hyperparameters, we do not include the accuracies of Infomax in our tables. 
To see if we can match the performance of Infomax, we tested our method with different training details. For example, if we train the subset using the same number of iterations (not epoch) as the full dataset and use a different learning rate tuned for our method, then an improved accuracy of 59\% is achievable for 90\% pruning on CIFAR-100, which surpasses the reported performance of Infomax. For the ImageNet-1K dataset, our method outperforms Infomax without any base hyperparameter tuning, while also being cost-effective.

\subsection{Detailed Experimental Settings}
\label{Appendix_Technical_Details_of_Ours}
Here we clarify the technical details in our works.
For training the model on full-dataset and the selected subset, all parameters are used identically only except for batch sizes. For CIFAR-10/100, we train ResNet-18 for 200 epochs with batch size of 128, for each pruning ratio \{30\%, 50\%, 70\%, 80\%, 90\%\} we use different batch sizes with \{128, 128, 128, 64, 32\}. We set the initial learning rate as 0.1, optimizer as SGD with momentum 0.9, and scheduler as cosine annealing scheduler with weight decay 0.0005.
For training ImageNet, we use ResNet-34 as the network architecture. For all coresets with different pruning rates, we train models for 300,000 iterations with a 256 batch size. We use the SGD optimizer with 0.9 momentum and 0.0001 weight decay, using a 0.1 initial learning rate. The cosine annealing learning rate scheduler was used for training. For fair comparison, we use the same parameters across all pruning methods, including ours. All experiments were conducted using an NVIDIA A6000 GPU. We also attach the implementation in the supplementary material.

For calculating DUAL score, we need three parameters $T$, $J$, and $c_\gD$, each means score computation epoch, the length of sliding window, and hyperparameter regarding the train dataset. We fix $J$ as 10 for all experiments.
We use ($T$, $J$, $c_\gD$) for each dataset as followings. For CIFAR-10, we use (30, 10, 5.5), for CIFAR-100, (30, 10, 4), and for ImageNet-1K, (60, 10, 11).
We first roughly assign the term $c_\gD$ based on the size of initial dataset and by considering the relative difficulty of each, we set $c_\gD$ for CIFAR-100 smaller than that of CIFAR-10. For the ImageNet-1K dataset, which contains 1,281,167 images, the size of the initial dataset is large enough that we do not need to set $c_\gD$ to a small value to in order to intentionally sample easier samples. Also, note that we fix the value of $C$ of Beta distribution at 15 across all experiments. A more detailed distribution, along with a visualization, can be found in Appendix~\ref{Appendix_explanation_of_dual_pruning}.

Experiments with label noise and image corruption on CIFAR-100 are conducted under the same settings as described above, except for the hyperparameters for DUAL pruning. For label noise experiments, we set $T$ to 50 and $J$ to 10 across all label noise ratio. For $c_\gD$, we set it to 6 for 20\% and 30\% noise, 8 for 40\% noise. For image corruption experiments, we set $T$ to 30, $J$ to 10, and $c_\gD$ to 6 across all image corruption ratio. 

For the Tiny-ImageNet case, we train ResNet-34 for 90 epochs with a batch size of 256 across all pruning ratios, using a weight decay of 0.0001. The initial learning rate is set to 0.1 with the SGD optimizer, where the momentum is set to 0.9, combined with a cosine annealing learning rate scheduler. For the hyperparameters used in DUAL pruning,  we set $T$ to 60, $J$ to 10, and $c_\gD$ to 6 for the label noise experiments. For the image corruption experiments, we set $T$ to 60, $J$ to 10, and $c_\gD$ to 2. We follow the ImageNet-1K hyperparameters to implement the baselines.

\clearpage
\section{More Results on Experiments}
We evaluate our proposed DUAL score through a wide range of analyses in this section. In Appendix~\ref{Appendix_labelnoise_experiments} and~\ref{Appendix_imagecorruption_experiments}, we demonstrate the robustness of the DUAL score through intensive experiments. In Appendix~\ref{Appendix_cross_architecture}, we investigate the cross-architecture performance of our method.
In Appendix~\ref{Appendix_beta_samapling}, we show the effectiveness of our Beta sampling when combined with other existing scores, compared to previous sampling strategies.

We first investigate the stability of our DUAL score. We calculate the Spearman rank correlation of individual scores and the average across five runs, following \citet{paul2021deep}. As shown in Figure~\ref{fig:rank_corr}, snapshot-based methods such as EL2N and Entropy exhibit relatively low correlation compared to methods using training dynamics. In particular, the DUAL score shows minimal variation across runs with a high Spearman rank correlation. This shows strong stability across random seeds.

\label{Appendix_Experiments}
\begin{figure}[ht]
    \centering
    \includegraphics[width=0.6\linewidth]{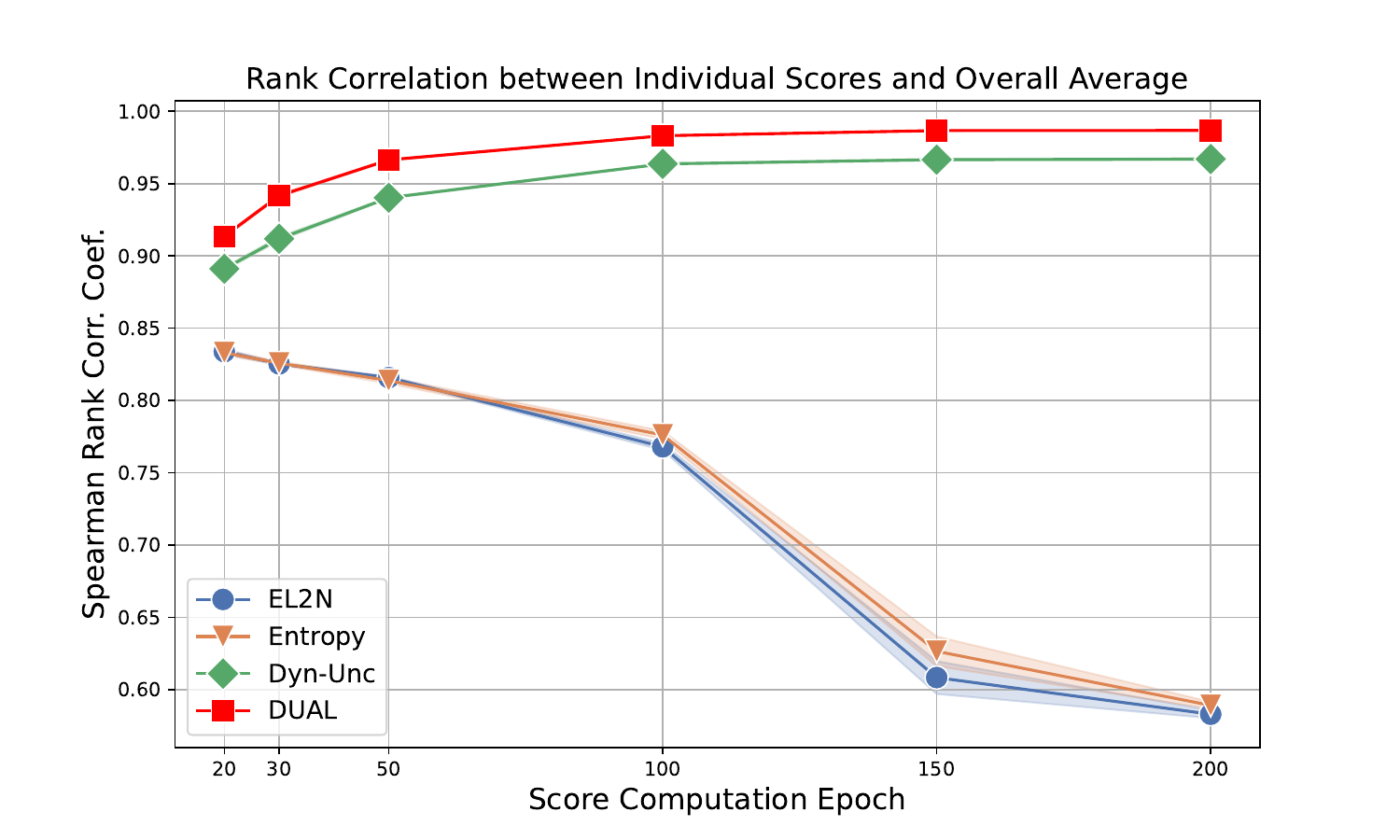}
    \caption{\label{fig:rank_corr}Average of Spearman rank correlation among independent runs and overall average of five runs. DUAL score is calculated at 30th epoch.}
\end{figure}

Next, we compute the Dyn-Unc, TDDS, and AUM scores at the 30th epoch, as we do for our method, and then compare the test accuracy on the coreset. Our pruning method, using the DUAL score and ratio-adaptive Beta sampling, outperforms the others by a significant margin, as illustrated in Figure~\ref{fig:computation_30_cifar100}.
We see that using epoch of 30 results in insufficient training dynamics for the others, thus it negatively impacts their performance.

\begin{figure}[ht]
    \centering
    \includegraphics[width=0.5\linewidth]{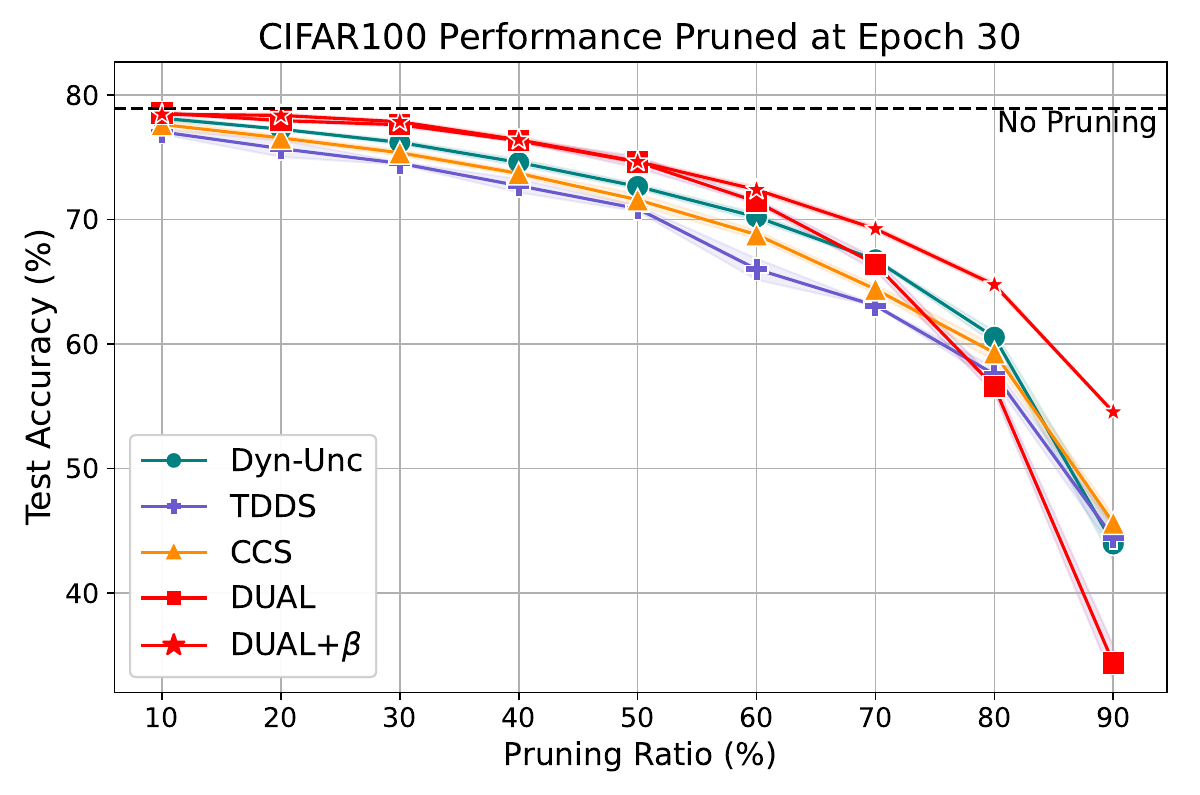}
    \caption{\label{fig:computation_30_cifar100}Test accuracy comparison under limited computation budget (epoch 30)}
    \label{fig:limited_epoch}
\end{figure}

\begin{figure}[H]
    \centering
    \includegraphics[width=0.5\linewidth]{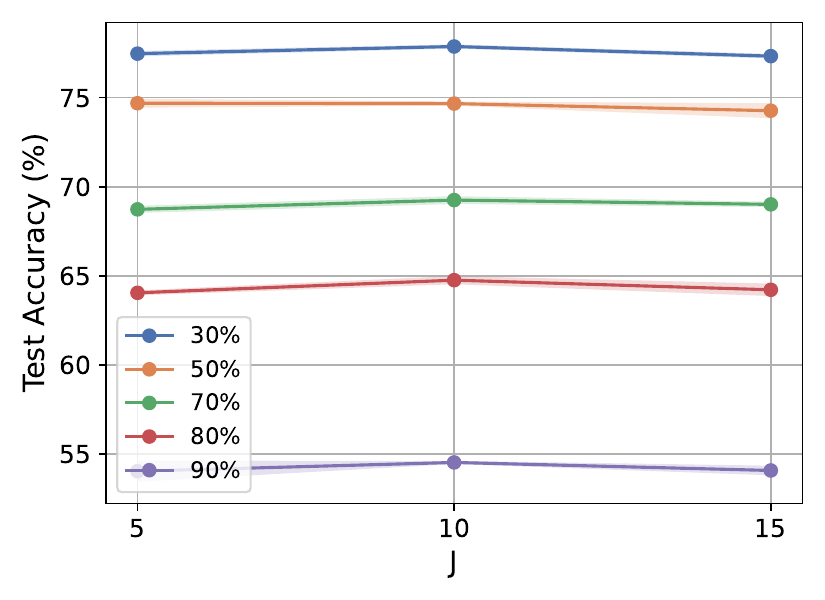}
    \caption{\label{fig:J_varying} J varies from 5 to 15, showing minimal differences, which demonstrates its robustness. We fix $T=30$, $C_\gD=4$. Runs are averaged over three runs.}
\end{figure}

\cref{fig:qualitative_figure} is a visualization of samples kept and pruned by our method. Samples kept by our method are more recognizable. The black swan on the grass and the sun-shaped balloon are rare cases, while the others are more easily recognizable. Samples pruned by our method are either typical, confusing, or mislabeled. The first and fifth examples are mislabeled, the second and third are typical, and the fourth is confusing.

\begin{figure}[ht]
    \centering
    \begin{subfigure}{\textwidth}
        \centering
        \includegraphics[width=0.75\textwidth]{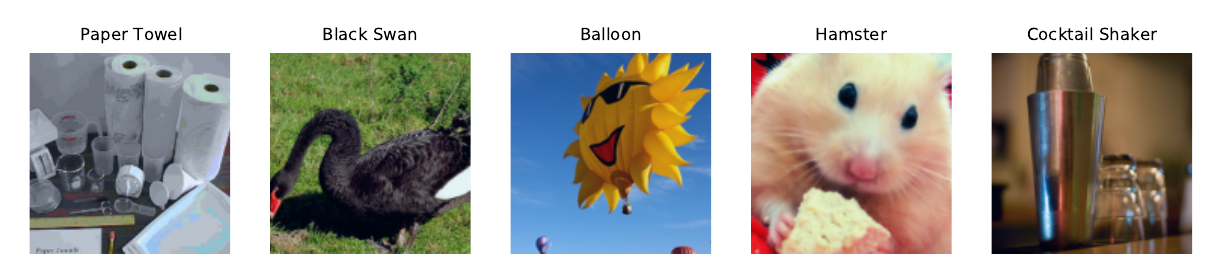}
        \caption{Samples kept by our method.}
    \end{subfigure}
    \begin{subfigure}{\textwidth}
        \centering
        \includegraphics[width=0.75\textwidth]{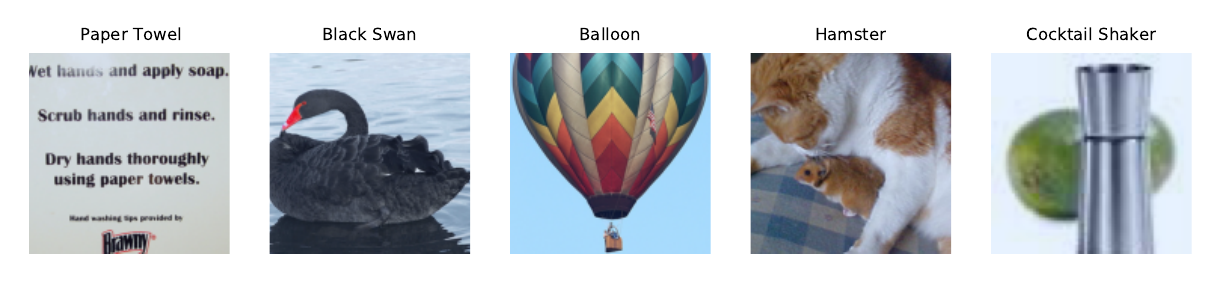}
        \caption{Samples pruned by our method.}
    \end{subfigure}

    \caption{Illustrations of samples kept and pruned by our method at the pruning ratio of 30\%. The pruned samples are likely either typical, confusing, or mislabeled, while the kept ones are certainly recognizable.}
    \label{fig:qualitative_figure}
\end{figure}

We compare the subset selected at the high pruning ratios by previous SOTA methods, namely CCS and D2. First, we examine the total amount of overlap by counting the number of samples in the intersection of each method in \cref{tab:ccs_d2_dual_table}. Furthermore, for intuitive understanding, we visualize the selected subset by each method for the 70\%, 80\% pruning cases on CIFAR-100 in \cref{fig:ccs_d2_dual_compare}. We can see that DUAL+$\beta$ seems to include more difficult examples than others.

\begin{table}[ht]
    \centering
    \caption{Comparison of subset overlap ratio over different pruning methods at high pruning rates. Let $S$ and $T$ denote the subsets selected by the two methods. The subset overlap ratio is computed as $\frac{|S \cap T|}{|S|}$ (which is equal to $\frac{|S \cap T|}{|T|}$).}
    \begin{tabular}{c|ccc}
        \toprule
        Pruning Rate & CCS \& DUAL+$\beta$ & D2 \& DUAL+$\beta$ & CCS \& D2 \\
        \midrule
        70\% & 0.41 & 0.37 & 0.42 \\
        80\% & 0.31 & 0.27 & 0.34 \\
        90\% & 0.19 & 0.17 & 0.21 \\
        \bottomrule
    \end{tabular}
    \label{tab:ccs_d2_dual_table}
\end{table}

\begin{figure}[ht]
    \centering
    \begin{subfigure}{0.8\textwidth}
        \centering
        \includegraphics[width=\textwidth]{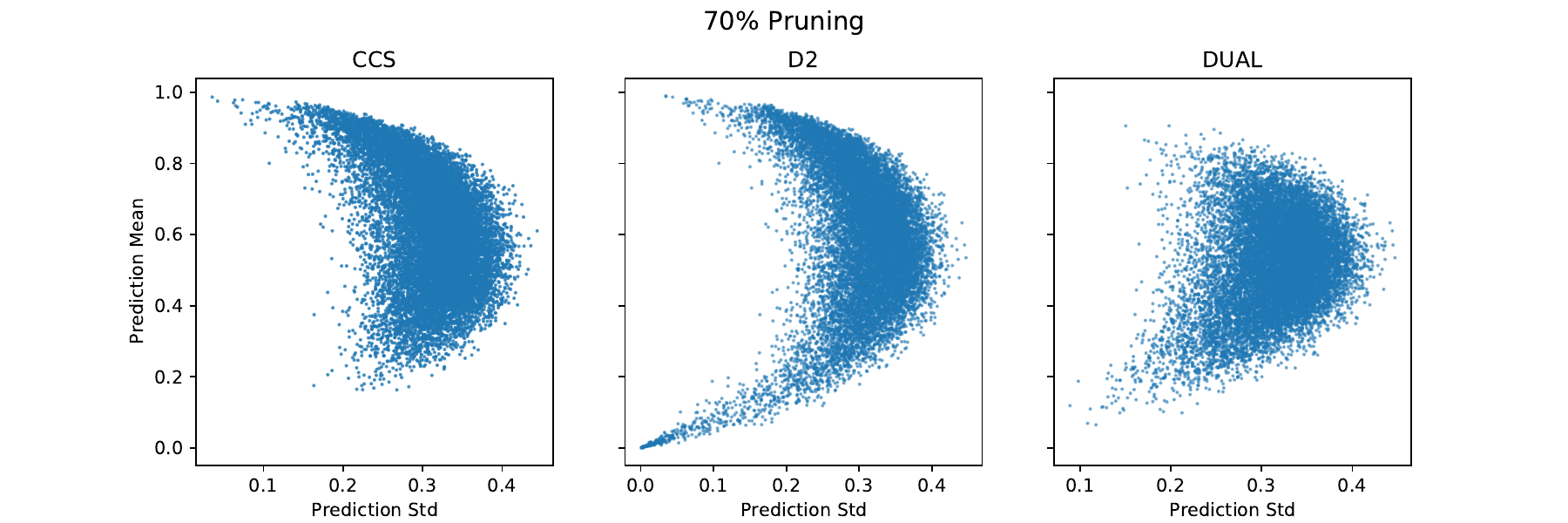}
    \end{subfigure}
    \hfill
    \begin{subfigure}{0.8\textwidth}
        \centering
        \includegraphics[width=\textwidth]{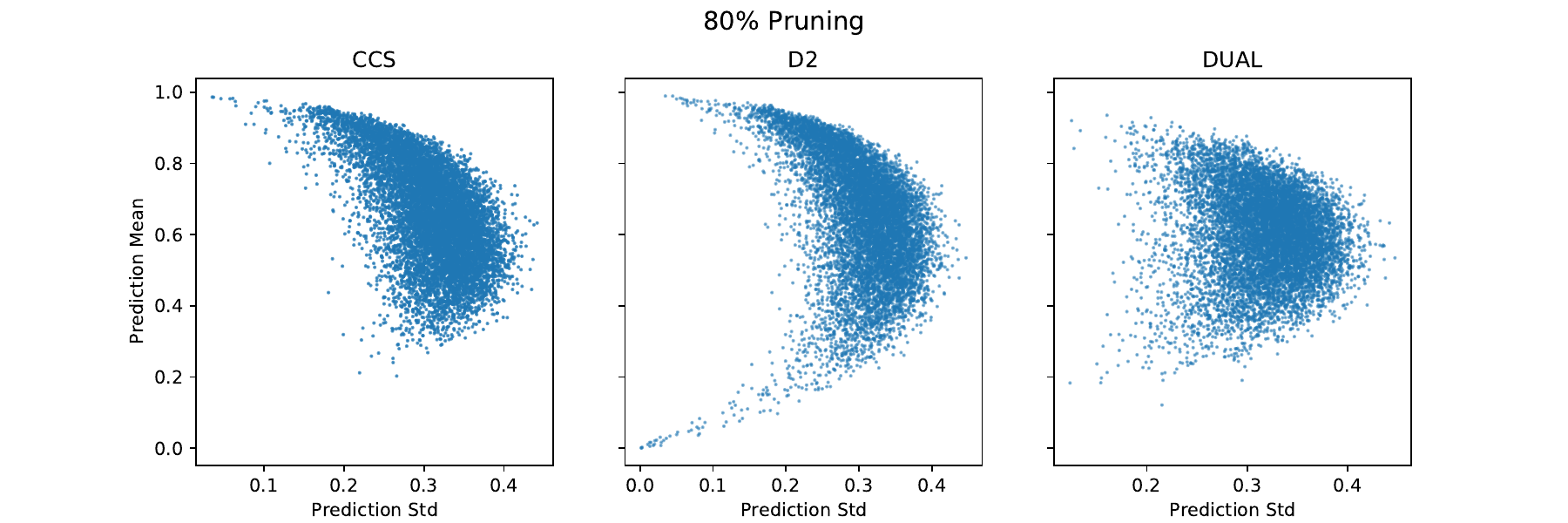}
    \end{subfigure}
    \hfill
    \begin{subfigure}{0.8\textwidth}
        \centering
        \includegraphics[width=\textwidth]{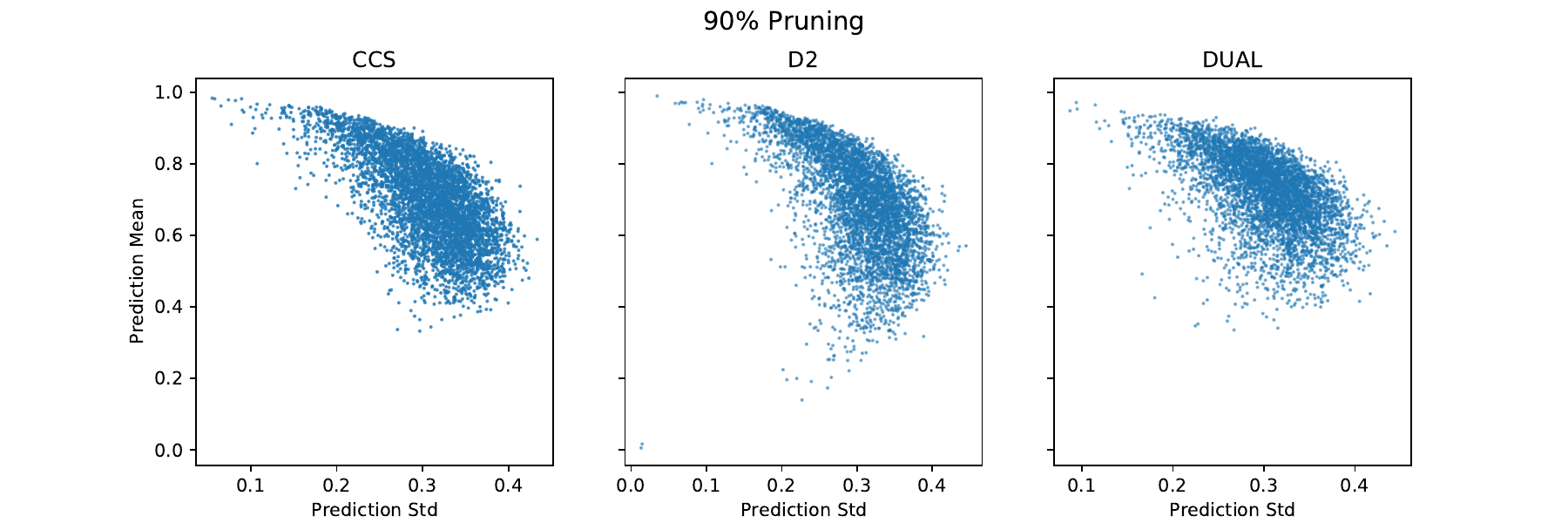}
    \end{subfigure}
    \caption{Comparison of selected subset over different methods at high pruning rates.}
    \label{fig:ccs_d2_dual_compare}
\end{figure}

\clearpage
\subsection{Image Classification Under Label Noise}
\label{Appendix_labelnoise_experiments}
We evaluated the robustness of our DUAL pruning method against label noise. We introduced symmetric label noise by replacing the original labels with labels from other classes randomly. For example, if we apply 20\% label noise to a dataset with 100 classes, 20\% of the data points are randomly selected, and each label is randomly reassigned to another label with a probability of $1/99$ for the selected data points.

Even under 30\% and 40\% random label noise, our method achieves the best performance and accurately identifies the noisy labels, as can be seen in Figure~\ref{fig:label_noise_3040_ratio}. By examining the proportion of noise removed, we can see that our method operates close to optimal. 
\begin{figure}[htbp] 
    \centering
    \begin{subfigure}{0.43\textwidth}
        \centering
        \includegraphics[width=\textwidth]{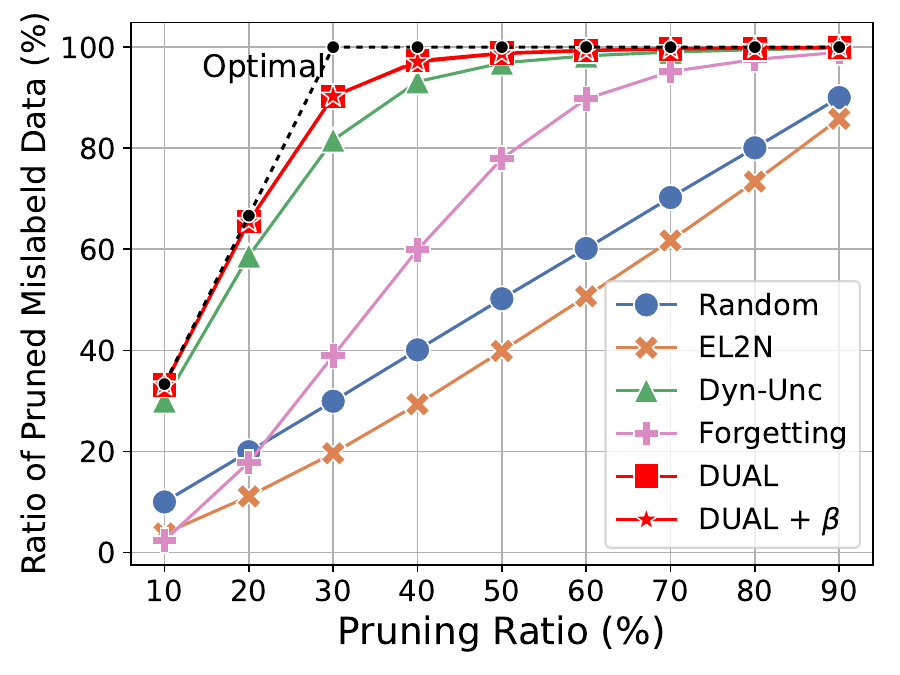}
        \caption{\label{fig:label_noise_30_ratio}30\% label noise}
    \end{subfigure}
    \hfill
    \begin{subfigure}{0.43\textwidth}
        \centering
        \includegraphics[width=\textwidth]{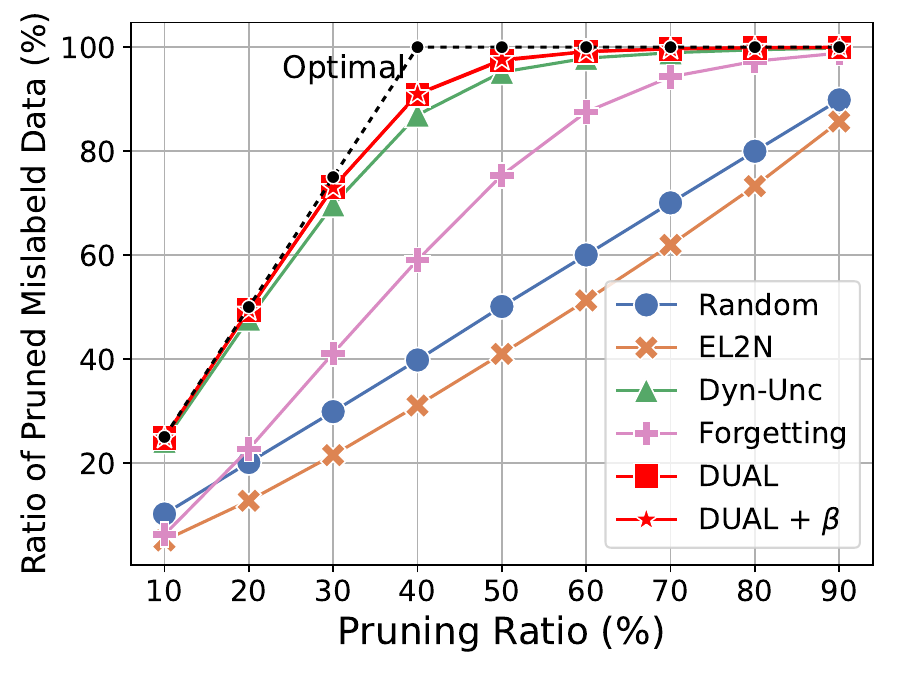} 
        \caption{\label{fig:label_noise_40_ratio}40\% label noise}
    \end{subfigure}
    \caption{\label{fig:label_noise_3040_ratio}Ratio of pruned mislabeled data under 30\% and 40\% label noise on CIFAR-100}
\end{figure}

Figure~\ref{fig:label_noise_visualization} shows a scatter plot of the CIFAR-100 dataset under 20\% label noise. The model is trained for 30 epochs, and we compute the prediction mean (y-axis) and standard deviation (x-axis) for each data point. Red dots represent the 20\% mislabeled data. These points remain close to the origin (0,0) during the early training phase. Therefore, pruning at this stage allows us to remove mislabeled samples nearly optimally while selecting the most uncertain ones.

\begin{figure}[H]
    \centering
    \includegraphics[width=0.9\linewidth]{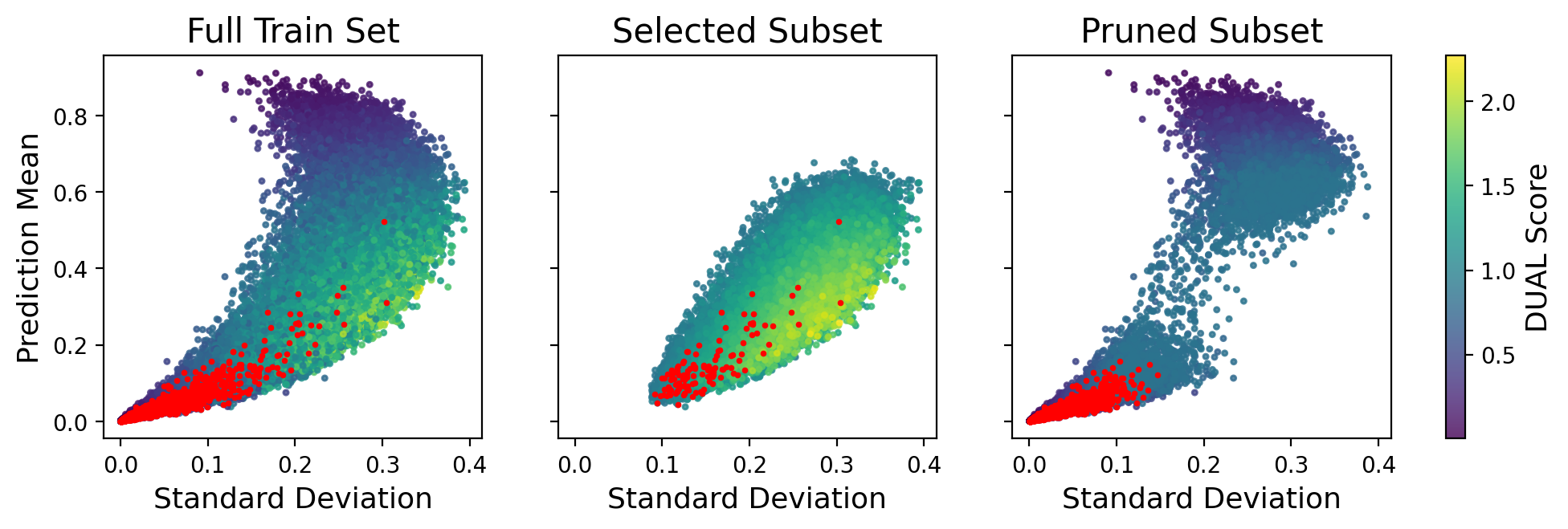}
    \caption{Pruning ratio is set to 50\%. Only 116 data points over 10,000 mislabeled data are selected as a subset where red dots indicate mislabeled data.}
    \label{fig:label_noise_visualization}
\end{figure}

We evaluated the performance of our proposed method across a wide range of pruning levels, from 10\% to 90\%, and compared the final accuracy with that of baseline methods. As shown in the Table~\ref{tab:label_noise_20_cifar}-\ref{tab:label_noise_20_tinyimagenet}, our method consistently outperforms the competition with a substantial margin in most cases. For a comprehensive analysis of performance under noisy conditions, please refer to Tables~\ref{tab:label_noise_20_cifar} to \ref{tab:label_noise_40_cifar} for CIFAR-100, which show results for 20\%, 30\%, and 40\% noise, respectively. Additionally, the results for 20\% label noise in Tiny-ImageNet are shown in Table~\ref{tab:label_noise_20_tinyimagenet}.
%%%%%%%%%%%%%%%%%%%%%%%%%%%%%%%%% CIFAR-100 %%%%%%%%%%%%%%%%%%%%%%%%%%%%%%%%%%%%%
\begin{table}[ht]
\caption{\label{tab:label_noise_20_cifar}Comparison of test accuracy of DUAL pruning with existing coreset selection methods under 20\% label noise using ResNet-18 for CIFAR-100. The model trained with the full dataset achieves \textbf{65.28\%} test accuracy on average. Results are averaged over five runs.}
\setlength{\tabcolsep}{3.1pt}
\centering
\begin{tabular}{lccccccc}
    \toprule
    \textbf{Pruning Rate ($\rightarrow$)} & \textbf{10\%} & \textbf{20\%} & \textbf{30\%} & \textbf{50\%} & \textbf{70\%} & \textbf{80\%} & \textbf{90\%} \\
    \midrule
    \textbf{Random} & 64.22 \scriptsize{$\pm 0.37 $} & 63.12 \scriptsize{$\pm 0.26 $} & 61.75 \scriptsize{$\pm 0.24 $} & 58.13 \scriptsize{$\pm 0.22 $} & 50.11 \scriptsize{$\pm 0.75 $} & 44.29 \scriptsize{$\pm 1.2 $} & 32.04 \scriptsize{$\pm 0.93 $} \\
    
    \textbf{Entropy} & 63.51 \scriptsize{$\pm 0.25 $} & 60.59 \scriptsize{$\pm 0.23 $} & 56.75 \scriptsize{$\pm 0.37 $} & 44.90 \scriptsize{$\pm 0.74 $} & 24.43 \scriptsize{$\pm 0.12 $} & 16.60 \scriptsize{$\pm 0.29$} & 10.35 \scriptsize{$\pm 0.49$}\\
    
    \textbf{Forgetting} & 64.29 \scriptsize{$\pm 0.26 $} & 63.40 \scriptsize{$\pm 0.14 $} & 64.00 \scriptsize{$\pm 0.27 $} & 67.51 \scriptsize{$\pm 0.52 $} & 59.29 \scriptsize{$\pm 0.66 $} & 50.11 \scriptsize{$\pm 0.91 $} & 32.08 \scriptsize{$\pm 1.15 $} \\
    
    \textbf{EL2N} & 64.51 \scriptsize{$\pm 0.35 $} & 62.67 \scriptsize{$\pm 0.28 $} & 59.85 \scriptsize{$\pm 0.31 $} & 46.94 \scriptsize{$\pm 0.75 $} & 19.32 \scriptsize{$\pm 0.87 $} & 11.02 \scriptsize{$\pm 0.45 $} & 6.83 \scriptsize{$\pm 0.21 $} \\
    
    \textbf{AUM} & 64.54 \scriptsize{$\pm 0.23$} & 60.72 \scriptsize{$\pm 0.22$} & 50.38 \scriptsize{$\pm 0.66$} & 22.03 \scriptsize{$\pm 0.92$}& 5.55 \scriptsize{$\pm 0.26 $} & 3.00 \scriptsize{$\pm 0.18 $} & 1.68 \scriptsize{$\pm 0.10$}\\
    
    \textbf{Moderate} & 64.45 \scriptsize{$\pm 0.29 $} & 62.90 \scriptsize{$\pm 0.33 $} & 61.46 \scriptsize{$\pm 0.50 $} & 57.53 \scriptsize{$\pm 0.61 $} & 49.50 \scriptsize{$\pm 1.06 $} & 43.81 \scriptsize{$\pm 0.80 $} & 29.15 \scriptsize{$\pm 0.79 $}  \\
    
    \textbf{Dyn-Unc} & 68.17 \scriptsize{$\pm 0.26 $} & 71.56 \scriptsize{$\pm 0.27 $} & 74.12 \scriptsize{$\pm 0.15 $} & \textbf{73.43} \scriptsize{$\pm 0.12 $} & 67.21 \scriptsize{$\pm 0.27 $} & 61.38 \scriptsize{$\pm 0.27 $} & \underline{48.00} \scriptsize{$\pm 0.79 $} \\
    
    \textbf{TDDS} & 62.86 \scriptsize{$\pm 0.36 $} & 61.96 \scriptsize{$\pm 1.03 $} & 61.38 \scriptsize{$\pm 0.53 $} & 59.16 \scriptsize{$\pm 0.94 $} & 48.93 \scriptsize{$\pm 1.68 $} & 43.83  \scriptsize{$\pm 1.13 $} & 34.05 \scriptsize{$\pm 0.49 $} \\
    
    \textbf{CCS} & 64.30 \scriptsize{$\pm 0.21 $} & 63.24 \scriptsize{$\pm 0.24 $} & 61.91 \scriptsize{$\pm 0.45 $} & 58.24 \scriptsize{$\pm 0.29 $} & 50.24 \scriptsize{$\pm 0.39 $} & 43.76 \scriptsize{$\pm 1.07 $} & 30.67  \scriptsize{$\pm 0.96 $} \\
    
    \midrule
    
    \textbf{DUAL} & \underline{69.78} \scriptsize{$\pm 0.28 $} & \textbf{74.79} \scriptsize{$\pm 0.07 $} & \textbf{75.40} \scriptsize{$\pm 0.11 $} & \textbf{73.43} \scriptsize{$\pm 0.16 $} & \underline{67.57} \scriptsize{$\pm 0.18 $} & \underline{61.46} \scriptsize{$\pm 0.45 $} & 43.30 \scriptsize{$\pm 1.59 $} \\
    
    \textbf{DUAL+$\beta$ sampling} & \textbf{69.95} \scriptsize{$\pm 0.60 $} & \underline{74.68} \scriptsize{$\pm 1.22 $} & \underline{75.37} \scriptsize{$\pm 1.33 $} & \underline{73.29} \scriptsize{$\pm 0.84 $} & \textbf{68.43} \scriptsize{$\pm 0.77 $} & \textbf{63.74} \scriptsize{$\pm 0.35 $} & \textbf{54.04} \scriptsize{$\pm 0.92 $} \\
    
    \bottomrule
\end{tabular}
\end{table}

\begin{table}[ht]
\caption{\label{tab:label_noise_30_cifar}Comparison of test accuracy of DUAL pruning with existing coreset selection methods under 30\% label noise using ResNet-18 for CIFAR-100. The model trained with the full dataset achieves \textbf{58.25\%} test accuracy on average. Results are averaged over five runs.}
\setlength{\tabcolsep}{3.1pt}
\centering
\begin{tabular}{lccccccc}
    \toprule
    \textbf{Pruning Rate ($\rightarrow$)} & \textbf{10\%} & \textbf{20\%} & \textbf{30\%} & \textbf{50\%} & \textbf{70\%} & \textbf{80\%} & \textbf{90\%} \\
    \midrule
    \textbf{Random} & 57.67 \scriptsize{$\pm 0.52 $} & 56.29 \scriptsize{$\pm 0.55 $} & 54.70 \scriptsize{$\pm 0.60 $} & 51.41 \scriptsize{$\pm 0.38 $} & 42.67 \scriptsize{$\pm 0.80 $} & 36.86 \scriptsize{$\pm 1.01 $} & 25.64 \scriptsize{$\pm 0.82 $} \\
    
    \textbf{Entropy} & 55.51 \scriptsize{$\pm 0.42 $} & 51.87 \scriptsize{$\pm 0.36 $} & 47.16 \scriptsize{$\pm 0.58 $} & 35.35 \scriptsize{$\pm 0.49 $} & 18.69 \scriptsize{$\pm 0.76 $} & 13.61 \scriptsize{$\pm 0.42$} & 8.58 \scriptsize{$\pm 0.49$}\\
    
    \textbf{Forgetting} & 56.76 \scriptsize{$\pm 0.62 $} & 56.43 \scriptsize{$\pm 0.28 $} & 58.84 \scriptsize{$\pm 0.26 $} & 64.51 \scriptsize{$\pm 0.37 $} & 61.26 \scriptsize{$\pm 0.69 $} & 52.94 \scriptsize{$\pm 0.68 $} & 34.99 \scriptsize{$\pm 1.16 $} \\
    
    \textbf{EL2N} & 56.39 \scriptsize{$\pm 0.53 $} & 54.41 \scriptsize{$\pm 0.68 $} & 50.29 \scriptsize{$\pm 0.40 $} & 35.65 \scriptsize{$\pm 0.79 $} & 13.05 \scriptsize{$\pm 0.51 $} & 8.52 \scriptsize{$\pm 0.40 $} & 6.16 \scriptsize{$\pm 0.40 $} \\
    
    \textbf{AUM} & 56.51 \scriptsize{$\pm 0.56$} & 49.10 \scriptsize{$\pm 0.72$} & 37.57 \scriptsize{$\pm 0.66$} & 11.56 \scriptsize{$\pm 0.46$}& 2.79 \scriptsize{$\pm 0.23 $} & 1.87 \scriptsize{$\pm 0.24 $} & 1.43 \scriptsize{$\pm 0.12$}\\
    
    \textbf{Moderate} & 57.31 \scriptsize{$\pm 0.75 $} & 56.11 \scriptsize{$\pm 0.45 $} & 54.52 \scriptsize{$\pm 0.48 $} & 50.71 \scriptsize{$\pm 0.42 $} & 42.47 \scriptsize{$\pm 0.29 $} & 36.21 \scriptsize{$\pm 1.09 $} & 24.85 \scriptsize{$\pm 1.72$}  \\
    
    \textbf{Dyn-Unc} & 62.20 \scriptsize{$\pm 0.44$} & \underline{66.48} \scriptsize{$\pm 0.40 $} & 70.45 \scriptsize{$\pm 0.50 $} & \textbf{71.91} \scriptsize{$\pm 0.34 $} & \underline{66.53} \scriptsize{$\pm 0.19$} & \underline{61.95} \scriptsize{$\pm 0.46 $} & \underline{49.51} \scriptsize{$\pm 0.52$} \\
    
    \textbf{TDDS} & 57.24 \scriptsize{$\pm 0.44 $} & 55.64 \scriptsize{$\pm 0.46 $} & 53.97 \scriptsize{$\pm 0.46 $} & 49.04 \scriptsize{$\pm 1.05 $} & 39.90 \scriptsize{$\pm 1.21$} & 35.02 \scriptsize{$\pm 1.34 $} & 26.99 \scriptsize{$\pm 1.03$} \\
    
    \textbf{CCS} & 57.26 \scriptsize{$\pm 0.48 $} & 56.52 \scriptsize{$\pm 0.23 $} & 54.76 \scriptsize{$\pm 0.52 $} & 51.29 \scriptsize{$\pm 0.32 $} & 42.33 \scriptsize{$\pm 0.78 $} & 36.61 \scriptsize{$\pm 1.31 $} & 25.64 \scriptsize{$\pm 1.65 $} \\
    
    \midrule
    
    \textbf{DUAL} & \underline{62.42} \scriptsize{$\pm 0.48 $} & \textbf{67.52} \scriptsize{$\pm 0.40 $} & \textbf{72.65} \scriptsize{$\pm 0.17 $} & 71.55 \scriptsize{$\pm 0.23 $} & 66.35 \scriptsize{$\pm 0.14 $} & 61.57 \scriptsize{$\pm 0.44 $} & 48.70 \scriptsize{$\pm 0.19 $} \\
    
    \textbf{DUAL+$\beta$ sampling} & \textbf{63.02} \scriptsize{$\pm 0.41 $} & \textbf{67.52} \scriptsize{$\pm 0.24 $} & \underline{72.57} \scriptsize{$\pm 0.15 $} & \underline{71.68} \scriptsize{$\pm 0.27 $} & \textbf{66.75} \scriptsize{$\pm 0.45 $} & \textbf{62.28} \scriptsize{$\pm 0.43 $} & \textbf{52.60} \scriptsize{$\pm 0.87 $} \\
    
    \bottomrule
\end{tabular}
\end{table}

\begin{table}[ht]
\caption{\label{tab:label_noise_40_cifar}Comparison of test accuracy of DUAL pruning with existing coreset selection methods under 40\% label noise using ResNet-18 for CIFAR-100. The model trained with the full dataset achieves \textbf{52.74\%} test accuracy on average. Results are averaged over five runs.}
\setlength{\tabcolsep}{3.1pt}
\centering
\begin{tabular}{lccccccc}
    \toprule
    \textbf{Pruning Rate ($\rightarrow$)} & \textbf{10\%} & \textbf{20\%} & \textbf{30\%} & \textbf{50\%} & \textbf{70\%} & \textbf{80\%} & \textbf{90\%} \\
    \midrule
    \textbf{Random} & 51.13 \scriptsize{$\pm 0.71 $} & 48.42 \scriptsize{$\pm 0.46 $} & 46.99 \scriptsize{$\pm 0.29 $} & 43.24 \scriptsize{$\pm 0.46 $} & 33.60 \scriptsize{$\pm 0.50 $} & 28.28 \scriptsize{$\pm 0.81 $} & 19.52 \scriptsize{$\pm 0.79 $} \\
    
    \textbf{Entropy} & 49.14 \scriptsize{$\pm 0.32 $} & 46.06 \scriptsize{$\pm 0.58 $} & 41.83 \scriptsize{$\pm 0.73 $} & 28.26 \scriptsize{$\pm 0.37 $} & 15.64  \scriptsize{$\pm 0.19 $} & 12.21 \scriptsize{$\pm 0.68 $} & 8.23 \scriptsize{$\pm 0.40 $}\\
    
    \textbf{Forgetting} & 50.98 \scriptsize{$\pm 0.72 $} & 50.36 \scriptsize{$\pm 0.48 $} & 52.86 \scriptsize{$\pm 0.47 $} & 60.48 \scriptsize{$\pm 0.68 $} & 61.55 \scriptsize{$\pm 0.58 $} & 54.57 \scriptsize{$\pm 0.86 $} & 37.68 \scriptsize{$\pm 1.63 $} \\
    
    \textbf{EL2N} & 50.09 \scriptsize{$\pm 0.86 $} & 46.35 \scriptsize{$\pm 0.48 $} & 41.57 \scriptsize{$\pm 0.26 $} & 23.42 \scriptsize{$\pm 0.80 $} & 9.00 \scriptsize{$\pm 0.25 $} & 6.80 \scriptsize{$\pm 0.44 $} & 5.58 \scriptsize{$\pm 0.40$} \\
    
    \textbf{AUM} & 50.60 \scriptsize{$\pm 0.54 $} & 41.84 \scriptsize{$\pm 0.76 $} & 26.29 \scriptsize{$\pm 0.72 $} & 5.49 \scriptsize{$\pm 0.19 $} & 1.95 \scriptsize{$\pm 0.21 $} & 1.44 \scriptsize{$\pm 0.14 $ } & 1.43 \scriptsize{$\pm 0.24 $}\\
    
    \textbf{Moderate} & 50.62 \scriptsize{$\pm 0.27 $} & 48.70 \scriptsize{$\pm 0.79 $} & 47.01  \scriptsize{$\pm 0.21 $} & 42.73 \scriptsize{$\pm 0.39 $} & 32.35 \scriptsize{$\pm 1.29 $} & 27.72 \scriptsize{$\pm 1.69 $} & 19.85 \scriptsize{$\pm 1.11 $}  \\
    
    \textbf{Dyn-Unc} & \underline{54.46} \scriptsize{$\pm 0.27$} & \underline{59.02} \scriptsize{$\pm 0.23 $} & 63.86 \scriptsize{$\pm 0.47 $} & \underline{69.76} \scriptsize{$\pm 0.16 $} & \textbf{65.36} \scriptsize{$\pm 0.14$} & \underline{61.37} \scriptsize{$\pm 0.32 $} & \underline{50.49} \scriptsize{$\pm 0.71$} \\
    
    \textbf{TDDS} & 50.65 \scriptsize{$\pm 0.23 $} & 48.83 \scriptsize{$\pm 0.38 $} & 46.93 \scriptsize{$\pm 0.66 $} &  41.85 \scriptsize{$\pm 0.37 $} & 33.31 \scriptsize{$\pm 0.79 $} & 29.39 \scriptsize{$\pm 0.35 $} & 21.09 \scriptsize{$\pm 0.89 $} \\
    
    \textbf{CCS} & 64.30 \scriptsize{$\pm 0.29 $} & 48.54 \scriptsize{$\pm 0.35 $} & 46.81 \scriptsize{$\pm 0.45 $} & 42.57 \scriptsize{$\pm 0.32 $} & 33.19 \scriptsize{$\pm 0.88 $} & 28.32 \scriptsize{$\pm 0.59 $} & 19.61 \scriptsize{$\pm 0.75 $} \\
    
    \midrule
    
    \textbf{DUAL} & \underline{54.46} \scriptsize{$\pm 0.33 $} & 58.99 \scriptsize{$\pm 0.34 $} & \textbf{64.71} \scriptsize{$\pm 0.44 $} & \underline{69.87} \scriptsize{$\pm 0.28 $} & 64.21 \scriptsize{$\pm 0.21 $} & 59.90 \scriptsize{$\pm 0.44 $} & 49.61 \scriptsize{$\pm 0.27 $} \\
    
    \textbf{DUAL+$\beta$ sampling} & \textbf{54.53} \scriptsize{$\pm 0.06 $} & \textbf{59.65} \scriptsize{$\pm 0.41 $} & \underline{64.67} \scriptsize{$\pm 0.34 $} & \textbf{70.09} \scriptsize{$\pm 0.33 $} & \underline{65.12} \scriptsize{$\pm 0.46$} & \underline{60.62} \scriptsize{$\pm 0.30 $} & \textbf{51.51} \scriptsize{$\pm 0.41 $} \\
    
    \bottomrule
\end{tabular}
\end{table}

%%%%%%%%%%%%%%%%%%%%%%%%%%%%%%%%%%%%%%%%%%%%%%%%%%%%%%%%%%%%%%%%%%%%%%%%

%%%%%%%%%%%%%%%%%%%%%%%%%%%% Tiny ImageNet %%%%%%%%%%%%%%%%%%%%%%%%%%%%
\begin{table}[ht]
\caption{\label{tab:label_noise_20_tinyimagenet}Comparison of test accuracy of DUAL pruning with existing coreset selection methods under 20\% label noise using ResNet-34 for Tiny-ImageNet. The model trained with the full dataset achieves \textbf{42.24\%} test accuracy on average. Results are averaged over three runs.}
\setlength{\tabcolsep}{3.1pt}
\centering
\begin{tabular}{lccccccc}
    \toprule
    \textbf{Pruning Rate ($\rightarrow$)} & \textbf{10\%} & \textbf{20\%} & \textbf{30\%} & \textbf{50\%} & \textbf{70\%} & \textbf{80\%} & \textbf{90\%} \\
    \midrule
    \textbf{Random} & 41.09 \scriptsize{$\pm 0.29 $} & 39.24 \scriptsize{$\pm 0.39 $} & 37.17 \scriptsize{$\pm 0.23 $} & 32.93 \scriptsize{$\pm 0.45 $} & 26.12 \scriptsize{$\pm 0.63 $} & 22.11 \scriptsize{$\pm 0.42 $} & 13.88 \scriptsize{$\pm 0.60 $} \\
    
    \textbf{Entropy} & 40.69 \scriptsize{$\pm 0.06 $} & 38.14 \scriptsize{$\pm 0.92 $} & 35.93 \scriptsize{$\pm 1.56 $} & 31.24 \scriptsize{$\pm 1.76 $} & 23.65 \scriptsize{$\pm 2.05 $} & 18.53 \scriptsize{$\pm 2.10 $} & 10.52 \scriptsize{$\pm 1.64 $} \\
    
    \textbf{Forgetting} & 43.60 \scriptsize{$\pm 0.65 $} & 44.82 \scriptsize{$\pm 0.20 $} & 45.65 \scriptsize{$\pm 0.48 $} & 46.05 \scriptsize{$\pm 0.07 $} & 41.08 \scriptsize{$\pm 0.53 $} & 34.89 \scriptsize{$\pm 0.12 $} & 24.58 \scriptsize{$\pm 0.06 $} \\
    
    \textbf{EL2N} & 41.05 \scriptsize{$\pm 0.35 $} & 38.88 \scriptsize{$\pm 0.63 $} & 32.91 \scriptsize{$\pm 0.39 $} & 20.89 \scriptsize{$\pm 0.80 $} & 8.08 \scriptsize{$\pm 0.24 $} & 4.92  \scriptsize{$\pm 0.32 $} & 3.12 \scriptsize{$\pm 0.07 $} \\
    
    \textbf{AUM} & 40.20 \scriptsize{$\pm 0.27$} & 34.68 \scriptsize{$\pm 0.35 $} & 29.01 \scriptsize{$\pm 0.12$} & 10.45 \scriptsize{$\pm 0.85 $} & 2.52 \scriptsize{$\pm 0.75 $} & 1.30 \scriptsize{$\pm 0.23 $} & 0.79 \scriptsize{$\pm 0.40 $} \\
    
    \textbf{Moderate} & 41.23 \scriptsize{$\pm 0.38 $} & 38.58 \scriptsize{$\pm 0.60 $} & 37.60 \scriptsize{$\pm 0.66 $} & 32.65 \scriptsize{$\pm 1.18 $} & 25.68 \scriptsize{$\pm 0.40 $} & 21.74 \scriptsize{$\pm 0.63 $} & 14.15 \scriptsize{$\pm 0.73 $}  \\
    
    \textbf{Dyn-Unc} & \underline{45.67} \scriptsize{$\pm 0.78  $} & 47.49 \scriptsize{$\pm 0.46 $} & \underline{49.38} \scriptsize{$\pm 0.17 $} & \underline{47.47} \scriptsize{$\pm 0.32 $} & 42.49 \scriptsize{$\pm 0.39$} & 37.44 \scriptsize{$\pm 0.73 $} & \underline{28.48} \scriptsize{$\pm 0.73 $} \\
    
    \textbf{TDDS} & 36.56 \scriptsize{$\pm 0.54$} & 36.90 \scriptsize{$\pm 0.48 $} & 47.62 \scriptsize{$\pm 1.36$} & 42.44 \scriptsize{$\pm 0.63$} & 34.32 \scriptsize{$\pm 0.26$}& 24.32 \scriptsize{$\pm 0.26 $} & 17.43 \scriptsize{$\pm 0.17$}  \\
    
    \textbf{CCS} & 40.49 \scriptsize{$\pm 0.67 $} & 39.06 \scriptsize{$\pm 0.24 $} & 37.67 \scriptsize{$\pm 0.46$} & 30.83 \scriptsize{$\pm 1.02$} & 22.38 \scriptsize{$\pm 0.70 $} & 19.66 \scriptsize{$\pm 0.58$} & 12.23 \scriptsize{$\pm 0.64$} \\
    
    \midrule
    
    \textbf{DUAL} & \textbf{45.76} \scriptsize{$\pm 0.67 $} & \textbf{48.20} \scriptsize{$\pm 0.20 $} & \textbf{49.94} \scriptsize{$\pm 0.17 $} & \textbf{48.19} \scriptsize{$\pm 0.27$} & \underline{42.80} \scriptsize{$\pm 0.74 $} & \textbf{37.90} \scriptsize{$\pm 0.59$} & 27.80 \scriptsize{$\pm 0.49 $} \\
    
    \textbf{DUAL+$\beta$ sampling} & 45.21  \scriptsize{$\pm 0.08$} & \underline{47.76} \scriptsize{$\pm 0.33$} & 48.99 \scriptsize{$\pm 0.32 $} & 46.95 \scriptsize{$\pm 0.23 $} & \textbf{43.01} \scriptsize{$\pm 0.43$} & \textbf{37.91} \scriptsize{$\pm 0.28 $} & \textbf{28.78} \scriptsize{$\pm 0.57$} \\
    
    \bottomrule
\end{tabular}

\end{table}

%%%%%%%%%%%%%%%%%%%%%%%%%%%%%%%%%%%%%%%%%%%%%%%%%%%%%%%%%%%%%%%%%%%%%%%%%%%%%%%%%%%%%%%%%%%%%%%%%%%%

\clearpage

\subsection{Image Classification Under Image Corruption}
\label{Appendix_imagecorruption_experiments}
We also evaluated the robustness of our proposed method against five different types of realistic image corruption: motion blur, fog, reduced resolution, rectangular occlusion, and Gaussian noise across the corruption rate from 20\% to 40\%. The ratio of each type of corruption is 4\% for 20\% corruption, 6\% for 30\% corruption, and 8\% for 40\% corruption. Example images for each type of corruption can be found in Figure~\ref{fig:example_imagecorruption}. Motion blur, reduced resolution, and rectangular occlusion are somewhat distinguishable, whereas fog and Gaussian noise are difficult for the human eye to differentiate. Somewhat surprisingly, our DUAL pruning prioritize to remove the most challenging examples, such as fog and Gaussian corrupted images, as shown in Figure~\ref{fig:imagecorruption_all}.
\vspace{-10pt}
\begin{figure}[ht]
    \centering
    \includegraphics[width=1\linewidth]{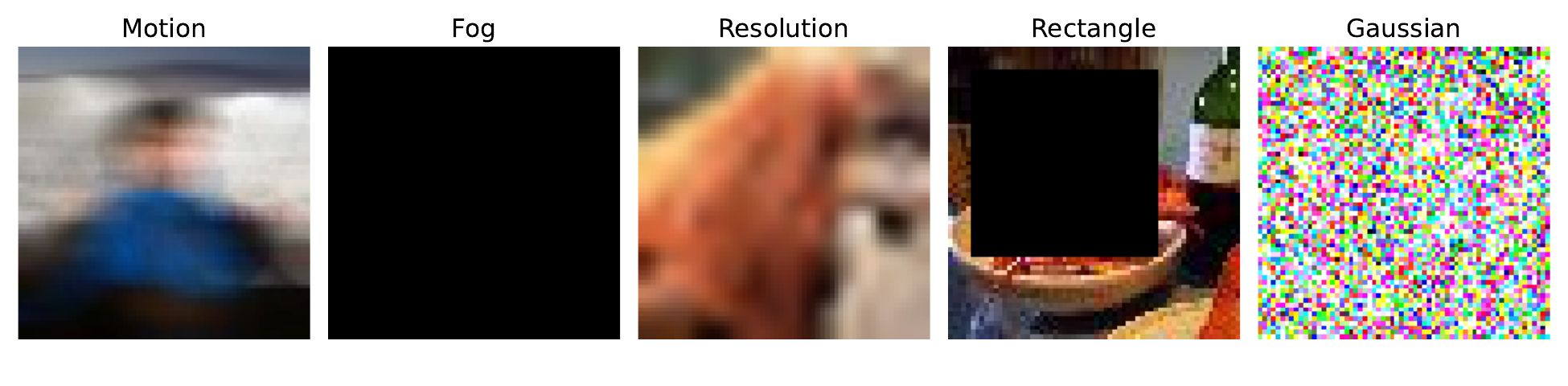}
    \caption{Examples of the different types of noise used for image corruption. Here we consider motion blur, fog, resolution, rectangle, and Gaussian noise.}
    \label{fig:example_imagecorruption}
\end{figure}

\vspace{-10pt}
\begin{figure}[htbp] 
    \centering
    \begin{subfigure}{0.33\textwidth}
        \centering
        \includegraphics[width=\textwidth]{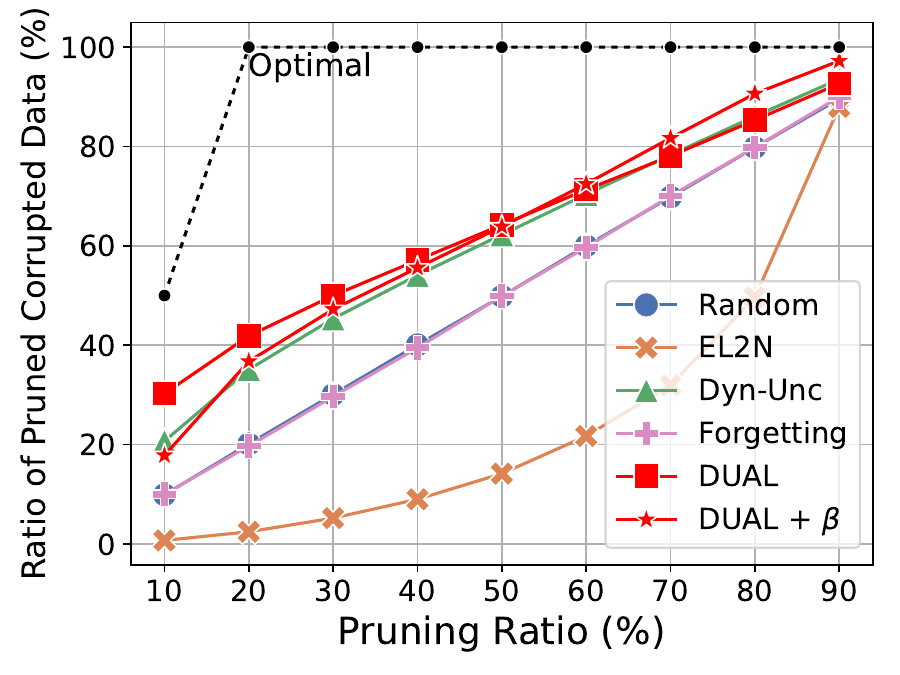}
        \caption{20\% image corruption}
        \label{fig:imagecorruption20_ratio}
    \end{subfigure}
    \hfill
    \begin{subfigure}{0.33\textwidth}
        \centering
        \includegraphics[width=\textwidth]{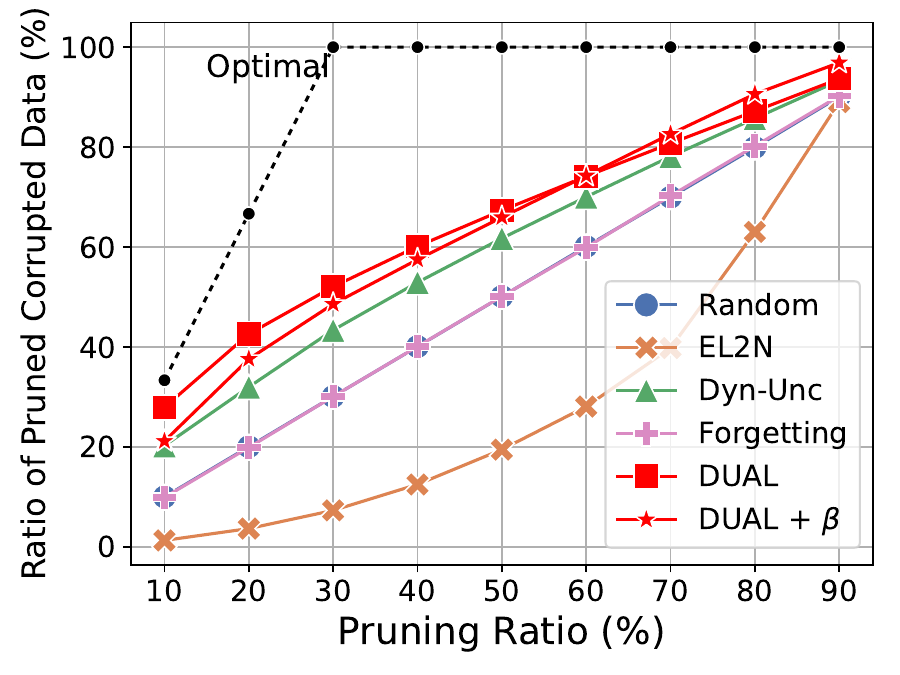} 
        \caption{30\% image corruption}
        \label{fig:imagecorruption30_ratio}
    \end{subfigure}
    \hfill
    \begin{subfigure}{0.33\textwidth}
        \centering
        \includegraphics[width=\textwidth]{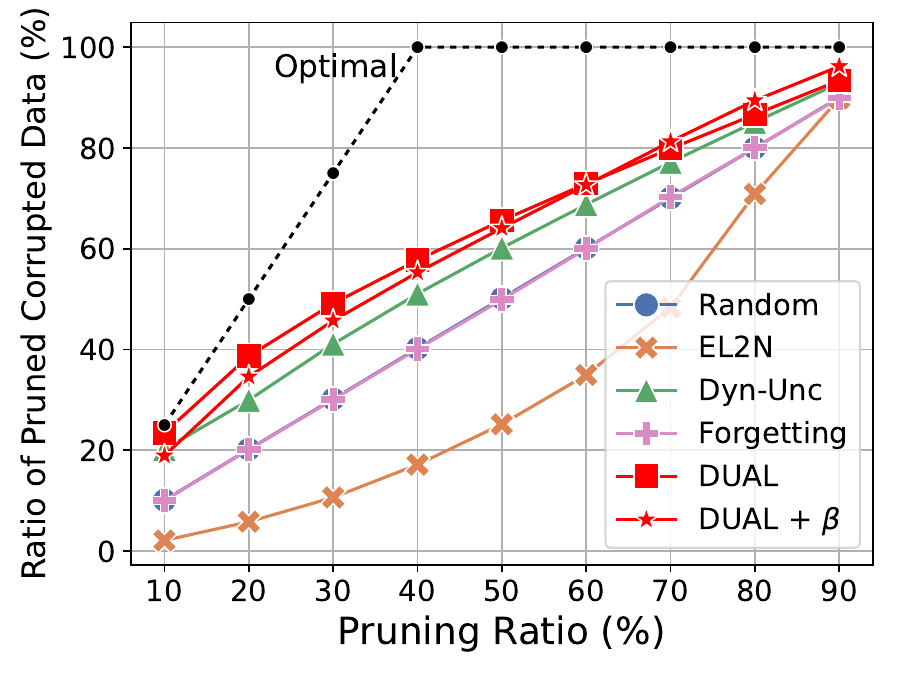} 
        \caption{40\% image corruption}
        \label{fig:imagecorruption40_ratio}
    \end{subfigure}
    \caption{Ratio of pruned corrupted samples with corruption rate of 20\%, 30\% and 40\% on CIFAR-100.}
    \label{fig:imagecorruption_203040_ratio}
\end{figure}

\begin{figure}[ht]
    \centering
    \includegraphics[width=1\linewidth]{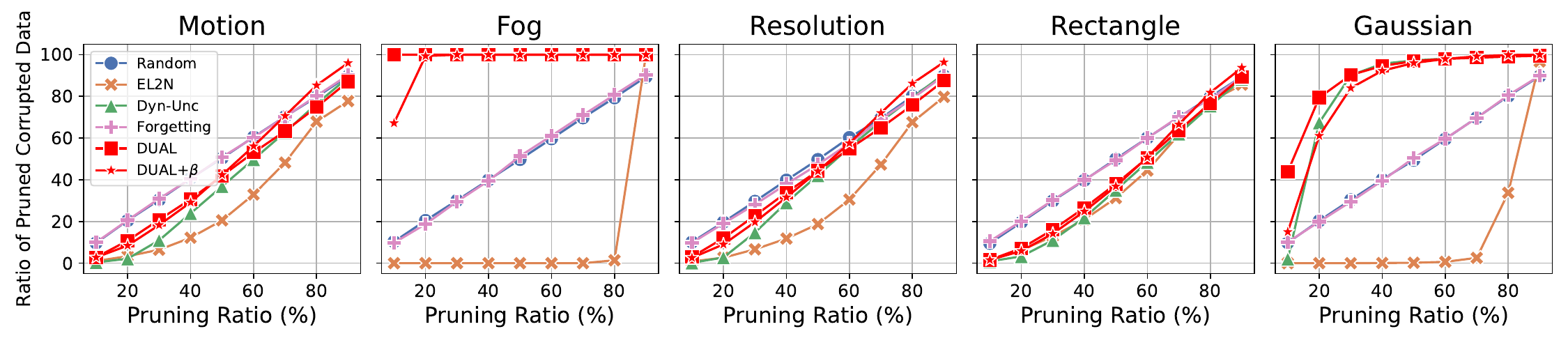}
    \caption{Illustration of the different types of noise used for image corruption. DUAL pruning prioritizes removing the most challenging corrupted images, such as fog and Gaussian noise.}  
    \label{fig:imagecorruption_all}
\end{figure}

We evaluated the performance of our proposed method across a wide range of pruning levels, from 10\% to 90\%, and compared the final accuracy with that of baseline methods. As shown in the table, our method consistently outperforms the competitors in most cases. For a comprehensive analysis of performance under noisy conditions, please refer to Tables~\ref{tab:image_corruption_20_cifar} to \ref{tab:image_corruption_40_cifar} for CIFAR-100, which show results for 20\%, 30\%, and 40\% corrupted images, respectively. Additionally, the results for 20\% image corruption in Tiny-ImageNet are shown in Table~\ref{tab:image_corruption_20_tinyimagenet}.

%%%%%%%%%%%%%%%%%%%%%%%%%%%%%%%%%%%%%%%%%%%%%%%%%%%%%%%%%%%%%%%%%%%%%%%%%%%%%%%%%%%%%%%%%%%%%%%%%%%%
\begin{table}[ht]
\caption{\label{tab:image_corruption_20_cifar}Comparison of test accuracy of DUAL pruning with existing coreset selection methods under 20\% image corrupted data using ResNet-18 for CIFAR-100. The model trained with the full dataset achieves \textbf{75.45\%} test accuracy on average. Results are averaged over five runs.}
\setlength{\tabcolsep}{3.1pt}
\centering
\begin{tabular}{lccccccc}
    \toprule
    \textbf{Pruning Rate} ($\rightarrow$) & \textbf{10\%} & \textbf{20\%} & \textbf{30\%} & \textbf{50\%} & \textbf{70\%} & \textbf{80\%} & \textbf{90\%} \\
    \midrule
    \textbf{Random} & 74.54 \scriptsize{$\pm 0.14$} & 73.08 \scriptsize{$\pm 0.27 $} & 71.61 \scriptsize{$\pm 0.14 $} & 67.52 \scriptsize{$\pm 0.32 $} & 59.57 \scriptsize{$\pm 0.52 $} & 52.79 \scriptsize{$\pm 0.68 $} & 38.26 \scriptsize{$\pm 1.32 $} \\
    
    \textbf{Entropy} & 74.74 \scriptsize{$\pm 0.25 $} & 73.15 \scriptsize{$\pm 0.26$} & 71.15 \scriptsize{$\pm 0.13 $} & 64.97 \scriptsize{$\pm 0.52 $} & 49.49 \scriptsize{$ \pm 1.40$} & 35.92 \scriptsize{$\pm 0.64 $} & 17.91 \scriptsize{$\pm 0.45 $} \\
    
    \textbf{Forgetting} & 74.33 \scriptsize{$\pm 0.25 $} & 73.25 \scriptsize{$\pm 0.29$} & 71.68 \scriptsize{$\pm 0.37 $} & 67.31 \scriptsize{$\pm 0.23$} & 58.93 \scriptsize{$\pm 0.35$} & 52.01 \scriptsize{$\pm 0.62$} & 38.95 \scriptsize{$\pm 1.24$} \\
    
    \textbf{EL2N} & 75.22 \scriptsize{$\pm 0.09 $} & 74.23 \scriptsize{$\pm 0.11 $} & 72.01 \scriptsize{$\pm 0.18 $} &48.19  \scriptsize{$\pm 0.47 $} & 14.81 \scriptsize{$\pm 0.14$} & 8.68 \scriptsize{$\pm 0.06 $} & 7.60 \scriptsize{$\pm 0.18$} \\
    
    \textbf{AUM} & 75.26 \scriptsize{$\pm 0.25$} & 74.47 \scriptsize{$\pm 0.31 $} & 71.96 \scriptsize{$\pm 0.22$} & 47.50 \scriptsize{$\pm 1.39 $} & 15.35 \scriptsize{$\pm 1.79$} & 8.98 \scriptsize{$\pm 1.37 $} & 5.47 \scriptsize{$\pm 0.85$} \\
    
    \textbf{Moderate} & 75.25 \scriptsize{$\pm 0.23 $} & 74.34 \scriptsize{$\pm 0.31 $} & 72.80 \scriptsize{$\pm 0.25 $} & 68.75 \scriptsize{$\pm 0.40 $} & 60.98 \scriptsize{$\pm 0.39 $} & 54.21 \scriptsize{$\pm 0.93 $} & 38.72 \scriptsize{$\pm 0.30 $} \\
    
    \textbf{Dyn-Unc} & 75.22 \scriptsize{$\pm 0.25$} & 75.51 \scriptsize{$\pm 0.22$} & \underline{75.09} \scriptsize{$\pm 0.23$} & 72.02 \scriptsize{$\pm 0.07$} & 62.17 \scriptsize{$\pm 0.55$} & 53.49 \scriptsize{$\pm 0.47$} & 35.44 \scriptsize{$\pm 0.49$} \\
    
    \textbf{TDDS} & 73.29 \scriptsize{$\pm 0.40 $} & 72.90 \scriptsize{$\pm 0.31 $} & 71.83 \scriptsize{$\pm 0.78 $} & 67.24 \scriptsize{$\pm 0.92 $} & 57.30 \scriptsize{$\pm 3.11 $} & 55.14 \scriptsize{$\pm 1.21 $} & \underline{41.58} \scriptsize{$\pm 2.10 $} \\
    
    \textbf{CCS} & 74.31 \scriptsize{$\pm 0.14 $} & 73.04 \scriptsize{$\pm 0.23 $} & 71.83 \scriptsize{$\pm 0.25 $} & 67.61 \scriptsize{$\pm 0.48$} & 59.61 \scriptsize{$\pm 0.64$} & 53.35 \scriptsize{$\pm 0.71 $} & 39.04 \scriptsize{$\pm 1.14$} \\
    
    \midrule
    
    \textbf{DUAL} & \textbf{75.95} \scriptsize{$\pm 0.19$} & \underline{75.66} \scriptsize{$\pm 0.23$} & \textbf{75.10} \scriptsize{$\pm 0.23$} & \textbf{72.64} \scriptsize{$\pm 0.27$} & \underline{65.29} \scriptsize{$\pm 0.64$} & \underline{57.55} \scriptsize{$\pm 0.55$} & 37.34 \scriptsize{$\pm 1.70$} \\
    
    \textbf{DUAL+$\beta$ sampling} & \underline{75.50} \scriptsize{$\pm 0.21$} & \textbf{75.78} \scriptsize{$\pm 0.15 $} & \textbf{75.10} \scriptsize{$\pm 0.13 $} & \underline{72.08} \scriptsize{$\pm 0.22 $} & \textbf{65.84} \scriptsize{$\pm 0.37 $} & \textbf{62.20} \scriptsize{$\pm 0.72 $} & \textbf{53.96} \scriptsize{$\pm 0.35 $} \\
    
    \bottomrule
\end{tabular}
\end{table}

\begin{table}[ht]
\caption{\label{tab:image_corruption_30_cifar}Comparison of test accuracy of DUAL pruning with existing coreset selection methods under 30\% image corrupted data using ResNet-18 for CIFAR-100. The model trained with the full dataset achieves \textbf{73.77\%} test accuracy on average. Results are averaged over five runs.}
\setlength{\tabcolsep}{3.1pt}
\centering
\begin{tabular}{lccccccc}
    \toprule
    \textbf{Pruning Rate} ($\rightarrow$) & \textbf{10\%} & \textbf{20\%} & \textbf{30\%} & \textbf{50\%} & \textbf{70\%} & \textbf{80\%} & \textbf{90\%} \\
    \midrule
    \textbf{Random} & 72.71 \scriptsize{$ \pm 0.34 $} & 71.28 \scriptsize{$ \pm 0.31 $}  & 69.84 \scriptsize{$ \pm 0.24 $} & 65.42 \scriptsize{$ \pm 0.33 $} & 56.72 \scriptsize{$ \pm 0.56 $} & 49.71 \scriptsize{$ \pm 0.65 $} & 35.75 \scriptsize{$ \pm 1.41 $} \\
    
    \textbf{Entropy} & 72.94 \scriptsize{$ \pm 0.09 $} & 71.14 \scriptsize{$ \pm 0.14 $} & 68.74 \scriptsize{$ \pm 0.20 $} & 61.34 \scriptsize{$ \pm 0.59 $} & 42.70 \scriptsize{$ \pm 1.02 $} & 29.46 \scriptsize{$ \pm 1.68 $} & 12.55 \scriptsize{$ \pm 0.66 $} \\
    
    \textbf{Forgetting} & 72.67 \scriptsize{$ \pm 0.21 $} & 71.22 \scriptsize{$ \pm 0.08 $} & 69.65 \scriptsize{$ \pm 0.45 $} & 65.25 \scriptsize{$ \pm 0.33 $} & 56.47 \scriptsize{$ \pm 0.31 $} &49.07  \scriptsize{$ \pm 0.32 $} & 34.62 \scriptsize{$ \pm 1.15 $} \\
    
    \textbf{EL2N} & 73.33 \scriptsize{$ \pm 0.08 $} & 71.99 \scriptsize{$ \pm 0.11 $} & 67.72 \scriptsize{$ \pm 0.50 $} & 37.57 \scriptsize{$ \pm 0.70 $} & 10.75 \scriptsize{$ \pm 0.28 $} & 9.08 \scriptsize{$ \pm 0.30 $} & 7.75 \scriptsize{$ \pm 0.08 $} \\
    
    \textbf{AUM} & 73.73 \scriptsize{$ \pm 0.19 $} & 72.99 \scriptsize{$ \pm 0.22 $} & 70.93 \scriptsize{$ \pm 0.33 $} & 57.13 \scriptsize{$ \pm 0.42 $} & 28.98 \scriptsize{$ \pm 0.50 $} & 19.73 \scriptsize{$ \pm 0.28 $} & 12.18 \scriptsize{$ \pm 0.46 $} \\
    
    \textbf{Moderate} & \textbf{74.02} \scriptsize{$ \pm 0.28 $} & 72.70 \scriptsize{$ \pm 0.30 $} & 71.51 \scriptsize{$ \pm 0.26 $} & 67.35 \scriptsize{$ \pm 0.16 $} & 59.47 \scriptsize{$ \pm 0.34 $} & 52.95 \scriptsize{$ \pm 0.60 $} & 37.45 \scriptsize{$ \pm 1.21 $} \\
    
    \textbf{Dyn-Unc} & 73.86 \scriptsize{$\pm 0.21 $} & 73.78 \scriptsize{$ \pm 0.20 $} & \textbf{73.78} \scriptsize{$ \pm 0.12 $} & 71.01 \scriptsize{$ \pm 0.23 $} & 61.56 \scriptsize{$ \pm 0.46 $} & 52.51 \scriptsize{$ \pm 1.08 $} & 35.47 \scriptsize{$ \pm 1.34 $} \\
    
    \textbf{TDDS} & 71.58 \scriptsize{$ \pm 0.50 $} & 71.45 \scriptsize{$ \pm 0.68 $} & 69.92 \scriptsize{$ \pm 0.25 $} & 65.12 \scriptsize{$ \pm 2.08 $} & 55.79 \scriptsize{$ \pm 2.16 $} & 53.85 \scriptsize{$ \pm 0.94 $} & \underline{40.51} \scriptsize{$ \pm 1.34 $} \\
    
    \textbf{CCS} & 72.58 \scriptsize{$ \pm 0.12 $} & 71.38 \scriptsize{$ \pm 0.35 $} & 69.83 \scriptsize{$ \pm 0.26 $} & 65.45 \scriptsize{$ \pm 0.23 $} & 56.65 \scriptsize{$ \pm 0.45 $} & 49.75 \scriptsize{$ \pm 0.90 $} & 34.63 \scriptsize{$ \pm 1.79 $} \\
    
    \midrule
    
    \textbf{DUAL} & \underline{73.96} \scriptsize{$ \pm 0.20 $} & \textbf{74.07} \scriptsize{$ \pm 0.43 $} & \underline{73.74} \scriptsize{$ \pm 0.18 $} & \textbf{71.23} \scriptsize{$ \pm 0.08$} & \underline{64.76} \scriptsize{$ \pm 0.32 $} & \underline{57.47} \scriptsize{$ \pm 0.51 $} & 37.93 \scriptsize{$ \pm 2.38 $} \\
    
    \textbf{DUAL+$\beta$ sampling} & 73.91 \scriptsize{$ \pm 0.17 $} & 73.80 \scriptsize{$ \pm 0.48 $} & 73.59 \scriptsize{$ \pm 0.19 $} & \underline{71.12} \scriptsize{$ \pm 0.29 $} & \textbf{65.18} \scriptsize{$ \pm 0.44 $} & \textbf{61.07} \scriptsize{$ \pm 0.47 $} & \textbf{52.61} \scriptsize{$ \pm 0.47 $} \\
    
    \bottomrule
\end{tabular}
\end{table}

\begin{table}[ht]
\caption{\label{tab:image_corruption_40_cifar}Comparison of test accuracy of DUAL pruning with existing coreset selection methods under 40\% image corrupted data using ResNet-18 for CIFAR-100. The model trained with the full dataset achieves \textbf{72.16\%} test accuracy on average. Results are averaged over five runs.}
\setlength{\tabcolsep}{3.1pt}
\centering
\begin{tabular}{lccccccc}
    \toprule
    \textbf{Pruning Rate} ($\rightarrow$) & \textbf{10\%} & \textbf{20\%} & \textbf{30\%} & \textbf{50\%} & \textbf{70\%} & \textbf{80\%} & \textbf{90\%} \\
    \midrule
    \textbf{Random} & 70.78 \scriptsize{$ \pm 0.25 $} & 69.30 \scriptsize{$ \pm 0.29 $} & 67.98 \scriptsize{$ \pm 0.26 $} & 63.23 \scriptsize{$ \pm 0.26 $} & 53.29 \scriptsize{$ \pm 0.64 $} & 45.76 \scriptsize{$ \pm 0.85 $} & 32.63 \scriptsize{$ \pm 0.61 $} \\
    
    \textbf{Entropy} & 70.74 \scriptsize{$ \pm 0.18 $} & 68.90 \scriptsize{$ \pm 0.37 $} & 66.19 \scriptsize{$ \pm 0.46 $} & 57.03 \scriptsize{$ \pm 0.60 $} & 35.62 \scriptsize{$ \pm 1.58 $} & 22.50 \scriptsize{$ \pm 1.03 $} & 7.46 \scriptsize{$ \pm 0.52 $} \\
    
    \textbf{Forgetting} & 70.54 \scriptsize{$ \pm 0.10 $} & 69.17 \scriptsize{$ \pm 0.30 $} & 67.41 \scriptsize{$ \pm 0.28 $} & 62.77 \scriptsize{$ \pm 0.15 $} & 52.89 \scriptsize{$ \pm 0.36 $} & 44.94 \scriptsize{$ \pm 0.66 $} & 30.48 \scriptsize{$ \pm 0.49 $} \\
    
    \textbf{EL2N} & 71.57 \scriptsize{$ \pm 0.28 $} & 69.24 \scriptsize{$ \pm 0.16 $} & 62.95 \scriptsize{$ \pm 0.52 $} & 28.33 \scriptsize{$ \pm 0.47 $} & 9.48 \scriptsize{$ \pm 0.21 $} & 8.86 \scriptsize{$ \pm 0.21 $} & 7.58 \scriptsize{$ \pm 0.16 $} \\
    
    \textbf{AUM} & 71.66 \scriptsize{$ \pm 0.23 $} & 69.75 \scriptsize{$ \pm 0.30 $} & 62.10 \scriptsize{$ \pm 0.46 $} & 26.56 \scriptsize{$ \pm 0.62 $} & 8.93 \scriptsize{$ \pm 0.19 $} & 5.82 \scriptsize{$ \pm 0.09 $} & 4.15 \scriptsize{$ \pm 0.11 $} \\
    
    \textbf{Moderate} & \textbf{72.10} \scriptsize{$ \pm 0.14 $} & 71.55 \scriptsize{$ \pm 0.25 $} & 69.84 \scriptsize{$ \pm 0.39 $} & 65.74 \scriptsize{$ \pm 0.21 $} & 56.96 \scriptsize{$ \pm 0.52 $} & 49.04 \scriptsize{$ \pm 0.74 $} & 34.87 \scriptsize{$ \pm 0.57 $} \\
    
    \textbf{Dyn-Unc} & 71.86 \scriptsize{$ \pm 0.12 $} & 71.65 \scriptsize{$ \pm 0.18 $} & \textbf{71.79} \scriptsize{$ \pm 0.27 $} & 69.17 \scriptsize{$ \pm 0.44 $} & 59.69 \scriptsize{$ \pm 0.30 $} & 51.36 \scriptsize{$ \pm 0.70 $} & 34.02 \scriptsize{$ \pm 0.45 $} \\
    
    \textbf{TDDS} & 70.02 \scriptsize{$ \pm 0.43 $} & 69.27 \scriptsize{$ \pm 0.74 $} & 68.03 \scriptsize{$ \pm 0.55 $} & 63.42 \scriptsize{$ \pm 0.77 $} & 55.28 \scriptsize{$ \pm 1.93 $} & 51.44 \scriptsize{$ \pm 1.36 $} & \underline{38.42} \scriptsize{$ \pm 0.80 $} \\
    
    \textbf{CCS} & 70.84 \scriptsize{$ \pm 0.41 $} & 69.08 \scriptsize{$ \pm 0.41 $} & 68.11 \scriptsize{$ \pm 0.09 $} & 63.36 \scriptsize{$ \pm 0.16 $} & 53.21 \scriptsize{$ \pm 0.54 $} & 46.27 \scriptsize{$ \pm 0.52 $} & 32.72 \scriptsize{$ \pm 0.52 $} \\
    
    \midrule
    
    \textbf{DUAL} & 71.90 \scriptsize{$ \pm 0.27 $} & \textbf{72.38} \scriptsize{$ \pm 0.27 $} & \textbf{71.79} \scriptsize{$ \pm 0.11 $} & \textbf{69.69} \scriptsize{$ \pm 0.18 $} & \underline{63.35} \scriptsize{$ \pm 0.29 $} & \underline{56.57} \scriptsize{$ \pm 1.07 $} & 37.78 \scriptsize{$ \pm 0.73 $} \\
    
    \textbf{DUAL+$\beta$ sampling} & \underline{71.96} \scriptsize{$ \pm 0.13 $} & \underline{71.92} \scriptsize{$ \pm 0.22 $} & \underline{71.69} \scriptsize{$ \pm 0.18 $} & \underline{69.23} \scriptsize{$ \pm 0.15 $} & \textbf{63.73} \scriptsize{$ \pm 0.43 $} & \textbf{59.75} \scriptsize{$ \pm 0.32 $} & \textbf{51.51} \scriptsize{$ \pm 0.68 $} \\
    
    \bottomrule
\end{tabular}
\end{table}

\begin{table}[ht]
\caption{\label{tab:image_corruption_20_tinyimagenet}Comparison of test accuracy of DUAL pruning with existing coreset selection methods under 20\% image corrupted data using ResNet-34 for Tiny-ImageNet. The model trained with the full dataset achieves \textbf{57.12\%} test accuracy on average. Results are averaged over three runs.}
\setlength{\tabcolsep}{3.1pt}
\centering
\begin{tabular}{lccccccc}
    \toprule
    \textbf{Pruning Rate} ($\rightarrow$) & \textbf{10\%} & \textbf{20\%} & \textbf{30\%} & \textbf{50\%} & \textbf{70\%} & \textbf{80\%} & \textbf{90\%} \\
    \midrule
    \textbf{Random} & 49.59 \scriptsize{$ \pm 0.93 $} & 48.64 \scriptsize{$ \pm 0.94 $} & 45.64 \scriptsize{$ \pm 0.53 $} & 41.58 \scriptsize{$ \pm 0.66 $} & 33.98 \scriptsize{$ \pm 0.55 $} & 28.88 \scriptsize{$ \pm 0.67 $} & 18.59 \scriptsize{$ \pm 0.25 $} \\
    
    \textbf{Entropy} & 50.34 \scriptsize{$ \pm 0.19 $} & 48.02 \scriptsize{$ \pm 0.49 $} & 44.80 \scriptsize{$ \pm 0.30 $} & 36.58 \scriptsize{$ \pm 0.19 $} & 25.20 \scriptsize{$ \pm 0.53 $} & 16.55 \scriptsize{$ \pm 0.40 $} & 3.32 \scriptsize{$ \pm 0.26 $} \\
    
    \textbf{Forgetting} & 46.81 \scriptsize{$ \pm 0.26 $} & 41.16 \scriptsize{$ \pm 0.28 $} & 35.58 \scriptsize{$ \pm 0.17 $} & 26.80 \scriptsize{$ \pm 0.18 $} & 17.66 \scriptsize{$ \pm 0.23 $} & 12.61 \scriptsize{$ \pm 0.04 $} & 6.01 \scriptsize{$ \pm 0.19 $} \\
    
    \textbf{EL2N} & 50.66 \scriptsize{$ \pm 0.27 $} & 47.76 \scriptsize{$ \pm 0.25 $} & 42.15 \scriptsize{$ \pm 1.02 $} & 23.42 \scriptsize{$ \pm 0.26 $} & 8.07 \scriptsize{$ \pm 0.09 $} & 6.57 \scriptsize{$ \pm 0.36 $} & 3.75 \scriptsize{$ \pm 0.13 $} \\
    
    \textbf{AUM} &  51.11 \scriptsize{$ \pm 0.73 $} & 47.70 \scriptsize{$ \pm 0.51 $} & 42.04 \scriptsize{$ \pm 0.81 $} & 20.85 \scriptsize{$ \pm 0.79 $} & 6.87 \scriptsize{$ \pm 0.24 $} & 3.75 \scriptsize{$ \pm 0.21 $} & 2.27 \scriptsize{$ \pm 0.11 $} \\
    
    \textbf{Moderate} & 51.43 \scriptsize{$ \pm 0.76 $} & 49.85 \scriptsize{$ \pm 0.23 $} & 47.85 \scriptsize{$ \pm 0.31 $} & 42.31 \scriptsize{$ \pm 0.40 $} & 35.00 \scriptsize{$ \pm 0.49 $} & 29.63 \scriptsize{$ \pm 0.67 $} & 19.51 \scriptsize{$ \pm 0.72 $} \\
    
    \textbf{Dyn-Unc} & 51.61 \scriptsize{$ \pm 0.19 $} & 51.47 \scriptsize{$ \pm 0.34 $} & \textbf{51.18} \scriptsize{$ \pm 0.58 $} & \textbf{48.88} \scriptsize{$ \pm 0.85 $} & \underline{42.52} \scriptsize{$ \pm 0.34 $} & \underline{37.85} \scriptsize{$ \pm 0.47 $} & \underline{26.26} \scriptsize{$ \pm 0.70 $} \\
    
    \textbf{TDDS} & \underline{51.53} \scriptsize{$ \pm 0.40 $} & 49.81 \scriptsize{$ \pm 0.21 $} & 48.98 \scriptsize{$ \pm 0.27 $} & 45.81 \scriptsize{$ \pm 0.16 $} & 38.05 \scriptsize{$ \pm 0.70 $} & 33.04 \scriptsize{$ \pm 0.39 $} & 22.66 \scriptsize{$ \pm 1.28 $} \\
    
    \textbf{CCS} &  50.26 \scriptsize{$ \pm 0.78 $} & 48.00 \scriptsize{$ \pm 0.41 $} & 45.38 \scriptsize{$ \pm 0.63 $} & 40.98 \scriptsize{$ \pm 0.23 $} & 33.49 \scriptsize{$ \pm 0.04 $} & 27.18 \scriptsize{$ \pm 0.66 $} & 15.37 \scriptsize{$ \pm 0.54 $} \\
    
    \midrule
    
    \textbf{DUAL} &  51.22 \scriptsize{$ \pm 0.40 $} & \textbf{52.06} \scriptsize{$ \pm 0.55 $} & \underline{50.88} \scriptsize{$ \pm 0.64 $} & \underline{47.03} \scriptsize{$\pm 0.56 $} & 40.03 \scriptsize{$ \pm 0.09$} & 34.92 \scriptsize{$ \pm 0.15$} &20.41 \scriptsize{$ \pm 1.07$} \\
    
    \textbf{DUAL+$\beta$ sampling} & \textbf{52.15} \scriptsize{$ \pm 0.25$} & \underline{51.11} \scriptsize{$ \pm 0.34$} & 50.21 \scriptsize{$ \pm 0.36$} & 46.85 \scriptsize{$ \pm 0.27$} & \textbf{42.97} \scriptsize{$ \pm 0.28$} & \textbf{38.30} \scriptsize{$ \pm $0.06 } & \textbf{27.45} \scriptsize{$ \pm 0.50$} \\
    
    \bottomrule
\end{tabular}
\end{table}

\clearpage

\subsection{Cross-architecture generalization}
\label{Appendix_cross_architecture}

In this section, we investigate the cross-architecture generalization ability of our proposed method. Specifically, we calculate the example score on one architecture and test its coreset performance on a different architecture. This evaluation, with results presented in Tables~\ref{tab:cross-arch-cnn} through \ref{tab:cross_arch_vgg16_resnet50}, aims to assess the transferability of these scores across diverse architectural designs.

\begin{table}[ht]
\caption{Cross-architecture generalization performance on CIFAR-100 from three layer CNN to ResNet-18. We report an average of five runs. `R18 $\rightarrow$ R18' stands for score computation on ResNet-18, as a baseline.}
\label{tab:cross-arch-cnn}
    \centering
    \begin{tabular}{lcccc}
    \toprule
    \multicolumn{4}{c}{}{3-layer CNN $\rightarrow$ ResNet-18} & \\
    \hline 
    Pruning Rate ($\rightarrow$) & 30\% & 50\% & 70\% & 90\%  \\
    \hline
    Random & 75.15 \scriptsize{$\pm 0.28$} & 71.68 \scriptsize{$\pm 0.31 $} & 64.86 \scriptsize{$\pm 0.39$} & 45.09 \scriptsize{$\pm 1.26 $} \\
    EL2N & 76.56 \scriptsize{$\pm 0.65$} & 71.78 \scriptsize{$\pm 0.32$} & 56.57 \scriptsize{$\pm 1.32$} & 22.84 \scriptsize{$\pm 3.54 $} \\
    Dyn-Unc & \textbf{76.61} \scriptsize{$\pm 0.75$} & 72.92 \scriptsize{$\pm 0.57 $} &65.97 \scriptsize{$\pm 0.53 $} & 44.25 \scriptsize{$\pm 2.47$} \\
    CCS & 75.29 \scriptsize{$\pm 0.20 $} & 72.06 \scriptsize{$\pm 0.19$} & \textbf{66.11} \scriptsize{$\pm 0.15$} & 36.98 \scriptsize{$\pm 1.47$} \\
    \hline
    DUAL & \textbf{76.61} \scriptsize{$\pm 0.08$} & \textbf{73.55} \scriptsize{$\pm 0.12$} & 65.97 \scriptsize{$\pm 0.18$} & 39.00 \scriptsize{$\pm 2.51 $} \\
    DUAL+$\beta$ sampling & 76.36 \scriptsize{$\pm 0.18 $} & 72.46 \scriptsize{$\pm 0.41 $} & 65.50 \scriptsize{$\pm 0.53$} & \textbf{48.91} \scriptsize{$\pm 0.60 $} \\
    \hline
    \hline
    DUAL (R18$\rightarrow$R18) & 77.43 \scriptsize{$\pm 0.18$} &  74.62 \scriptsize{$\pm 0.47 $} & 66.41 \scriptsize{$\pm 0.52 $} & 34.38 \scriptsize{$\pm 1.39 $} \\
    DUAL (R18$\rightarrow$R18) +$\beta$ sampling & 77.86 \scriptsize{$\pm 0.12$} & 74.66  \scriptsize{$\pm 0.12 $} & 69.25 \scriptsize{$\pm 0.22$} & 54.54 \scriptsize{$\pm 0.09$} \\
    \bottomrule
    \end{tabular}
    \label{tab:cnn-to-resnet18}
\end{table}

\begin{table}[ht]
\caption{Cross-architecture generalization performance on CIFAR-100 from three layer CNN to VGG-16. We report an average of five runs. `V16 $\rightarrow$ V16' stands for score computation on VGG-16, as a baseline.}
    \centering
    \begin{tabular}{lcccc}
    \toprule
    \multicolumn{4}{c}{}{3-layer CNN $\rightarrow$ VGG-16} & \\
    \hline 
    Pruning Rate ($\rightarrow$) & 30\% & 50\% & 70\% & 90\%  \\
    \hline
    Random & 69.47 \scriptsize{$\pm 0.27$} & 65.52 \scriptsize{$\pm 0.54 $} & 57.18 \scriptsize{$\pm 0.68 $} & 34.69 \scriptsize{$\pm 1.97 $} \\
    EL2N & 70.35 \scriptsize{$\pm 0.64$} & 63.66 \scriptsize{$\pm 1.49 $} & 46.12 \scriptsize{$\pm 6.87 $} & 20.85 \scriptsize{$\pm 9.03 $} \\
    Dyn-Unc & 71.18 \scriptsize{$\pm 0.96$} & 67.06 \scriptsize{$\pm 0.94$} & 58.87 \scriptsize{$\pm 0.83$} & 31.57 \scriptsize{$\pm 3.29 $} \\
    CCS & 69.56 \scriptsize{$\pm 0.33$} & 65.26 \scriptsize{$\pm 0.50 $} & 57.60 \scriptsize{$\pm 0.80 $} & 23.92 \scriptsize{$\pm 1.85 $} \\
    \hline
    DUAL & \textbf{71.75 }\scriptsize{$\pm 0.16$} & \textbf{67.91} \scriptsize{$\pm 0.27$} & 59.08 \scriptsize{$\pm 0.64$} & 29.16 \scriptsize{$\pm 2.28 $} \\
    DUAL+$\beta$ sampling &  70.78 \scriptsize{$\pm 0.41 $} & 67.47  \scriptsize{$\pm 0.44 $} & \textbf{60.33} \scriptsize{$\pm 0.32 $} & \textbf{43.92} \scriptsize{$\pm 1.15 $} \\
    \hline
    \hline
    DUAL (V16$\rightarrow$V16) & 73.63 \scriptsize{$\pm 0.62$} & 69.66 \scriptsize{$\pm 0.45$} & 58.49 \scriptsize{$\pm 0.77$} & 32.96 \scriptsize{$\pm 1.12 $} \\
    DUAL (V16$\rightarrow$V16) +$\beta$ sampling & 72.77 \scriptsize{$\pm 0.41$} & 68.93 \scriptsize{$\pm 0.23$} & 61.48 \scriptsize{$\pm 0.36$} & 42.99\scriptsize{$\pm 0.62 $} \\
    \bottomrule
    \end{tabular}
    \label{tab:cnn-to-vgg16}
\end{table}

\begin{table}[H]
\caption{Cross-architecture generalization performance on CIFAR-100 from VGG-16 to ResNet-18. We report an average of five runs. `R18 $\rightarrow$ R18' stands for score computation on ResNet-18, as a baseline.}
\label{tab:cross-arch-v19-r18}
\setlength{\tabcolsep}{3.1pt}
\centering
\begin{tabular}{lcccc}
    \toprule
    \multicolumn{1}{c}{} & \multicolumn{4}{c}{VGG-16 $\rightarrow$ ResNet-18} \\
    \hline
    Pruning Rate ($\rightarrow$) & 30\% & 50\% & 70\% & 90\% \\
    \hline
    Random & 75.15 \scriptsize{$\pm 0.28$} & 71.68 \scriptsize{$\pm 0.31 $} & 64.86 \scriptsize{$\pm 0.39$} & 45.09 \scriptsize{$\pm 1.26 $} \\
    EL2N & 76.42 \scriptsize{$\pm 0.27$} & 70.44 \scriptsize{$\pm 0.48 $} & 51.87 \scriptsize{$\pm 1.27 $} & 25.74 \scriptsize{$\pm 1.53 $} \\
    Dyn-Unc & \textbf{77.59}  \scriptsize{$\pm 0.19$} & 74.20 \scriptsize{$\pm 0.22 $} & 65.24 \scriptsize{$\pm 0.36 $} & 42.95 \scriptsize{$\pm 1.14$} \\
    CCS & 75.19 \scriptsize{$\pm 0.19$} & 71.56 \scriptsize{$\pm 0.28$} & 64.83 \scriptsize{$\pm 0.25$} & \textbf{46.08} \scriptsize{$\pm 1.23$} \\
    % \hline 
    % \textbf{DUAL}  & 73.08 \scriptsize{$\pm 0.32$} & 69.78 \scriptsize{$\pm 0.37$} & 61.93 \scriptsize{$\pm 0.62$} & 44.53 \scriptsize{$\pm 0.93$} \\
    % \textbf{DUAL+ $\beta$ sampling} & 74.39 \scriptsize{$\pm0.51 $} & 70.03 \scriptsize{$\pm 0.50$} & 61.66\scriptsize{$\pm 0.49$} & 43.09\scriptsize{$\pm 0.65$} \\
    \hline 
    DUAL & 77.40 \scriptsize{$\pm 0.36$} & \textbf{74.29} \scriptsize{$\pm 0.12$} & 63.74 \scriptsize{$\pm 0.30$} & 36.87 \scriptsize{$\pm 2.27$} \\
    DUAL+ $\beta$ sampling & 76.67 \scriptsize{$\pm 0.15 $} & 73.14 \scriptsize{$\pm 0.29 $} & \textbf{65.69} \scriptsize{$\pm 0.57 $} & 45.95 \scriptsize{$\pm 0.52 $}  \\
    \hline
    \hline
    DUAL (R18$\rightarrow$R18) & 77.43 \scriptsize{$\pm 0.18$} & 74.62 \scriptsize{$\pm 0.47$} & 66.41 \scriptsize{$\pm 0.52 $} & 34.38 \scriptsize{$\pm 1.39 $} \\
    DUAL (R18$\rightarrow$R18) +$\beta$ sampling & 77.86 \scriptsize{$\pm 0.12$} & 74.66 \scriptsize{$\pm 0.12$} & 69.25 \scriptsize{$\pm 0.22$} & 54.54 \scriptsize{$\pm 0.09$} \\
    \bottomrule
\end{tabular}
\end{table}

\begin{table}[ht]
\caption{Cross-architecture generalization performance on CIFAR-100 from ResNet-18 to VGG-16. We report an average of five runs. `V16 $\rightarrow$ V16' stands for score computation on VGG-16, as a baseline.}
\centering
\begin{tabular}{lcccc}
    \toprule
    \multicolumn{1}{c}{} & \multicolumn{4}{c}{ResNet-18 $\rightarrow$ VGG-16} \\
    \hline
    Pruning Rate ($\rightarrow$) & 30\% & 50\% & 70\% & 90\% \\
    \hline
    Random & 70.99 \scriptsize{$\pm 0.33$} & 67.34 \scriptsize{$\pm 0.21$} & 60.18 \scriptsize{$\pm 0.52$} & 41.69 \scriptsize{$\pm 0.72 $} \\
    EL2N  & 72.43 \scriptsize{$\pm 0.54$} & 65.36 \scriptsize{$\pm 0.68$} & 43.35 \scriptsize{$\pm 0.81$} & 19.92 \scriptsize{$\pm 0.89 $} \\
    Dyn-Unc  & 73.34 \scriptsize{$\pm 0.29$} & 69.24 \scriptsize{$\pm 0.39$} & 57.67 \scriptsize{$\pm 0.52$} & 31.74 \scriptsize{$\pm 0.80 $} \\
    CCS  & 71.18 \scriptsize{$\pm 0.16$} & 67.35 \scriptsize{$\pm 0.38$} & 59.77 \scriptsize{$\pm 0.43$} & 41.06 \scriptsize{$\pm 1.03 $} \\
    \hline 
    DUAL & 73.44 \scriptsize{$\pm 0.29$} & 69.87 \scriptsize{$\pm 0.35 $} & 60.07 \scriptsize{$\pm 0.47 $} & 29.74 \scriptsize{$\pm 1.70 $} \\
    DUAL +$\beta$ sampling  &\textbf{73.50} \scriptsize{$\pm 0.27$} & \textbf{70.43} \scriptsize{$\pm 0.26$} & \textbf{64.48} \scriptsize{$\pm 0.47$} & \textbf{49.61} \scriptsize{$\pm 0.49 $} \\
    \hline
    \hline
    DUAL (V16$\rightarrow$V16)  & 73.63 \scriptsize{$\pm 0.61$} & 69.66 \scriptsize{$\pm 0.45$} & 58.49 \scriptsize{$\pm 0.77$} & 32.96 \scriptsize{$\pm 1.12 $} \\
    DUAL (V16$\rightarrow$V16)+$\beta$ sampling & 72.66 \scriptsize{$\pm 0.17 $} & 68.80 \scriptsize{$\pm 0.34 $} & 60.40 \scriptsize{$\pm 0.68 $} & 41.51 \scriptsize{$\pm 0.47 $} \\
    \bottomrule
\end{tabular}
\end{table}

\begin{table}[ht]
\centering
\caption{Cross-architecture generalization performance on CIFAR-100 from VGG-16 to ResNet-50. We report an average of five runs. `R50 $\rightarrow$ R50' stands for score computation on ResNet-50, as a baseline}
\label{tab:cross_arch_vgg16_resnet50}
\begin{tabular}{lcccc}
    \toprule
    \multicolumn{1}{c}{} & \multicolumn{4}{c}{VGG-16 $\rightarrow$ ResNet-50} \\
    \hline
    Pruning Rate ($\rightarrow$) & 30\% & 50\% & 70\% & 90\% \\
    \hline
    \hline
    Random & 71.13 \scriptsize{$\pm 6.52$} & 70.31 \scriptsize{$\pm 1.20$} & 61.02 \scriptsize{$\pm 1.68$} & 41.03 \scriptsize{$\pm 3.74 $} \\
    EL2N  & 76.30 \scriptsize{$\pm 0.69$} & 67.11 \scriptsize{$\pm 3.09$} & 44.88 \scriptsize{$\pm 3.65$} & 25.05 \scriptsize{$\pm 1.76 $} \\
    Dyn-Unc  & \textbf{77.91} \scriptsize{$\pm 0.54$} & \textbf{73.52} \scriptsize{$\pm 0.41$} & 62.37 \scriptsize{$\pm 0.62$} & 39.10 \scriptsize{$\pm 4.04 $} \\
    CCS  & 75.40 \scriptsize{$\pm 0.64$} & 70.44 \scriptsize{$\pm 0.49$} & 60.10 \scriptsize{$\pm 1.24$} & 41.94 \scriptsize{$\pm 3.01 $} \\
    \hline 
    DUAL   & 77.50 \scriptsize{$\pm 0.53 $} & 71.81 \scriptsize{$\pm 0.48$} & 60.68 \scriptsize{$\pm 1.67$} & 34.88 \scriptsize{$\pm 3.47 $} \\
    DUAL +$\beta$ sampling  & 76.67 \scriptsize{$\pm 0.15 $} & 73.14 \scriptsize{$\pm 0.29 $} & \textbf{65.69} \scriptsize{$\pm 0.57 $} &\textbf{45.95} \scriptsize{$\pm 0.52 $} \\
    \hline
    \hline
    DUAL (R50$\rightarrow$R50)  & 77.82 \scriptsize{$\pm 0.64$} & 73.66 \scriptsize{$\pm 0.85$} & 52.12 \scriptsize{$\pm 2.73 $} & 26.13 \scriptsize{$\pm 1.96 $} \\
    DUAL (R50$\rightarrow$R50)+$\beta$ sampling  & 77.57 \scriptsize{$\pm 0.23$} & 73.44 \scriptsize{$\pm 0.87$} & 65.17 \scriptsize{$\pm 0.96$} & 47.63 \scriptsize{$\pm 2.47 $} \\
    \bottomrule
\end{tabular}
\end{table}
\clearpage

\subsection{Image Classification on Long-tailed Distributions}
\label{Appendix_long_tail}
We also conducted experiments on long-tailed versions of CIFAR-10 and CIFAR-100, following the procedure of \citet{cao2019learning}. These long-tailed datasets were constructed using an imbalance ratio, $\rho$, defined as the ratio between the sample sizes of the most frequent class ($n_{\max}$) and the least frequent class ($n_{\min}$), i.e. $\rho = n_{\max} / n_{\min}$. The class distribution in this long-tailed setup exhibits an exponential decay in sample sizes across classes. For all pruning ratios, we compared our method against several baselines: Random, EL2N, Dyn-Unc, and CCS. The results presented in Table~\ref{tab:cifar10-lt} and~\ref{tab:cifar100-lt} demonstrate that DUAL pruning (along with the Beta sampling) achieves superior performance compared to other baselines.

\begin{table}[ht]
\caption{Test accuracy on long-tailed imbalance on CIFAR-10. The test accuracy on a full dataset is 89.98 $(\rho=10)$ and 75.03 $(\rho=100)$. We report the average performance across three runs.}

\label{tab:cifar10-lt}
\setlength{\tabcolsep}{2.5pt}
\centering
\resizebox{\linewidth}{!}{
\begin{tabular}{c|ccccc|ccccc}
    \toprule
    \multicolumn{1}{c}{} & \multicolumn{10}{c}{CIFAR-10-LT} \\
    \hline
    \multicolumn{1}{c|}{Imbalance Ratio} & \multicolumn{5}{c|}{10} & \multicolumn{5}{c}{100} \\
    \hline
    Pruning Rate & 30\% & 50\% & 70\% & 80\% & 90\% & 30\% & 50\% & 70\% & 80\% & 90\%\\
    \hline
    \hline
    Random & 42.48 {\scriptsize \num{+-0.45}} & 28.20 {\scriptsize \num{+-0.07}} & 18.85 {\scriptsize \num{+-0.24}} & 10.00 {\scriptsize \num{+-0.00 }} & 10.00 {\scriptsize \num{+-0.00}} & 28.23 {\scriptsize \num{+-0.09 }} & 19.36 {\scriptsize \num{+-0.18}} & 10.00 {\scriptsize \num{+-0.00}} & 10.00 {\scriptsize \num{+-0.00}} & 10.00 {\scriptsize \num{+-0.00}} \\

    EL2N & 89.42 {\scriptsize \num{+-0.20}} & 87.59 {\scriptsize \num{+-0.97}} & 68.15 {\scriptsize \num{+-3.44}} & 52.90 {\scriptsize \num{+-1.87}} & 33.25 {\scriptsize \num{+-0.41}} & 72.70 {\scriptsize \num{+-1.58}} & 66.06 {\scriptsize \num{+-4.27 }} & 52.90 {\scriptsize \num{+-2.88}} & 41.79 {\scriptsize \num{+-2.61}} & 30.30 {\scriptsize \num{+-0.58}} \\
    
    Dyn-Unc & 89.64 {\scriptsize \num{+-0.28}} & 87.60 {\scriptsize \num{+-0.39}} & 67.60 {\scriptsize \num{+-4.34}} & 53.05 {\scriptsize \num{+-0.88}} & 39.16 {\scriptsize \num{+-1.94}} & \textbf{74.40} {\scriptsize \num{+-1.32}} & \textbf{70.22} {\scriptsize \num{+-1.60 }} & 51.89 {\scriptsize \num{+-3.08}} & 41.27 {\scriptsize \num{+-2.34}} & 31.24 {\scriptsize \num{+-0.23}} \\

    CCS & 84.42 {\scriptsize \num{+-0.89 }} & 73.04 {\scriptsize \num{+-1.20 }} & 47.07 {\scriptsize \num{+-0.68}} & 37.38 {\scriptsize \num{+-0.36}} & 27.91 {\scriptsize \num{+-0.96 }} & 63.18 {\scriptsize \num{+-1.56}} & 45.46 {\scriptsize \num{+-1.33 }} & 32.66 {\scriptsize \num{+-0.63}} & 29.38 {\scriptsize \num{+-0.71}} & 24.10 {\scriptsize \num{+-0.97 }} \\
    
    DUAL & \textbf{89.67} {\scriptsize \num{+-0.40 }} & \textbf{88.75} {\scriptsize \num{+-0.36 }} & 75.38 {\scriptsize \num{+-3.41}} & 56.70 {\scriptsize \num{+-2.83}} & 43.58 {\scriptsize \num{+-2.45}} & 72.94 {\scriptsize \num{+-1.14}} & 69.66 {\scriptsize \num{+-0.73}} & 52.80 {\scriptsize \num{+-1.00}} & 38.32  {\scriptsize \num{+-1.28}} & 25.30 {\scriptsize \num{+-1.28}} \\

    DUAL + $\beta$ & 89.49 {\scriptsize \num{+-0.21}} & 88.12 {\scriptsize \num{+-0.61}} & \textbf{76.00} {\scriptsize \num{+-2.79 }} & \textbf{78.31} {\scriptsize \num{+-2.26 }} & \textbf{71.27} {\scriptsize \num{+-1.44}} & 73.81 {\scriptsize \num{+-2.06 }} & 68.89 {\scriptsize \num{+-0.24 }} & \textbf{52.95} {\scriptsize \num{+-2.79 }} & \textbf{46.49} {\scriptsize \num{+-1.80 }} & \textbf{36.43} {\scriptsize \num{+-1.00 }} \\

    \bottomrule
\end{tabular}}
\end{table}

\begin{table}[ht]
\caption{Test accuracy on long-tailed imbalance on CIFAR-100. The test accuracy on a full dataset is 62.92 $(\rho=10)$ and 41.67 $(\rho=100)$. We report the average performance across three runs.}

\label{tab:cifar100-lt}
\setlength{\tabcolsep}{2.5pt}
\centering
\resizebox{\linewidth}{!}{
\begin{tabular}{c|ccccc|ccccc}
    \toprule
    \multicolumn{1}{c}{} & \multicolumn{10}{c}{CIFAR-100-LT} \\
    \hline
    \multicolumn{1}{c|}{Imbalance Ratio} & \multicolumn{5}{c|}{10} & \multicolumn{5}{c}{100} \\
    \hline
    Pruning Rate & 30\% & 50\% & 70\% & 80\% & 90\% & 30\% & 50\% & 70\% & 80\% & 90\%\\
    \hline
    \hline
    Random & 32.89 {\scriptsize \num{+-0.23 }} & 18.79 {\scriptsize \num{+-0.75 }} & 8.26 {\scriptsize \num{+-0.41 }} & 5.43 {\scriptsize \num{+-0.07 }} & 3.23 {\scriptsize \num{+-0.05 }} & 22.88 {\scriptsize \num{+-0.87 }} & 11.45 {\scriptsize \num{+-0.11 }} & 5.90 {\scriptsize \num{+-0.15 }} & 3.96 {\scriptsize \num{+-0.05 }} & 2.48 {\scriptsize \num{+-0.02 }} \\

    EL2N & 57.57 {\scriptsize \num{+-0.50 }} & 47.23 {\scriptsize \num{+-0.46 }} & 21.38 {\scriptsize \num{+-0.33 }} & 13.92 {\scriptsize \num{+-0.97 }} & 9.54 {\scriptsize \num{+-0.19 }} & 37.59 {\scriptsize \num{+-2.13 }} & 24.76 {\scriptsize \num{+-1.87 }} & 12.33 {\scriptsize \num{+-0.54 }} & 9.42 {\scriptsize \num{+-0.26 }} & 6.64 {\scriptsize \num{+-0.02 }} \\
    
    Dyn-Unc & 58.09 {\scriptsize \num{+-0.85 }} & 46.68 {\scriptsize \num{+-0.69 }} & 25.95 {\scriptsize \num{+-2.16 }} & 20.80 {\scriptsize \num{+-0.68 }} & 13.48 {\scriptsize \num{+-0.50 }} & \textbf{37.82} {\scriptsize \num{+-1.08 }} & 26.88 {\scriptsize \num{+-0.38 }} & 15.41 {\scriptsize \num{+-0.42 }} & 12.47 {\scriptsize \num{+-0.55 }} & 9.52 {\scriptsize \num{+-0.12 }} \\

    CCS & 46.51 {\scriptsize \num{+-0.56 }} & 34.85 {\scriptsize \num{+-0.79 }} & 18.08 {\scriptsize \num{+-0.80 }} & 11.34 {\scriptsize \num{+-0.30 }} & 6.06 {\scriptsize \num{+-0.43 }} & 27.46 {\scriptsize \num{+-0.27 }} & 17.85 {\scriptsize \num{+-0.76 }} & 11.43 {\scriptsize \num{+-0.33 }} & 8.25 {\scriptsize \num{+-0.66 }} & 4.34 {\scriptsize \num{+-0.53 }} \\
    
    DUAL & \textbf{58.50} {\scriptsize \num{+-0.27 }} & 54.11 {\scriptsize \num{+-0.27 }} & 39.15 {\scriptsize \num{+-1.43 }} & 30.10 {\scriptsize \num{+-0.97 }} & 18.80 {\scriptsize \num{+-1.17 }} & 36.35 {\scriptsize \num{+-0.66 }} & 30.19 {\scriptsize \num{+-1.58 }} & 20.47 {\scriptsize \num{+-0.30 }} & 17.76 {\scriptsize \num{+-0.47 }} & 12.52 {\scriptsize \num{+-0.56 }} \\

    DUAL + $\beta$ & 58.05 {\scriptsize \num{+-0.34}} & \textbf{54.88} {\scriptsize \num{+-0.36}} & \textbf{43.53} {\scriptsize \num{+-0.66}} &\textbf{35.87} {\scriptsize \num{+- 1.75 }} & \textbf{27.13} {\scriptsize \num{+- 1.49 }} & 37.04 {\scriptsize \num{+- 0.97 }} & \textbf{32.25} {\scriptsize \num{+- 0.45 }} & \textbf{21.94} {\scriptsize \num{+- 1.27 }} & \textbf{19.38} {\scriptsize \num{+- 0.77}} & \textbf{15.42} {\scriptsize \num{+- 0.32}} \\
    
    \bottomrule
\end{tabular}}
\end{table}

\clearpage

\subsection{Comparison with Dynamic Pruning Methods}
\label{Appendix_Dynamic_Pruning}
In this section, we present several experiments comparing recent dynamic pruning methods, such as those by \citet{yuan2025instance} and \citet{qininfobatch}, with static approaches, including DUAL pruning. We first highlight two key differences between static and dynamic data pruning.
\begin{itemize}
    \item Compared to static pruning, dynamic pruning maintains access to the entire original dataset throughout training, allowing it to fully leverage all available information in the original dataset.
    \item While both aim to improve training efficiency, their underlying goals differ slightly. Static data pruning seeks to identify a ``fixed'' subset that reduces the dataset size while preserving as much information about the original dataset as possible. This subset can then serve as a new, independent dataset, reusable across various model architectures and experimental setups. In contrast, dynamic data pruning enhances training efficiency within a single training session by pruning data dynamically on the fly. However, this approach requires storing the entire original dataset, making dynamic pruning less memory-efficient and not reusable.
\end{itemize}

\paragraph{Standard Training.}
We conducted experiments on CIFAR-10 and CIFAR-100 with ResNet-18, using the same hyperparameters as described in Section~\ref{sec:experiment}.
% All reported results are averaged over five runs. 
We first tested dynamic random pruning, which dynamically prunes randomly selected samples from the entire dataset at each epoch. Notably, dynamic random pruning significantly outperformed all static baselines, achieving test accuracies of 91.82\% on CIFAR-10 and 72.8\% on CIFAR-100 at a pruning ratio of 90\%. We also evaluated the methods in \citet{yuan2025instance, qininfobatch} and the results are provided in Figure~\ref{fig: dynstatic}. Overall, dynamic methods consistently outperform static baselines. However, at lower pruning ratios (e.g., on CIFAR-10), DUAL can outperform dynamic methods under a similar computational budget.

We believe this performance gap stems from differences in accessible information: static methods are limited to 10\% of the data, while dynamic methods use the full dataset. Consequently, the performance gap widens even further at aggressive pruning ratios. To validate this, we plot how often each sample was seen during training. The plot in Figure~\ref{fig: frequency} shows that static methods are confined to a subset, while dynamic ones use nearly all data—rendering direct comparison somewhat unfair. Indeed, dynamic pruning methods might be better compared with scheduled batch-selection approaches, such as curriculum learning, rather than static pruning methods.

\paragraph{Label Noise Setting.}
We also evaluated these methods under label noise conditions. In fact, \citet{yuan2025instance} conclude that IES cannot prune any samples (corrupted or not) when label noise is introduced. Similarly, InfoBatch~\citep{qininfobatch} tends to retain harder (and often noisy) samples, as it removes only easy examples during training. In contrast, DUAL effectively filters noisy samples, improving performance even beyond full-data training.

We conducted experiments on CIFAR-100 with a 40\% label noise setting (full-train test accuracy: 52.74\%) to verify this explanation. DUAL achieves over 70\% test accuracy at a 50\% pruning ratio as can be seen in Table~\ref{tab:label_noise_40_cifar} in Appendix~\ref{Appendix_labelnoise_experiments}, whereas InfoBatch achieves only 51.24\% accuracy with a similar number of iterations. Under similar iterations, random dynamic pruning achieves 51.81\% test accuracy, which still outperforms random static pruning. Lastly, IES~\citep{yuan2025instance} prunes only 1.7\% of samples during training (consistent with the original report in their paper), resulting in 51.95\% test accuracy. Furthermore, our static method can create fixed subsets in which nearly all noisy samples have been removed, resulting in high-quality datasets that can be preserved for future use.

\begin{figure}
    \centering
    \includegraphics[width=0.7\linewidth]{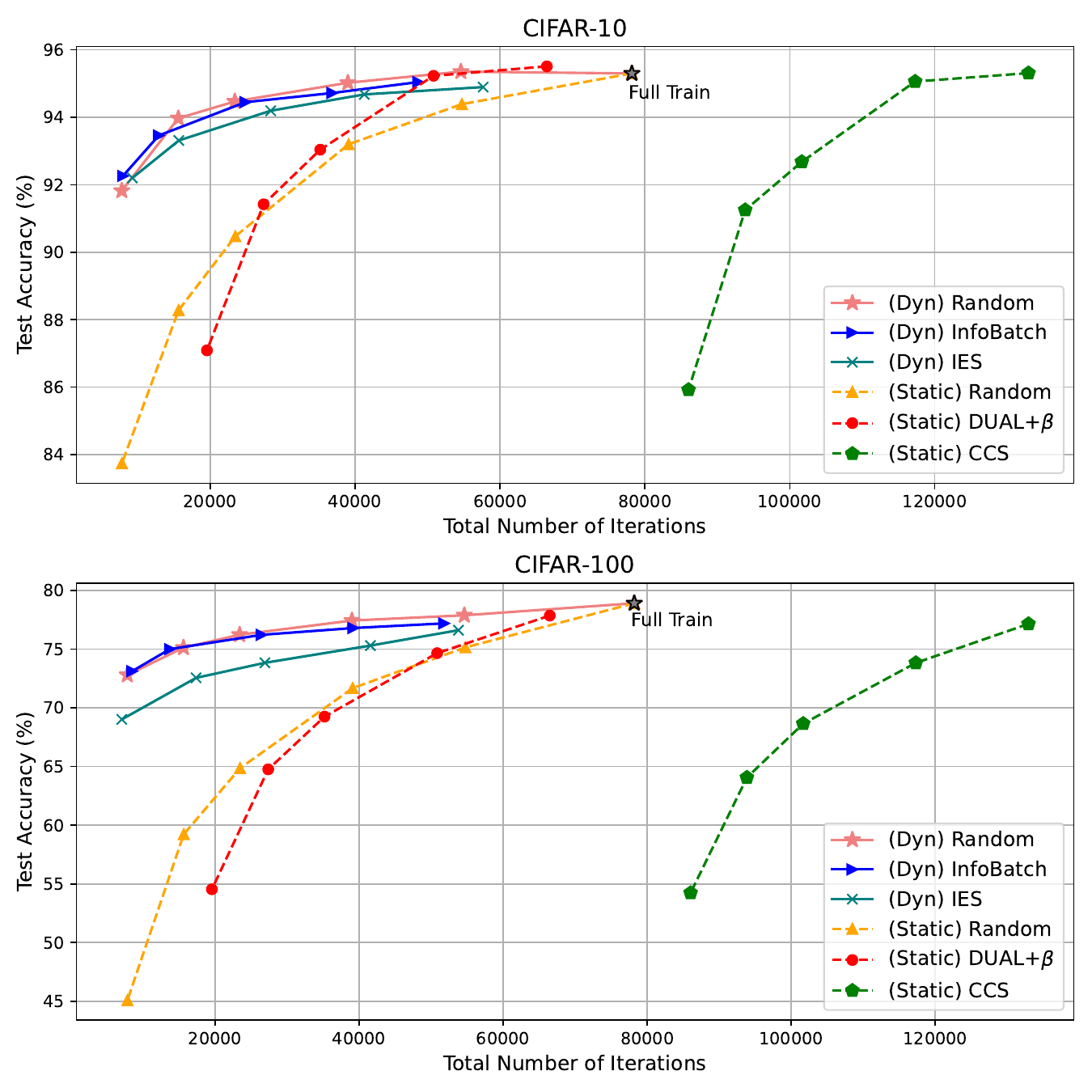}
    \caption{This figure shows test accuracy (y-axis) versus the total number of iterations needed to fully train each subset (x-axis).  The top figure corresponds to CIFAR-10, and the bottom to CIFAR-100. Results are averaged over five random seeds. For the static pruning methods, we plotted the number of iterations and test accuracy at pruning ratios of 30\%, 50\%, 70\%, 80\%, and 90\%, which correspond to the markers on each line from right to left. We adjust the total training epochs of InfoBatch following the procedure in the original paper, in order to evaluate its performance when the total number of iterations is reduced. For IES, we adjust the pruning threshold to evaluate its performance under a reduced total number of training iterations. Results demonstrate that dynamic pruning methods—including even random dynamic pruning—outperform static baselines in accuracy while requiring fewer iterations. Among static methods, DUAL is both the most efficient and the best-performing. However, it does not achieve as favorable a time-performance trade-off as dynamic methods. Still, at low pruning ratios—where more information is available—DUAL performs comparably to dynamic approaches.}
    \label{fig: dynstatic}
\end{figure}

\begin{figure}
    \centering
    \includegraphics[width=0.99\linewidth]{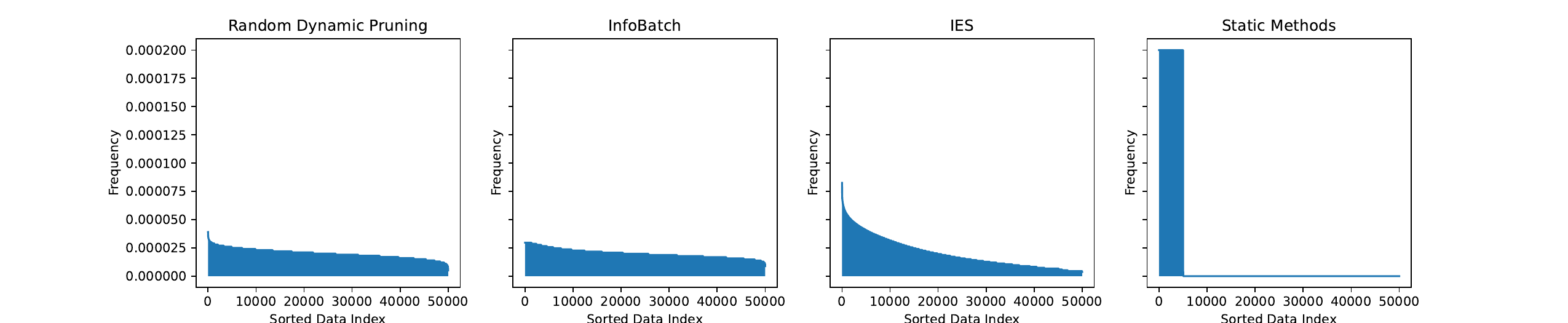}
    \caption{This figure shows the selection frequency of each sample at a high pruning ratio (90\%). The x-axis represents data indices sorted in descending order by selection frequency, and the y-axis indicates the normalized selection ratio—i.e., how often each data point was selected, divided by the total number of data points seen during training (values sum to 1). Methods like Random Dynamic Pruning, InfoBatch, and IES use nearly the entire dataset during training, making them incomparable to static pruning methods, which are restricted to only 10\% of the data.}
    \label{fig: frequency}
\end{figure}

\clearpage
\subsection{Ablation Study on Beta Sampling}
\label{Appendix_beta_samapling}
We study the impact of our Beta sampling on existing score metrics. We apply our Beta sampling strategy to forgetting, EL2N, and Dyn-Unc scores of CIFAR10 and 100. By comparing Beta sampling with the vanilla threshold pruning using scores, with results presented in Tables~\ref{tab:abl_beta_cifar10_100_90} and \ref{tab:abl_beta_cifar10_100_80}, we observe that prior score-based methods become competitive, outperforming random pruning when Beta sampling is adjusted.

\begin{table}[ht]
\caption{Comparison on CIFAR-10 and CIFAR-100 for $90\%$ pruning rate. 
We report average accuracy with five runs. The best performance is in bold in each column.}
\label{tab:abl_beta_cifar10_100_90}
\setlength{\tabcolsep}{3.1pt}
\centering
\begin{tabular}{lcc|cc}
    \toprule
    \multicolumn{1}{c}{} & \multicolumn{2}{c|}{CIFAR-10} & \multicolumn{2}{c}{CIFAR-100}\\
    \midrule
    Method & Thresholding & $\beta$-Sampling & Thresholding & $\beta$-Sampling \\
    \midrule
    Random &  \textbf{83.74} \scriptsize{$\pm$ 0.21} & 83.31 (-0.43) \scriptsize{$\pm$ 0.14} & \textbf{45.09} \scriptsize{$\pm$ 1.26} & 51.76 (+6.67) \scriptsize{$\pm$ 0.25} \\
    EL2N &  38.74 \scriptsize{$\pm$ 0.75} & 87.00 (+48.26) \scriptsize{$\pm$ 0.45} & 8.89 \scriptsize{$\pm$ 0.28} & 53.97 (+45.08)  \scriptsize{$\pm$ 0.63}  \\
    Forgetting &  46.64 \scriptsize{$\pm$ 1.90} & 85.67 (+39.03) \scriptsize{$\pm$0.13} & 26.87 \scriptsize{$\pm$ 0.73} & 52.40 (+25.53) \scriptsize{$\pm$ 0.43} \\
    Dyn-Unc &  59.67 \scriptsize{$\pm$ 1.79} & 85.33 (+32.14) \scriptsize{$\pm$ 0.20} & 34.57 \scriptsize{$\pm$ 0.69} & 51.85 (+17.28) \scriptsize{$\pm$ 0.35}   \\
    \hline
    DUAL & 54.95 \scriptsize{$\pm$ 0.42} & \textbf{87.09} (+31.51) \scriptsize{$\pm$ 0.36} & 34.28 \scriptsize{$\pm$ 1.39}    & \textbf{54.54} (+20.26) \scriptsize{$\pm$ 0.09}  \\
    \bottomrule
\end{tabular}
\end{table}

\begin{table}[ht]
\caption{Comparison on CIFAR-10 and CIFAR-100 for $80\%$ pruning rate. 
We report average accuracy with five runs. The best performance is in bold in each column.}
\label{tab:abl_beta_cifar10_100_80}
\setlength{\tabcolsep}{3.1pt}
\centering
\begin{tabular}{lcc|cc}
    \toprule
    \multicolumn{1}{c}{} & \multicolumn{2}{c|}{CIFAR-10} & \multicolumn{2}{c}{CIFAR-100}\\
    \midrule
    Method & Thresholding & $\beta$-Sampling & Thresholding & $\beta$-Sampling \\
    \midrule
    Random &  \textbf{88.28} \scriptsize{$\pm$ 0.17} & 88.83 (+0.55) \scriptsize{$\pm$ 0.18} & \textbf{59.23} \scriptsize{$\pm$ 0.62} & 61.74 (+2.51) \scriptsize{$\pm$ 0.15} \\
    EL2N &  74.70 \scriptsize{$\pm$ 0.45} & 87.69 (+12.99) \scriptsize{$\pm$ 0.98} &19.52 \scriptsize{$\pm$ 0.79} & 63.98 (+44.46) \scriptsize{$\pm$ 0.73}  \\
    Forgetting &  75.47 \scriptsize{$\pm$ 1.27} & 90.86 (+15.39) \scriptsize{$\pm$ 0.07} & 39.09 \scriptsize{$\pm$ 0.41} & 63.29 (+24.20) \scriptsize{$\pm$ 0.13} \\
    Dyn-Unc &  83.32 \scriptsize{$\pm$ 0.94} & 90.80 (+7.48) \scriptsize{$\pm$ 0.30} & 55.01 \scriptsize{$\pm$ 0.55} & 62.31 (+7.30) \scriptsize{$\pm$ 0.23}  \\
    \hline
    DUAL & 82.02 \scriptsize{$\pm$ 1.85} & \textbf{91.42} (+9.68) \scriptsize{$\pm$ 0.35} & 56.57 \scriptsize{$\pm$ 0.57}    & \textbf{64.76} (+8.46) \scriptsize{$\pm$ 0.23} \\
    \bottomrule
\end{tabular}
\end{table}

We also study the impact of our pruning strategy with DUAL score combined with Beta sampling. We compare different sampling strategies: vanilla thresholding, stratified sampling \cite{zheng2022coverage}, and Beta sampling on CIFAR10 and 100, at 80\% and 90\% pruning rates. The results, presented in Table~\ref{tab:abl_ours_ccs} indicate that our proposed Beta sampling mostly performs the best, especially with the high pruning ratio. 

\begin{table}[ht]
\centering
\caption{Comparison on Sampling Strategy}
\begin{tabular}{lccccc}
\toprule
\multicolumn{6}{c}{CIFAR10} \\ 
\cmidrule(lr){1-6}
Pruning Rate & $30\%$ & $50\%$ & $70\%$ & $80\%$ & $90\%$ \\
\midrule
DUAL & 95.35 & 95.08 & 91.95 & 81.74 & 55.58 \\
DUAL + CCS & \textbf{95.54} & 95.00 & 92.83 & 90.49 & 81.67 \\
DUAL + $\beta$ & 95.51 & \textbf{95.23} & \textbf{93.04} & \textbf{91.42} & \textbf{87.09} \\

\midrule 
\multicolumn{6}{c}{CIFAR100} \\ 
\cmidrule(lr){1-6}
Pruning Rate & $30\%$ & $50\%$ & $70\%$ & $80\%$ & $90\%$ \\
\midrule
DUAL & 77.61 & \textbf{74.86} & 66.39 & 56.50 & 34.28 \\
DUAL + CCS & 75.21 & 71.53 & 64.30 & 59.09 & 45.21 \\
DUAL + $\beta$ & \textbf{77.86} & 74.66 & \textbf{69.25} & \textbf{64.76} & \textbf{54.54} \\
\bottomrule
\end{tabular}
\label{tab:abl_ours_ccs}
\end{table}

\clearpage
\section{Detailed Explanation about Beta Sampling}
\label{Appendix_explanation_of_dual_pruning}

Here, we provide details on our choice of Beta sampling. Appendix~\ref{Appendix_coreset_visualization} shows the visualization of the selected data using the Beta sampling. Appendix~\ref{Appendix_algorithm} presents the full algorithm for our DUAL pruning method with the suggested Beta sampling strategy.

We begin by explaining why the Beta distribution is selected as the sampling distribution. The domain of Beta distribution is $[0, 1]$, which naturally aligns with the range of prediction means. Moreover, the probability density function (PDF) of the Beta distribution can be shaped to decay at both tails, ensuring that samples with extreme scores are rarely selected. While other distributions, such as Gaussian, could also be considered, their support spans $\sR$. As a result, they may assign non-negligible probability to values far outside the desired range unless their standard deviation is made extremely small.

We define our sampling distribution $\mathrm{Beta}(\alpha_r, \beta_r)$ as follows:
\begin{align}
\label{eq:alpha_beta_detail}
\begin{split}
    \beta_r &= 15\left(1-\mu_\gD\right)\left(1-r^{c_\gD}\right)\\
    \alpha_r &= 15-\beta_r,
\end{split}
\end{align}
where $\mu_\gD\in[0, 1]$ is the probability mean of the highest DUAL score training sample. To ensure stability, we compute this as the average probability mean of the 10 highest DUAL score training samples. Additionally, as mentioned in Appendix~\ref{Appendix_Technical_Details_of_Ours}, we set the value of $C$ to 15 across all experiments. Recall that the variance of a Beta distribution with parameters is given by $\frac{\alpha\beta}{(\alpha+\beta)^2(\alpha+\beta+1)}$, so increasing $C$ leads to a lower variance, an effect illustrated in \cref{fig:C_varing_beta}. This enables a more concentrated sampling distribution in a specific region, improving the effectiveness of the sampling process by reducing unnecessary spread. In an implementation, we add 1 to $\alpha_r$ to provide a more targeted sampling.

\begin{figure}[ht]
    \centering
    \includegraphics[width=0.8\linewidth]{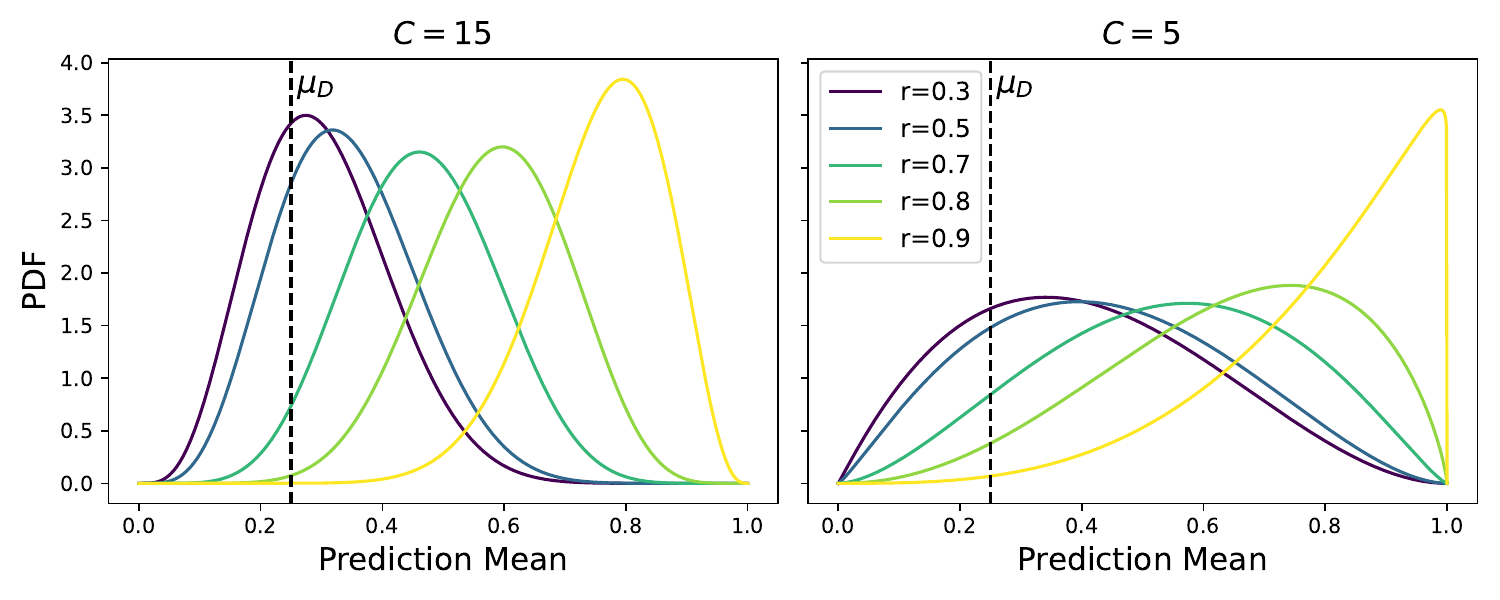}
    \caption{Visualization of Beta distribution for varying $C$. Large $C$ enables more concentrated targeting.}
    \label{fig:C_varing_beta}
\end{figure}

Now we justify our choice of parameters $\alpha_r$ and $\beta_r$ in the Beta distribution. When the pruning ratio is set to zero, $\alpha_r$ and $\beta_r$ are configured so that the mean of the Beta distribution (which is $\frac{\alpha}{\alpha+\beta}$) matches the prediction mean of the highest-scoring sample. This allows the sampling to focus on high-score samples at low pruning ratios. To gradually include easier samples as the pruning ratio increases, we set parameters $\alpha_r$ and $\beta_r$ depending on the pruning ratio. While BOSS \citep{acharyabalancing} adjusts these parameters such that the mode of the Beta distribution (which is $\frac{\alpha-1}{\alpha+\beta-2}$ for $\alpha, \beta > 1$) scales linearly with the pruning ratio $r$, we adopt a non-linear scaling by raising $r$ to the power of $c_D$. This results in a PDF that is remains almost stationary at low pruning ratios, but gradually shifts toward easier samples in a polynomial manner as the pruning ratio increases.

The hyperparameter $c_D$ is chosen based on the relative complexity of the dataset. We assumed that a larger dataset, which may have more samples per class, tends to be relatively easier. A higher value of $c_D$ results in smaller $\beta_r$, thereby increasing the mean and reducing the variance of the Beta distribution. For more difficult datasets, sampling easier examples becomes more important, which justifies using a larger $c_D$. Figure~\ref{fig:beta_pdf} illustrates the Beta PDF for different values of $c_\gD$. In both subplots, we set $\mu_\gD$ as 0.25. The left subplot shows the PDF with $c_\gD=5.5$, which corresponds to the value used in our CIFAR-10 experiments, while the right subplot shows the case where $c_\gD=4$, which is used in CIFAR-100.
\begin{figure}[ht]
    \centering
    \includegraphics[width=0.8\linewidth]{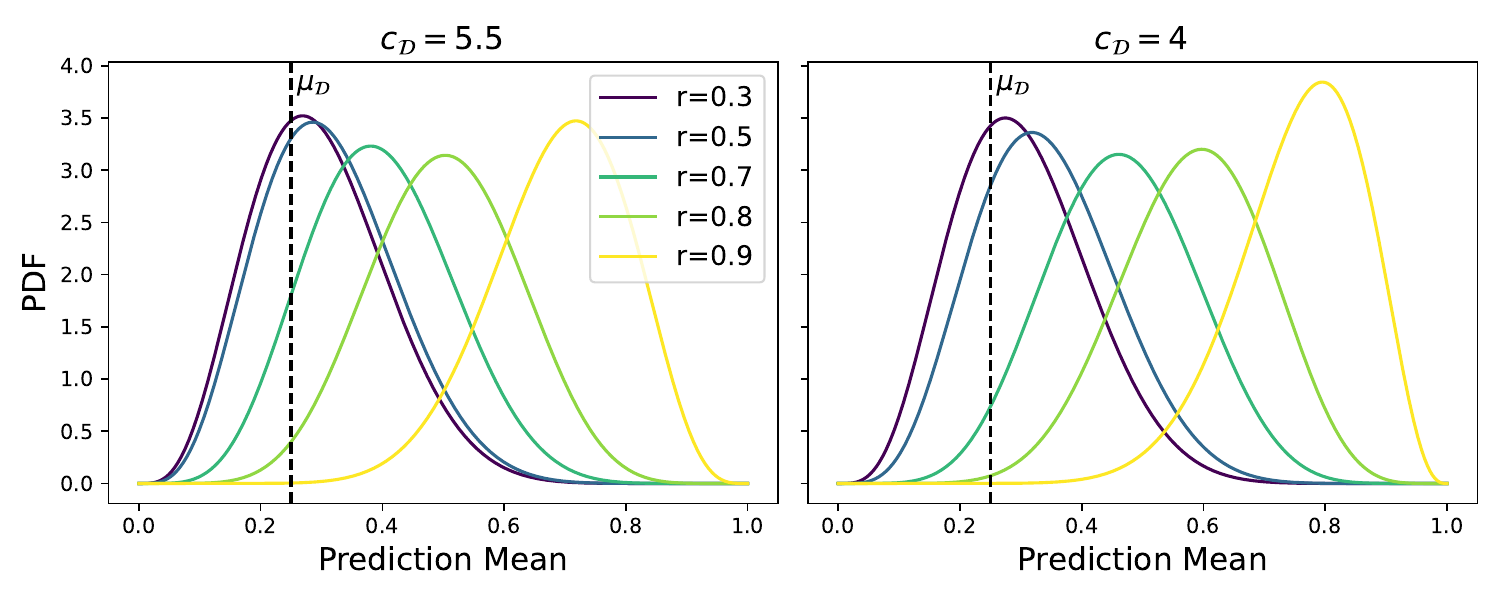}
    \caption{Visualization of Beta distribution for varying $c_\gD$. Left subplot corresponds to the value used in CIFAR-10, and the right subplot corresponds to the value used in CIFAR-100.}
    \label{fig:beta_pdf}
\end{figure}

\clearpage
\subsection{Visualization of Selected Data with Beta Sampling}
\label{Appendix_coreset_visualization}
We provide visualizations of Beta sampling in Figure~\ref{fig:cifar_coreset_visualization_beta}. This figure illustrates, in respective columns: (i) sample selection probabilities for the coreset; (ii) examples of selected samples when forming a 70\%, 30\% and 10\% coreset; and (iii) the samples pruned at these corresponding rates. The visualization shows that Beta sampling increasingly favors the selection of easier samples for the coreset as the pruning ratio increases.
\begin{figure}[htbp] 
    \centering
    \begin{subfigure}{0.7\textwidth}
        \centering
        \includegraphics[width=\textwidth]{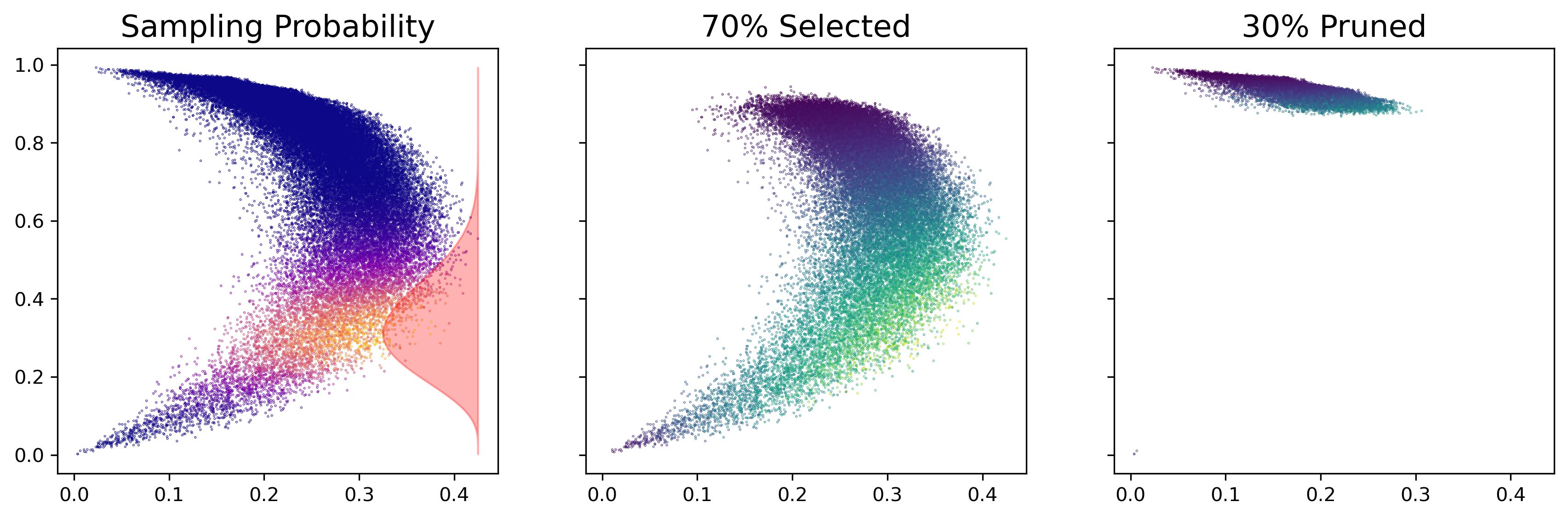}
        \caption{CIFAR-100 with pruning ratio 30\%}
        \label{fig:cifar_beta_pr30}
    \end{subfigure}
    
    \begin{subfigure}{0.7\textwidth}
        \centering
        \includegraphics[width=\textwidth]{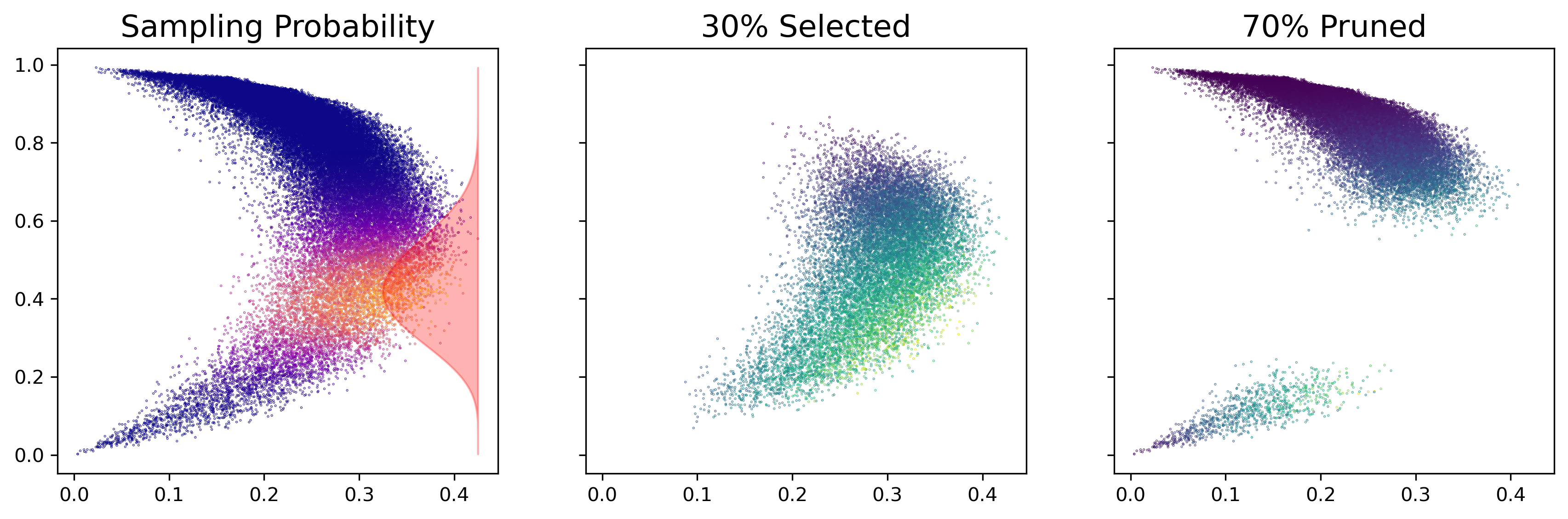} 
        \caption{CIFAR-100 with pruning ratio 70\%}
        \label{fig:cifar_beta_pr70}
    \end{subfigure}

    \begin{subfigure}{0.7\textwidth}
        \centering
        \includegraphics[width=\textwidth]{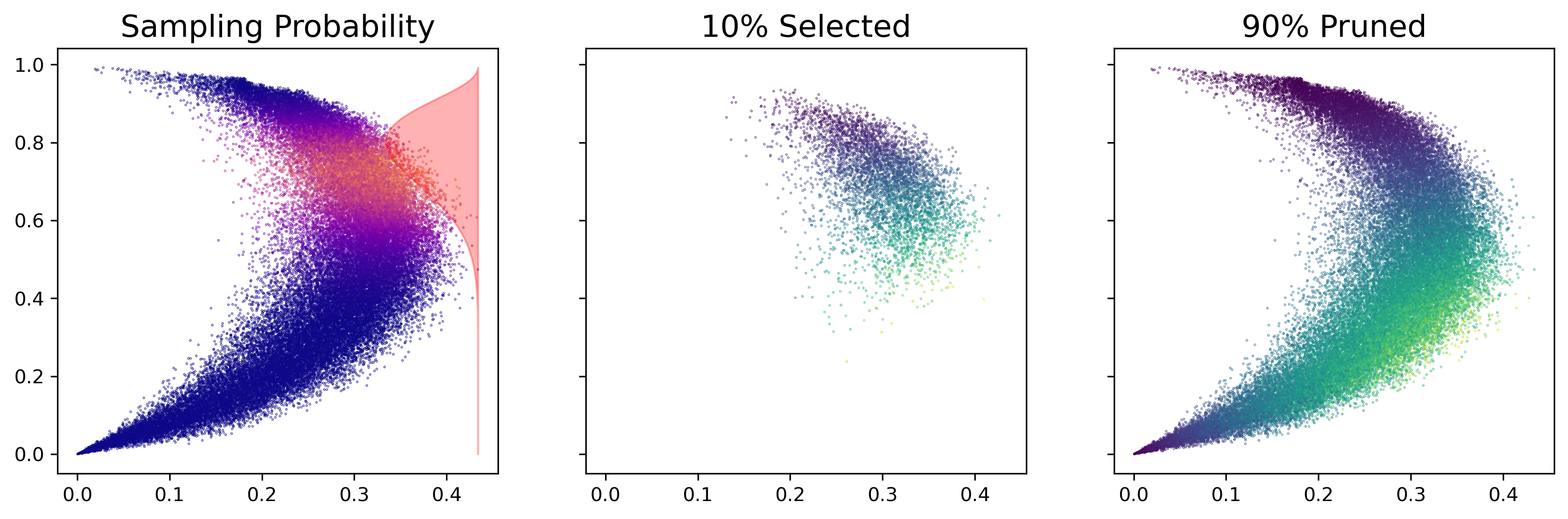} 
        \caption{CIFAR-100 with pruning ratio 90\%}
        \label{fig:cifar_beta_pr90}
    \end{subfigure}

    \caption{Pruning visualization on CIFAR-100.}
    \label{fig:cifar_coreset_visualization_beta}
\end{figure}
\clearpage

\subsection{Algorithm of Proposed Pruning Method}
\label{Appendix_algorithm}
The detailed algorithms for DUAL pruning and Beta sampling are as follows:
\begin{algorithm}[htb]
\begin{algorithmic}
    \caption{DUAL pruning + $\beta$ sampling}
    \label{alg:DUAL}

    \INPUT Training dataset $\gD$, pruning ratio $r$, dataset simplicity $c_\gD$, training epoch $T$, window length $J$.
    
    \OUTPUT Subset $\gS\subset\gD$ such that $\lvert\gS\rvert = (1-r)\lvert\gD\rvert$
    
    \FOR{$(\vx_i, y_i) \in \gD$}
        \FOR{$k = 1, \cdots, T-J+1$}
            \STATE $\Bar{\mathbb{P}}_k(\vx_i, y_i) \leftarrow \frac{1}{J}\sum_{j=0}^{J-1} \mathbb{P}_{k+j}(y_i\mid \vx_i)$
                \COMMENT {Example Difficulty}
            \STATE $\mathbb{U}_k(\vx_i, y_i) \leftarrow \sqrt{\frac{1}{J-1}\sum_{j=0}^{J-1} \left[ \mathbb{P}_{k+j}(y_i\mid \vx_i) - \Bar{\mathbb{P}}_k(\vx_i, y_i) \right]^2}$
                \COMMENT{Prediction Uncertainty}
            \STATE $\mathrm{DUAL}_k(\vx_i, y_i) \leftarrow (1-\Bar{\mathbb{P}}_k(\vx_i, y_i)) \times \mathbb{U}_k(\vx_i, y_i)$
        \ENDFOR
        
        \STATE $\mathrm{DUAL}(\vx_i, y_i) \leftarrow \frac{1}{T-J+1}\sum_{k=1}^{T-J+1} \mathrm{DUAL}_k(\vx_i, y_i)$
        
    \ENDFOR

    \IF{$\beta$-sampling}
    \FOR{$(\vx_i, y_i) \in \gD$}
        \STATE $\bar{\mathbb{P}}(\vx_i, y_i) \leftarrow \frac{1}{T}\sum_{k=1}^T \mathbb{P}_k(y_i \mid \vx_i)$
        
        \STATE $\varphi \left(\bar{\mathbb{P}}(\vx_i, y_i)\right) \leftarrow$ PDF value of $\mathrm{Beta}(\alpha_r, \beta_r)$ from \cref{eq:alpha_beta}
        
        \STATE $\Tilde{\varphi} (\vx_i) \leftarrow $ $\varphi\left(\Bar{\mathbb{P}} (\vx_i, y_i)\right) \times \mathrm{DUAL}(\vx_i, y_i)$
    
    \ENDFOR
    
    \STATE $\Tilde{\varphi}(\vx_i) \leftarrow \frac{\Tilde{\varphi}(\vx_i)}{\sum_{j \in \gD} \Tilde{\varphi}(\vx_j)}$
    
    \STATE $\gS \leftarrow$ Sample $(1-r)\lvert \gD \rvert$ data points according to $\Tilde{\varphi}(\vx_i)$
    
    \ELSE
    
    \STATE $\gS \leftarrow$ Sample $(1-r)\lvert \gD \rvert$ data points with the largest $\mathrm{DUAL}(\vx_i, y_i)$ score
    
    \ENDIF
\end{algorithmic}
\end{algorithm}
\clearpage
\section{Theoretical Results}
\label{sec:DUAL_theorem}
Throughout this section, we will rigorously prove Theorem~\ref{thm:main_shorter_time}, providing the intuition that Dyn-Unc takes longer than our method to select informative samples.

\subsection{Proof of \texorpdfstring{\cref{thm:main_shorter_time}}{the theorem}}

Assume that the input and output (or label) space are $\gX = \sR^n$ and $\gY = \{\pm1\}$, respectively. Let the model $f: \gX \to \sR$ be of the form $f(\vx; \vw) = \vw^\top \vx$ parameterized by $\vw\in\sR^n$ with zero-initialization. Let the loss be the exponential loss, $\ell(z) = e^{-z}$. Exponential loss is reported to induce implicit bias similar to logistic loss in binary classification tasks using linearly separable datasets \citep{soudry2018implicit, gunasekar2018implicit}.

The task of the model is to learn a binary classification. The dataset $\gD$ consists only two points, i.e. $\gD = \left\{ \left(\vx_1, y_1^*\right) , \left(\vx_2, y_2^*\right) \right\}$, where without loss of generality $y_i^* = 1$ for $i=1, 2$.
% It is trivial that $\gD$ is linearly separable.
The model learns from $\gD$ with the gradient descent. The update rule, equipped with a learning rate $\eta > 0$, is:
\begin{align*}
\begin{split}
    \vw_0 &= 0\\
    \vw_{t+1} & = \vw_t - \eta\nabla_\vw\left[ \sum_{i=1}^2\ell\left( f\left(\vx_i;\vw_t\right)\right) \right]\\
    & = \vw_t + \eta\left( e^{-\vw_t^\top \vx_1}\vx_1 + e^{-\vw_t^\top \vx_2}\vx_2 \right).
\end{split}
\end{align*}

For brevity, denote the model output of the $i$-th data point at the $t$-th epoch as $y_t^{(i)} \coloneq f(\vx_i; \vw_t)$. The update rule for the parameter is simplified as:

\begin{equation}
\label{eq:synth_param_update_rule}
    \vw_{t+1} = \vw_t + \eta\left( e^{-y_t^{(1)}}\vx_1 + e^{-y_t^{(2)}}\vx_2 \right).
\end{equation}

We also derive the update rule of model output for each instance:
\begin{equation}
\label{eq:synth_output_update_rule}
\begin{cases}
    \begin{aligned}
        y_{t+1}^{(1)} &= \vw_{t+1}^\top \vx_1 = \left( \vw_t + \eta\left( e^{-y_t^{(1)}}\vx_1 + e^{-y_t^{(2)}}\vx_2 \right) \right)^\top \vx_1\\
        &= y_{t}^{(1)} + \eta e^{-y_t^{(1)}} \lVert \vx_1 \rVert^2 + \eta e^{-y_t^{(2)}}\langle \vx_1, \vx_2 \rangle,\\
        y_{t+1}^{(2)} &= y_{t}^{(2)} + \eta e^{-y_t^{(2)}} \lVert \vx_2 \rVert^2 + \eta e^{-y_t^{(1)}}\langle \vx_1, \vx_2 \rangle.
    \end{aligned}
\end{cases}
\end{equation}

Assume that $\vx_2$ is farther from the origin in terms of distance than $\vx_1$ is, but not too different in terms of angle. Formally,
\begin{assumption}
\label{eq:synth_assump}
    $\lVert \vx_2 \rVert > 1$, $4\lVert \vx_1 \rVert^2 < 2\langle \vx_1, \vx_2 \rangle < \lVert \vx_2 \rVert^2$. Moreover, $\langle \vx_1, \vx_2 \rangle < \lVert \vx_1 \rVert\lVert \vx_2 \rVert$.
\end{assumption}
Under these assumptions, as $\langle \vx_1, \vx_2 \rangle$ > 0, $\gD$ is linearly separable. Also, notice that $\vx_1$ and $\vx_2$ are not parallel. Our definition of a linearly separable dataset is in accordance with \citet{soudry2018implicit}. A dataset $\gD$ is linearly separable if there exists $\vw^*$ such that $\langle \vx_i, \vw^* \rangle > 0, \forall i$.

\begin{theorem}
\label{prop:smaller_time}
    Let $V_{t;J}^{(i)}$ be the variance and $\mu_{t;J}^{(i)}$ be the mean of $\sigma(y_t^{(i)})$ within a window from time $t$ to $t+J-1$. Denote $T_v$ and $T_{vm}$ as the first time when $V_{t;J}^{(1)} > V_{t;J}^{(2)}$ and $V_{t;J}^{(1)}(1-\mu_{t;J}^{(i)}) > V_{t;J}^{(2)}(1-\mu_{t;J}^{(2)})$ occurs, respectively. Under \Cref{eq:synth_assump},
    % if $\eta < \frac{1}{\langle \vx_1, \vx_2 \rangle + \lVert \vx_2 \rVert^2} \log \frac{\langle \vx_1, \vx_2 \rangle}{\lVert \vx_1 \rVert^2}$
    if $\eta$ is sufficiently small then $T_{vm} < T_v$.
\end{theorem}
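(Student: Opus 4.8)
The plan is to pass to the gradient-flow (continuous-time) limit, where the two phases of training separate cleanly, prove the ordering of crossing times there, and then transfer the strict inequality back to the discrete iteration for small $\eta$. Writing $\tau = \eta t$ and letting $\eta \to 0$, the iterates $y_t^{(i)}$ track the solution $Y^{(i)}(\tau)$ of the gradient flow
\begin{align*}
\dot Y^{(1)} &= e^{-Y^{(1)}}\lVert\vx_1\rVert^2 + e^{-Y^{(2)}}\langle\vx_1,\vx_2\rangle, \\
\dot Y^{(2)} &= e^{-Y^{(2)}}\lVert\vx_2\rVert^2 + e^{-Y^{(1)}}\langle\vx_1,\vx_2\rangle,
\end{align*}
with $Y^{(i)}(0)=0$. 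Set $P_i = \sigma(Y^{(i)})$. The first computation I would do is a leading-order expansion of the window statistics: since each step moves $y^{(i)}$ by $\Theta(\eta)$, Taylor expanding $\sigma$ across the window $[t,t+J-1]$ gives $\mu_{t;J}^{(i)} = P_i + O(\eta)$ and $V_{t;J}^{(i)} = c_J\,\eta^2\,\dot P_i^2 + O(\eta^3)$, where $c_J = \tfrac{J(J+1)}{12} > 0$ is the sample variance of $\{0,\dots,J-1\}$ and $\dot P_i = P_i(1-P_i)\dot Y^{(i)}$ is the instantaneous speed of the predicted probability. Thus the sign of $V^{(1)}-V^{(2)}$ is, to leading order, the sign of $a(\tau) := \dot P_1^2 - \dot P_2^2$, and the sign of the DUAL gap $V^{(1)}(1-\mu^{(1)}) - V^{(2)}(1-\mu^{(2)})$ is the sign of $b(\tau) := \dot P_1^2(1-P_1) - \dot P_2^2(1-P_2)$.

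The heart of the argument is a single monotonicity fact. Let $D = Y^{(2)} - Y^{(1)}$; then
\begin{align*}
\dot D = e^{-Y^{(2)}}\bigl(\lVert\vx_2\rVert^2 - \langle\vx_1,\vx_2\rangle\bigr) + e^{-Y^{(1)}}\bigl(\langle\vx_1,\vx_2\rangle - \lVert\vx_1\rVert^2\bigr),
\end{align*}
and \assref{eq:synth_assump} makes both parenthesized coefficients strictly positive, so $\dot D > 0$ for all $\tau$. With $D(0)=0$ this gives $Y^{(2)}(\tau) > Y^{(1)}(\tau)$, hence $P_2 > P_1$, for every $\tau > 0$: the larger-norm point is always learned more. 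At $\tau = 0$ we have $P_1 = P_2$ but $\dot P_2 > \dot P_1 > 0$ (since $\lVert\vx_2\rVert^2 > \lVert\vx_1\rVert^2$), so $a(0) < 0$ and $b(0) = (1-P_1)\,a(0) < 0$. Conversely, since $\vx_2$ saturates first while $\vx_1$ keeps moving, an asymptotic estimate of the flow shows $a(\tau) > 0$ for large $\tau$, so the first up-crossing $\tau_v$ of $a$ is well defined. The key observation is that at $\tau_v$ the speeds are equal, $\dot P_1 = \dot P_2$, whence $b(\tau_v) = \dot P_1^2\,(P_2 - P_1) > 0$. Thus $b$ is strictly negative at $0$ and strictly positive at $\tau_v$, so its first positive-crossing $\tau_{vm}$ obeys $\tau_{vm} < \tau_v$ in continuous time.

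To return to the discrete dynamics I would fix $\tau^\ast \in (\tau_{vm},\tau_v)$ with $a(\tau^\ast) < 0$ and $b(\tau^\ast) > 0$ simultaneously (possible by taking $\tau^\ast$ just below $\tau_v$, using continuity and $b(\tau_v)>0$). On the compact interval $[0,\tau^\ast]$ one has $a \le -m < 0$ and $b(\tau^\ast) > 0$ bounded away from zero; moreover standard finite-horizon Euler estimates give $y_t^{(i)} = Y^{(i)}(\eta t) + O(\eta)$ uniformly for $\eta t \in [0,\tau^\ast]$, so the leading-order expansion holds uniformly there. Hence, for all sufficiently small $\eta$, the discrete DUAL gap at $t^\ast = \lfloor \tau^\ast/\eta\rfloor$ equals $c_J\eta^2 b(\tau^\ast) + O(\eta^3) > 0$, giving $T_{vm} \le t^\ast$, while the discrete variance gap stays $c_J\eta^2 a(\eta t) + O(\eta^3) < 0$ for every $t \le t^\ast$, giving $T_v > t^\ast$. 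Therefore $T_{vm} \le t^\ast < T_v$.

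The step I expect to be the main obstacle is making this discrete-to-continuous transfer rigorous for the variance, because $V^{(i)}$ is itself an $O(\eta^2)$ quantity: one must show the $O(\eta^3)$ remainder in the window expansion is dominated by the $O(\eta^2)$ leading term uniformly on $[0,\tau^\ast]$, which requires the crossing of $a$ to be clean, i.e. that $\dot P_1 - \dot P_2$ stays strictly negative on $[0,\tau_v)$ rather than merely touching zero. Establishing that the speed difference changes sign exactly once (a transversal crossing at $\tau_v$), together with the uniform Euler approximation over a horizon of length $\Theta(1/\eta)$, is where the real work lies; the endpoint sign computations are immediate from the monotonicity of $D$.
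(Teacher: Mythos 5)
Your proposal is correct in its skeleton but takes a genuinely different route from the paper. The paper never leaves discrete time: it controls the one-step increment ratio $\gamma_V(t) = \bigl[\sigma(y_{t+1}^{(1)})-\sigma(y_t^{(1)})\bigr]/\bigl[\sigma(y_{t+1}^{(2)})-\sigma(y_t^{(2)})\bigr]$ and the tail ratio $\gamma_M(t) = \bigl[1-\sigma(y_t^{(1)})\bigr]/\bigl[1-\sigma(y_t^{(2)})\bigr]$, proves both increase to $+\infty$ (the monotonicity of $\gamma_V$ requiring a mean-value-theorem midpoint analysis), relates the window variance to $\gamma_V$ through the pairwise-difference formula for sample variance, and then chases a constant $C$ through the bound $\gamma_V(t) \le Re^{y_1^{(2)}-y_0^{(2)}}\gamma_M(t)$ to show the DUAL inequality flips while $\gamma_V$ is still below $1$; smallness of $\eta$ enters only to prevent $\gamma_V$ from jumping over the interval $[C^{-1},1)$ in a single step. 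You instead rescale time, pass to the gradient flow, and reduce the whole theorem to the single pointwise fact that at the variance crossing $\dot P_1 = \dot P_2$ one has $b(\tau_v) = \dot P_1^2(P_2-P_1) > 0$ because $D = Y^{(2)}-Y^{(1)}$ is increasing from zero --- this is the continuous-time distillation of the paper's inequality $\gamma_V(t) < \gamma_M(t)$, and it makes the mechanism far more transparent while dispensing with the paper's monotonicity lemmas. What your route costs is the transfer back to the discrete iterates: the window variance is itself an $O(\eta^2)$ quantity, so you must show the remainder in $V^{(i)}_{t;J} = c_J\eta^2\dot P_i^2 + O(\eta^3)$ is uniform over the $\Theta(1/\eta)$ steps up to $t^*$, which needs a Gr\"onwall-type Euler estimate on the fixed rescaled horizon $[0,\tau^*]$; this is standard but is real work the paper avoids entirely. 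One small correction: you do not need the crossing of $a$ to be transversal. Since $\tau_v$ is by definition the \emph{first} zero of $a$, you automatically have $a<0$ on $[0,\tau_v)$, hence $a \le -m < 0$ on the compact set $[0,\tau^*]$ by continuity, and $b(\tau^*)>0$ for $\tau^*$ just below $\tau_v$ follows from $b(\tau_v)>0$ and continuity; the only genuine technical burden is the uniform remainder control, not the cleanness of the crossing.
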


By \citet{soudry2018implicit}, the learning is progressed as: $\vw_t$, $y_t^{(1)}$, and $y_t^{(2)}$ diverges to positive infinity (Lemma 1) but $\vw_t$ directionally converges towards $L_2$ max margin vector, $\hat{\vw} = \vx_1 / \|\vx_1\|^2$, or $\lim_{t\to\infty} \frac{\vw_t}{\lVert \vw_t \rVert} = \frac{\hat{\vw}}{\lVert \hat{\vw} \rVert}$ (Theorem 3). Moreover, the growth of $\vw$ is logarithmic, i.e. $\vw_t \approx \hat{\vw}\log t$. We hereby note that Theorem 3 of ~\citet{soudry2018implicit} holds for learning rate $\eta$ smaller than a global constant. Since our condition requires $\eta$ to be sufficiently small, we will make use of the findings of Theorem 3.

\begin{lemma}
\label{lem:dyt_diverge}
    $\Delta y_t \coloneq y_t^{(2)} - y_t^{(1)}$ is a non-negative, strictly increasing sequence. Also, $\lim_{t\to\infty}\Delta y_t = \infty$.
\end{lemma}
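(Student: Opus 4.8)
The plan is to prove Lemma~\ref{lem:dyt_diverge} directly from the output update rules in \eqref{eq:synth_output_update_rule}, analyzing the evolution of the gap $\Delta y_t = y_t^{(2)} - y_t^{(1)}$ across three stages: its nonnegativity, its strict monotonicity, and its divergence. First I would subtract the two update equations in \eqref{eq:synth_output_update_rule} to obtain a clean recursion for $\Delta y_t$. Writing $a = \lVert \vx_1 \rVert^2$, $b = \lVert \vx_2 \rVert^2$, and $c = \langle \vx_1, \vx_2 \rangle$, the difference is
\begin{align*}
    \Delta y_{t+1} - \Delta y_t = \eta\left[ e^{-y_t^{(2)}}(b - c) - e^{-y_t^{(1)}}(a - c) \right].
\end{align*}
Under Assumption~\ref{eq:synth_assump} we have $b > c$ and, crucially, $c > 2a > a$, so the coefficient $(a-c)$ is \emph{negative}. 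This means both terms inside the bracket are nonnegative (the first because $b>c$, the second because $-(a-c) = c-a > 0$), so $\Delta y_{t+1} - \Delta y_t \geq 0$ regardless of the sign of the exponentials. This immediately gives strict monotonicity (the increment is strictly positive as long as the exponential factors are positive, which they always are for finite outputs) and, combined with the base case, nonnegativity.

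For the base case I would use $\vw_0 = 0$, so $y_0^{(1)} = y_0^{(2)} = 0$ and hence $\Delta y_0 = 0$. Since the increment $\Delta y_{t+1} - \Delta y_t$ is strictly positive for all $t \geq 0$ (both exponentials are strictly positive and the coefficients $b-c$ and $c-a$ are strictly positive constants), it follows by induction that $\Delta y_t$ is strictly increasing and therefore strictly positive for all $t \geq 1$, giving nonnegativity. The monotonicity claim thus reduces entirely to verifying the sign conditions $b > c$ and $c > a$, both of which are guaranteed by Assumption~\ref{eq:synth_assump} (indeed the assumption gives the stronger $2c < b$ and $4a < 2c$).

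The divergence claim $\lim_{t\to\infty} \Delta y_t = \infty$ is where I expect the real obstacle to lie, since strict monotonicity alone only guarantees convergence to a (possibly finite) limit. The natural route is to invoke the asymptotic result of \citet{soudry2018implicit}, quoted in the excerpt: $\vw_t \approx \hat{\vw}\log t$ with $\hat{\vw} = \vx_1 / \lVert \vx_1 \rVert^2$. Plugging this in gives $y_t^{(1)} = \vw_t^\top \vx_1 \approx \log t$ while $y_t^{(2)} = \vw_t^\top \vx_2 \approx \frac{\langle \vx_1, \vx_2\rangle}{\lVert \vx_1 \rVert^2}\log t = \frac{c}{a}\log t$. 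Since Assumption~\ref{eq:synth_assump} forces $c/a > 2 > 1$, the output $y_t^{(2)}$ grows strictly faster than $y_t^{(1)}$, so $\Delta y_t \approx (c/a - 1)\log t \to \infty$. The delicate part is turning this \emph{directional} asymptotic into a rigorous lower bound on $\Delta y_t$; I would make the $\approx$ precise by controlling the error term in the directional convergence (Theorem 3 of \citet{soudry2018implicit} provides $\vw_t = \hat{\vw}\log t + O(1)$-type bounds), so that $y_t^{(2)} - y_t^{(1)} \geq (c/a - 1 - o(1))\log t$, which diverges. Alternatively, and perhaps more self-containedly, one could argue by contradiction: if $\Delta y_t$ were bounded, then $e^{-y_t^{(1)}}$ and $e^{-y_t^{(2)}}$ would stay within a bounded ratio, and since both $y_t^{(i)} \to \infty$ (Lemma 1 of \citet{soudry2018implicit}), the per-step increment would behave like a positive constant times $e^{-y_t^{(1)}}$; summing the increments and comparing against the known logarithmic growth of $y_t^{(1)}$ would yield a divergent series for $\Delta y_t$, contradicting boundedness. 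Either way, the monotonicity and nonnegativity are elementary consequences of the sign structure in Assumption~\ref{eq:synth_assump}, and the divergence is the one place where the deep implicit-bias machinery of \citet{soudry2018implicit} is genuinely needed.
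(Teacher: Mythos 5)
Your proposal is correct and follows essentially the same route as the paper: the same subtraction of the two update rules, the same sign analysis of the coefficients $\lVert\vx_2\rVert^2-\langle\vx_1,\vx_2\rangle>0$ and $\langle\vx_1,\vx_2\rangle-\lVert\vx_1\rVert^2>0$ under Assumption~\ref{eq:synth_assump}, and the same base case $\Delta y_0=0$. For divergence, the paper uses exactly your ``alternative'' argument---the increment is bounded below by a positive constant times $e^{-y_t^{(1)}}\sim t^{-1}$ via the logarithmic growth from \citet{soudry2018implicit}, so the increments sum to infinity---rather than your primary route through $y_t^{(2)}/y_t^{(1)}\to \langle\vx_1,\vx_2\rangle/\lVert\vx_1\rVert^2>1$, but both rest on the same asymptotics and are equally (in)formal.
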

\begin{proof}
    \leavevmode

    1) Since $\vw_0 = 0$, $y_0^{(1)} = 0 = y_0^{(2)}$ so $\Delta y_0 = 0$. By \Cref{eq:synth_output_update_rule} and \Cref{eq:synth_assump}, $\Delta y_1 = y_1^{(2)} - y_1^{(1)} = \eta\left( \lVert \vx_2 \rVert^2 - \lVert \vx_1 \rVert^2 \right) > 0$.
    
    2)
    \begin{align*}
        \Delta y_{t+1} - \Delta y_{t} &= \eta \left[ e^{-y_{t}^{(2)}}\left( \lVert \vx_2 \rVert^2 - \langle \vx_1, \vx_2 \rangle \right) + e^{-y_{t}^{(1)}}\left( \langle \vx_1, \vx_2 \rangle - \lVert \vx_1 \rVert^2 \right) \right]\\
        &\eqcolon K_1 e^{-y_{t}^{(1)}} + K_2 e^{-y_{t}^{(2)}} > 0,
    \end{align*}

    for some positive constant $K_1, K_2$.
    As $y_t^{(i)} = \vw_t^\top \vx_i$ would logarithmically grow in terms of $t$, $e^{-y_{t}^{(i)}}$ is decreasing in $t$. Moreover, as $y_t^{(1)} = \vw_t^\top \vx_1 \approx \hat{\vw}^\top \vx_1 \log t = \log t$, $e^{-y_{t}^{(1)}}$ is (asymptotically) in scale of $t^{-1}$ and so is $\Delta y_{t+1} - \Delta y_{t}$. Hence, $\left\{ \Delta y_t \right\}$ is non-negative and increases to infinity.
\end{proof}

The notation $\Delta y_t \coloneq y_t^{(2)} - y_t^{(1)}$ will be used throughout this section. Next, we show that, under \Cref{eq:synth_assump}, $y_{t+1}^{(1)} < y_t^{(2)}$ for all $t > 0$.
\begin{lemma}
\label{lem:yt2_too_large}
    For all $t > 0$, $y_{t+1}^{(1)} < y_t^{(2)}$.
\end{lemma}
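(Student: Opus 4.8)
The plan is to turn the claim into a single scalar inequality about $\Delta y_t$ and then feed it the increment formula already derived in the proof of \Cref{lem:dyt_diverge}. Writing out the update rule $(\ref{eq:synth_output_update_rule})$ for $y_{t+1}^{(1)}$ and subtracting $y_t^{(1)}$ from both sides of the target $y_{t+1}^{(1)} < y_t^{(2)}$, the statement is seen to be equivalent to
\[
\eta\left( \lVert \vx_1 \rVert^2 e^{-y_t^{(1)}} + \langle \vx_1, \vx_2 \rangle\, e^{-y_t^{(2)}} \right) < y_t^{(2)} - y_t^{(1)} = \Delta y_t .
\]
So it suffices to produce a lower bound on $\Delta y_t$ that dominates the left-hand side term by term.

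For the lower bound I would use the increment identity from \Cref{lem:dyt_diverge}, namely
\[
\Delta y_{t} - \Delta y_{t-1} = \eta\left[ \left(\langle \vx_1, \vx_2 \rangle - \lVert \vx_1 \rVert^2\right) e^{-y_{t-1}^{(1)}} + \left(\lVert \vx_2 \rVert^2 - \langle \vx_1, \vx_2 \rangle\right) e^{-y_{t-1}^{(2)}} \right],
\]
evaluated at index $t-1$. Since $\Delta y_{t-1} \ge 0$ for every $t \ge 1$ (again \Cref{lem:dyt_diverge}), we have $\Delta y_t \ge \Delta y_t - \Delta y_{t-1}$, which discards exactly the slack we can afford to lose. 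Next, each $y_t^{(i)}$ is strictly increasing in $t$, since by $(\ref{eq:synth_output_update_rule})$ the increment $y_{t+1}^{(i)} - y_t^{(i)}$ is a sum of positive terms (using $\langle \vx_1, \vx_2 \rangle > 0$, which holds under \Cref{eq:synth_assump}); hence $e^{-y_{t-1}^{(i)}} \ge e^{-y_t^{(i)}}$ and the bound becomes $\Delta y_t \ge \eta[ (\langle \vx_1, \vx_2 \rangle - \lVert \vx_1 \rVert^2) e^{-y_t^{(1)}} + (\lVert \vx_2 \rVert^2 - \langle \vx_1, \vx_2 \rangle) e^{-y_t^{(2)}} ]$.

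Finally I would compare this bound against the left-hand side coefficient by coefficient. The first coefficients satisfy $\langle \vx_1, \vx_2 \rangle - \lVert \vx_1 \rVert^2 > \lVert \vx_1 \rVert^2$, which is exactly $2\lVert \vx_1 \rVert^2 < \langle \vx_1, \vx_2 \rangle$, and the second satisfy $\lVert \vx_2 \rVert^2 - \langle \vx_1, \vx_2 \rangle > \langle \vx_1, \vx_2 \rangle$, which is $2\langle \vx_1, \vx_2 \rangle < \lVert \vx_2 \rVert^2$; both are guaranteed by \Cref{eq:synth_assump}. Since $e^{-y_t^{(1)}}, e^{-y_t^{(2)}} > 0$, the strict inequality propagates and yields $\Delta y_t > \eta( \lVert \vx_1 \rVert^2 e^{-y_t^{(1)}} + \langle \vx_1, \vx_2 \rangle e^{-y_t^{(2)}})$, the reduced inequality. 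I expect the only delicate points to be the bookkeeping of the index shift (the increment formula lives at $t-1$ while the target lives at $t$, and monotonicity is what bridges them) and the base case $t=1$, where $y_0^{(1)} = y_0^{(2)} = 0$ makes every exponential equal to $1$ and the argument goes through verbatim. Notably, no smallness assumption on $\eta$ is needed here: the conclusion holds for every $\eta > 0$.
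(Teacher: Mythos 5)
Your proof is correct, but it takes a genuinely different route from the paper's. The paper proves the lemma by induction on $t$: it verifies the base case $y_2^{(1)} < y_1^{(2)}$ by direct computation (using $e^{-y_1^{(i)}} < 1$ and the assumption $4\lVert\vx_1\rVert^2 < 2\langle\vx_1,\vx_2\rangle < \lVert\vx_2\rVert^2$), and then propagates the hypothesis $y_{t+1}^{(1)} < y_t^{(2)}$ one step forward using the monotonicity of $t\mapsto y_t^{(i)}$ together with $\lVert\vx_1\rVert^2 < \langle\vx_1,\vx_2\rangle < \lVert\vx_2\rVert^2$. You instead avoid induction entirely: you rewrite the claim as $\Delta y_t > y_{t+1}^{(1)} - y_t^{(1)}$, discard $\Delta y_{t-1}\ge 0$ (which is exactly what \cref{lem:dyt_diverge} supplies) to lower-bound $\Delta y_t$ by its most recent increment, shift the exponentials from index $t-1$ to $t$ by monotonicity, and finish with a coefficient-by-coefficient comparison that uses the same two inequalities $2\lVert\vx_1\rVert^2 < \langle\vx_1,\vx_2\rangle$ and $2\langle\vx_1,\vx_2\rangle < \lVert\vx_2\rVert^2$. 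The index bookkeeping and the $t=1$ edge case (where the first bound is an equality but strictness is recovered from the coefficient comparison) are handled correctly. What your version buys is a single non-inductive chain that makes explicit which part of \cref{eq:synth_assump} does the work and that leans on the already-established \cref{lem:dyt_diverge}; what the paper's version buys is self-containedness at the cost of carrying an induction hypothesis. Both arguments are elementary and, as you note, neither needs any smallness condition on $\eta$.
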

\begin{proof}
    \leavevmode
    Notice that:
    \begin{equation*}
    \begin{cases}
        y_{1}^{(1)} = \eta \lVert \vx_1 \rVert^2 + \eta \langle \vx_1, \vx_2 \rangle\\
        y_{1}^{(2)} = \eta \lVert \vx_2 \rVert^2 + \eta \langle \vx_1, \vx_2 \rangle.
    \end{cases}
    \end{equation*}
    
    1) $y_2^{(1)} < y_1^{(2)}$:
    \begin{align*}
        y_2^{(1)} &= y_{1}^{(1)} + \eta e^{-y_1^{(1)}} \lVert \vx_1 \rVert^2 + \eta e^{-y_1^{(2)}}\langle \vx_1, \vx_2 \rangle\\
        &= \eta \left(e^{-y_1^{(1)}} + 1\right) \lVert \vx_1 \rVert^2 + \eta \left(e^{-y_1^{(2)}} + 1 \right)\langle \vx_1, \vx_2 \rangle\\
        &< \eta \times 2\lVert \vx_1 \rVert^2 + \eta \times 2\langle \vx_1, \vx_2 \rangle\\
        &< \eta \langle \vx_1, \vx_2 \rangle + \eta \lVert \vx_2 \rVert^2 = y_1^{(2)}.
    \end{align*}
    
    2) Assume, for $t>0$, $y_{t+1}^{(1)} < y_t^{(2)}$.
    \begin{align*}
        y_{t+2}^{(1)} &= y_{t+1}^{(1)} + \eta e^{-y_{t+1}^{(1)}} \lVert \vx_1 \rVert^2 + \eta e^{-y_{t+1}^{(2)}}\langle \vx_1, \vx_2 \rangle\\
        &< y_{t}^{(2)} + \eta e^{-y_{t}^{(1)}} \lVert \vx_1 \rVert^2 + \eta e^{-y_{t}^{(2)}}\langle \vx_1, \vx_2 \rangle\\
        &< y_{t}^{(2)} + \eta e^{-y_{t}^{(1)}} \langle \vx_1, \vx_2 \rangle + \eta e^{-y_{t}^{(2)}} \lVert \vx_2 \rVert^2 = y_{t+1}^{(2)}.
    \end{align*}
\end{proof}
By \Cref{lem:yt2_too_large}, for all $t>0$, $\left( y_{t}^{(2)}, y_{t+1}^{(2)} \right)$ lies entirely on right-hand side of $\left( y_{t}^{(1)}, y_{t+1}^{(1)} \right)$, without any overlap.

We first analyze the following term: $\frac{y_{t+1}^{(1)} - y_t^{(1)}}{y_{t+1}^{(2)} - y_t^{(2)}}$. Observe that:
\begin{align}
\begin{split}
\label{eq:ratio_y}
    \frac{y_{t+1}^{(1)} - y_t^{(1)}}{y_{t+1}^{(2)} - y_t^{(2)}} &= \frac{\eta e^{-y_t^{(1)}} \lVert \vx_1 \rVert^2 + \eta e^{-y_t^{(2)}}\langle \vx_1, \vx_2 \rangle}{\eta e^{-y_t^{(2)}} \lVert \vx_2 \rVert^2 + \eta e^{-y_t^{(1)}}\langle \vx_1, \vx_2 \rangle}\\
    &= \frac{\lVert \vx_1 \rVert^2 + e^{-\Delta y_t}\langle \vx_1, \vx_2 \rangle}{\langle \vx_1, \vx_2 \rangle + e^{-\Delta y_t} \lVert \vx_2 \rVert^2}.
\end{split}
\end{align}

It is derived that the fraction is an increasing sequence in terms of $t$. For values $a, b, c, c', d, d' > 0, \frac{a+c}{b+d} < \frac{a+c'}{b+d'} \Leftrightarrow ad'+cb+cd' < ad+c'b+c'd$. Taking:
\begin{align*}
\begin{cases}
a = \lVert \vx_1 \rVert^2 \\ b = \langle \vx_1, \vx_2 \rangle
\end{cases}
\begin{cases}
c = e^{-\Delta y_t}\langle \vx_1, \vx_2 \rangle \\ d = e^{-\Delta y_t} \lVert \vx_2 \rVert^2
\end{cases}
\begin{cases}
c' = e^{-\Delta y_{t+1}}\langle \vx_1, \vx_2 \rangle \\ d' = e^{-\Delta y_{t+1}} \lVert \vx_2 \rVert^2
\end{cases}
,
\end{align*}

we have
\begin{align*}
    &ad'+cb+cd'\\
    =\; &e^{-\Delta y_{t+1}}\lVert \vx_1 \rVert^2 \lVert \vx_2 \rVert^2 + e^{-\Delta y_t}\langle \vx_1, \vx_2 \rangle^2 + e^{-\Delta y_t}e^{-\Delta y_{t+1}} \langle \vx_1, \vx_2 \rangle \lVert \vx_2 \rVert^2\\
    <\; &e^{-\Delta y_{t}}\lVert \vx_1 \rVert^2 \lVert \vx_2 \rVert^2 + e^{-\Delta y_{t+1}}\langle \vx_1, \vx_2 \rangle^2 + e^{-\Delta y_t}e^{-\Delta y_{t+1}} \langle \vx_1, \vx_2 \rangle \lVert \vx_2 \rVert^2\\
    =\; &ad+c'b+c'd.
\end{align*}
The inequality holds by \Cref{lem:dyt_diverge} and the Cauchy-Schwarz inequality. Taking the limit of \Cref{eq:ratio_y} as $t\to\infty$, the ratio converges to:
\begin{equation}
\label{eq:limit_ratio_y}
    R\coloneq \frac{\lVert \vx_1 \rVert^2}{\langle \vx_1, \vx_2 \rangle}.
\end{equation}
For the later uses, we also define the initial ratio, which is smaller than 1:
\begin{equation}
    R_0\coloneq \frac{y_{1}^{(1)} - y_0^{(1)}}{y_{1}^{(2)} - y_0^{(2)}} = \frac{\lVert \vx_1 \rVert^2 + \langle \vx_1, \vx_2 \rangle}{\langle \vx_1, \vx_2 \rangle + \lVert \vx_2 \rVert^2} \;(\leq R).
\end{equation}

Now we analyze a similar ratio of the one-step difference, but in terms of $\sigma\left(y_t^{(i)}\right)$ instead of $y_t^{(i)}$. There, $\sigma$ stands for the logistic function, $\sigma(z) = \left( 1+e^{-z} \right)^{-1}$. Notice that $\sigma'(z) = \sigma(z)\left(1-\sigma(z)\right)$.

\begin{lemma}
\label{lem:ratio_sig_y}
    $\gamma_V(t) \coloneq \frac{\sigma\left(y_{t+1}^{(1)}\right) - \sigma\left(y_t^{(1)}\right)}{\sigma\left(y_{t+1}^{(2)}\right) - \sigma\left(y_t^{(2)}\right)}$ monotonically increases to $+\infty$.
\end{lemma}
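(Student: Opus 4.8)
The plan is to reduce $\gamma_V(t)$ to the product of the step--size ratio already analyzed in \cref{eq:ratio_y} and a curvature factor coming from $\sigma$, and then to treat the limit and the monotonicity separately. Writing $a_t \coloneq e^{-y_t^{(1)}}$, $b_t \coloneq e^{-y_t^{(2)}}$, $\delta_t^{(i)} \coloneq y_{t+1}^{(i)}-y_t^{(i)}$, and $\rho_t \coloneq (y_{t+1}^{(1)}-y_t^{(1)})/(y_{t+1}^{(2)}-y_t^{(2)})$ (the ratio of \cref{eq:ratio_y}, known to increase to $R>0$), and using the exact secant identity $\sigma(v)-\sigma(u) = \frac{e^{-u}-e^{-v}}{(1+e^{-u})(1+e^{-v})}$, one obtains the closed form
\begin{equation*}
\gamma_V(t) = \frac{a_t-a_{t+1}}{b_t-b_{t+1}}\cdot\frac{(1+b_t)(1+b_{t+1})}{(1+a_t)(1+a_{t+1})}.
\end{equation*}
Since $y_t^{(1)},y_t^{(2)}$ are strictly increasing under the dynamics, both factors are positive, so $\gamma_V$ is well defined for $t\ge 1$.

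For the limit I would argue as follows. Because $a_{t+1}=a_t e^{-\delta_t^{(1)}}$ and $b_{t+1}=b_t e^{-\delta_t^{(2)}}$, the first factor equals $e^{\Delta y_t}\,\frac{1-e^{-\delta_t^{(1)}}}{1-e^{-\delta_t^{(2)}}}$. From \cref{eq:synth_output_update_rule} the increments satisfy $\delta_t^{(i)}\to 0$ (as $a_t,b_t\to0$), so $\frac{1-e^{-\delta_t^{(1)}}}{1-e^{-\delta_t^{(2)}}}=\rho_t(1+o(1))\to R$, while $e^{\Delta y_t}\to\infty$ by \cref{lem:dyt_diverge}; the correction factor $\frac{(1+b_t)(1+b_{t+1})}{(1+a_t)(1+a_{t+1})}\to 1$ since $a_t,b_t\to0$. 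Hence $\gamma_V(t)\to+\infty$, which settles the unboundedness part cleanly from facts already in hand.

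The monotonicity is the crux. I would factor $\gamma_V(t)=\frac{S_t^{(1)}}{S_t^{(2)}}\,\rho_t$, where $S_t^{(i)}\coloneq\frac{\sigma(y_{t+1}^{(i)})-\sigma(y_t^{(i)})}{\delta_t^{(i)}}$ is the secant slope of $\sigma$ across the window $[y_t^{(i)},y_{t+1}^{(i)}]$. Since $\rho_t$ is increasing (the argument following \cref{eq:ratio_y}), it suffices to show $S_t^{(1)}/S_t^{(2)}$ is increasing. The key structural fact is that $\sigma'$ is strictly log-concave, as $\frac{d^2}{dz^2}\log\sigma'(z)=-2\sigma'(z)<0$; consequently, for each fixed window width, a secant slope of $\sigma$ is a log-concave function of the window's left endpoint (Pr\'ekopa marginalization of the jointly log-concave integrand $\sigma'(z+u)$). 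Combined with \cref{lem:yt2_too_large}, which places window $1$ entirely to the left of window $2$, and \cref{lem:dyt_diverge}, which makes the gap $\Delta y_t$ increase, log-concavity forces $\sigma'(y_t^{(2)})/\sigma'(y_t^{(1)})$ to decrease, i.e.\ the curvature ratio to increase.

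The main obstacle is that both the left endpoints $y_t^{(i)}$ and the widths $\delta_t^{(i)}$ move with $t$, so the clean one-variable log-concavity comparison does not apply verbatim to the secant slopes. I would resolve this using the small--learning--rate regime inherited from \cref{prop:smaller_time}: for $\eta$ small the widths are uniformly small, and a direct integral estimate gives the uniform relative bound $S_t^{(i)}=\sigma'(y_t^{(i)})\bigl(1+O(\delta_t^{(i)})\bigr)$, so the behavior of the derivative ratio $\sigma'(y_t^{(1)})/\sigma'(y_t^{(2)})$---which is increasing directly by log-concavity together with the monotonicity of $\Delta y_t$---governs $\gamma_V$. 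The delicate point is that the per-step increase of this main term must exceed the $O(\eta)$ corrections at \emph{every} $t\ge1$ rather than merely asymptotically; I expect closing this gap to require the explicit $t^{-1}$-scale decay of $a_t,b_t,\delta_t^{(i)}$ recorded in the proof of \cref{lem:dyt_diverge}.
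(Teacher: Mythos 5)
Your treatment of the divergence $\gamma_V(t)\to+\infty$ is correct, and in fact cleaner than the paper's: the exact secant identity gives
$\gamma_V(t)=e^{\Delta y_t}\cdot\frac{1-e^{-\delta_t^{(1)}}}{1-e^{-\delta_t^{(2)}}}\cdot\frac{(1+b_t)(1+b_{t+1})}{(1+a_t)(1+a_{t+1})}$,
the middle ratio is bounded below by a positive constant, the last factor tends to $1$, and $e^{\Delta y_t}\to\infty$ by \cref{lem:dyt_diverge}. The paper reaches the same conclusion via the mean value theorem and the cruder bound $\sigma'(\zeta_t^{(1)})/\sigma'(\zeta_t^{(2)})\geq\frac14 e^{y_t^{(2)}-y_{t+1}^{(1)}}$; either route is fine. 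Note, however, that your decomposition $\gamma_V(t)=\rho_t\, S_t^{(1)}/S_t^{(2)}$ is literally the paper's decomposition $\gamma_V(t)=\rho_t\,\sigma'(\zeta_t^{(1)})/\sigma'(\zeta_t^{(2)})$, since the secant slope over $[y_t^{(i)},y_{t+1}^{(i)}]$ equals $\sigma'$ at the MVT point; the only genuinely different ingredient is how you propose to show that this second factor increases.

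That is exactly where the gap is, and it is a real one. Writing $h=\log\sigma'$, your plan replaces $\log S_t^{(i)}$ by $h(y_t^{(i)})$ up to a relative correction whose first-order part is $\tfrac12 h'(y_t^{(i)})\delta_t^{(i)}$. The signed correction to the ratio, $C_t=\tfrac12\bigl[h'(y_t^{(1)})\delta_t^{(1)}-h'(y_t^{(2)})\delta_t^{(2)}\bigr]$, is of the \emph{same order} as the per-step increase of your main term $M_t=h(y_t^{(1)})-h(y_t^{(2)})$: for large $t$ both are asymptotic to $\delta_t^{(2)}-\delta_t^{(1)}=\Delta y_{t+1}-\Delta y_t=\Theta(\eta e^{-y_t^{(1)}})$, so shrinking $\eta$ rescales the main increment and the correction identically and does not by itself make the correction negligible. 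What actually has to be shown is that $(M_{t+1}-M_t)+(C_{t+1}-C_t)>0$ for \emph{every} $t$, i.e. that the per-step \emph{change} of $C_t$ (one order smaller, a second difference of $\Delta y_t$) never overturns $M_{t+1}-M_t$, including at small $t$ where $h'(y_t^{(i)})=1-2\sigma(y_t^{(i)})$ is near zero; you state this as an expectation but do not carry it out, so the monotonicity half of the lemma remains unproved. The paper sidesteps this order-matching problem by localizing the MVT point to within $O((\delta_t^{(i)})^2)$ of the midpoint (\cref{lem:small_step_MVT}), after which $\sigma'(\zeta_t^{(1)})/\sigma'(\zeta_t^{(2)})=g(\Delta\zeta_t,\zeta_t^{(1)})$ with $g$ increasing in both arguments and $\Delta\zeta_t$ increasing by the explicit inequality $(\dag)$, which is where the quantitative part of \cref{eq:synth_assump} enters. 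If you want to keep your left-endpoint/log-concavity framing, you must either upgrade the approximation to the midpoint (recovering the paper's argument) or do the uniform bookkeeping of $C_{t+1}-C_t$ that you have deferred.
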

\begin{proof}
\begin{align*}
    \gamma_V(t) &= \frac{y_{t+1}^{(1)} - y_t^{(1)}}{y_{t+1}^{(2)} - y_t^{(2)}}\frac{\sigma'\left(\zeta_t^{(1)} \right)}{\sigma'\left(\zeta_t^{(2)}\right)} \hspace{1em} (\text{for some } \begin{cases}
        \zeta_t^{(1)}\in\left(y_t^{(1)}, y_{t+1}^{(1)}\right)\\ \zeta_t^{(2)} \in \left(y_t^{(2)}, y_{t+1}^{(2)}\right)
    \end{cases} \text{by the mean value theorem.})\\
    % \intertext{for some $\zeta_t^{(1)}\in\left(y_t^{(1)}, y_{t+1}^{(1)}\right), \zeta_t^{(2)} \in \left(y_t^{(2)}, y_{t+1}^{(2)}\right)$ by the mean value theorem.}
    &\geq \frac{y_{t+1}^{(1)} - y_t^{(1)}}{y_{t+1}^{(2)} - y_t^{(2)}}\frac{\sigma'\left(y_{t+1}^{(1)} \right)}{\sigma'\left(y_{t}^{(2)} \right)} \hspace{1em}(\because \sigma'\text{: decreasing on } \sR^+)\\
    &= \frac{y_{t+1}^{(1)} - y_t^{(1)}}{y_{t+1}^{(2)} - y_t^{(2)}} \frac{e^{-y_{t+1}^{(1)}}  \left( 1 + e^{-y_{t+1}^{(1)}} \right)^{-2}}{e^{-y_{t}^{(2)}} \left( 1 + e^{-y_{t}^{(2)}} \right)^{-2}}\\
    &\geq \frac{y_{t+1}^{(1)} - y_t^{(1)}}{y_{t+1}^{(2)} - y_t^{(2)}} \frac{1}{4} e^{y_{t}^{(2)} - y_{t+1}^{(1)}} \hspace{1em}(\because \left( 1+e^{-z} \right)^{-2}\in[1/4, 1] \text{ on }\sR^+)\\
    &\geq \frac{R_0}{4} e^{y_{t}^{(2)} - y_{t+1}^{(1)}}.
\end{align*}
As $y_{t}^{(2)} - y_{t+1}^{(1)} = y_{t}^{(2)} - y_{t}^{(1)}-\eta \left( e^{-y_{t}^{(1)}}\lVert \vx_1 \rVert^2 + e^{-y_{t}^{(2)}}\langle \vx_1, \vx_2 \rangle \right) \to \infty$, $\gamma_V(t)\to \infty$. For the part that proves $\gamma_V(t)$ is increasing, see \Cref{sec:gamma_v_inc}.
\end{proof}

Notice that $\gamma_V(0) < 1$. \Cref{lem:ratio_sig_y} implies that there exists (unique) $T_v>0$ such that for all $t \geq T_v$, $\gamma_V(t) > 1$ holds, or $\sigma\left(y_{t+1}^{(1)}\right) - \sigma\left(y_t^{(1)}\right) > \sigma\left(y_{t+1}^{(2)}\right) - \sigma\left(y_t^{(2)}\right)$. Recall that the (sample) variance of a finite dataset $\mathcal{T} = \left\{\vx_1, \cdots, \vx_n\right\}$ can be computed as:
\begin{equation*}
    \text{Var}[\mathcal{T}] = \frac{1}{n(n-1)}\sum_{i=1}^{n-1}\sum_{j=i+1}^n\left(\vx_i-\vx_j\right)^2.
\end{equation*}

Hence, for given $J$, (which corresponds to the window size,) for all $t\geq T_v$,
\begin{align*}
    S_{t;J}^{(1)}\coloneq\sqrt{\text{Var}\left[\left\{ \sigma\left( y_\tau^{(1)} \right) \right\}_{\tau=t}^{t+J-1}\right]} &= \sqrt{\frac{1}{J(J-1)}\sum_{k=0}^{J-2} \sum_{l=k+1}^{J-1} \left[ \sigma\left( y_{t+l}^{(1)}\right) - \sigma\left( y_{t+k}^{(1)} \right) \right]^2}\\
    &> \sqrt{\frac{1}{J(J-1)}\sum_{k=0}^{J-2} \sum_{l=k+1}^{J-1} \left[ \sigma\left( y_{t+l}^{(2)}\right) - \sigma\left( y_{t+k}^{(2)} \right) \right]^2}\\
    &= \sqrt{\text{Var}\left[\left\{ \sigma\left( y_\tau^{(2)} \right) \right\}_{\tau=t}^{t+J-1}\right]} \eqcolon S_{t;J}^{(2)}.
\end{align*}

It is easily derived that the converse is true: If $\gamma_V(t)$ is increasing and $V_{t;J}^{(1)} > V_{t;J}^{(2)}$ then $\gamma_V(t) > 1$.

We have two metrics: the first is only the variance (which corresponds to the Dyn-Unc score) and the second is the variance multiplied by the mean subtracted from 1 (which corresponds to the DUAL pruning score). Both the variance and the mean are calculated within a window of fixed length. At the early epoch, as the model learns $\vx_2$ first, both metrics show a smaller value for $\vx_1$ than that for $\vx_2$. At the late epoch, now the model learns $\vx_1$, so the order of the metric values reverses for both metrics.

Our goal is to show that the elapsed time of the second metric for the order to be reversed is shorter than that of the first metric. Let $T_{vm}$ be that time for our metric. We represent the mean of the logistic output within a window of length $J$ and from epoch $t$, computed for $i$-th instance by $\mu_{t;J}^{(i)}$:
\begin{equation}
    \mu_{t;J}^{(i)} \coloneq \frac{1}{J}\sum_{\tau=t}^{t+J-1}\sigma\left(y_\tau^{(i)}\right).
\end{equation}
For $t\geq T_v$, we see that the inequality still holds:
\begin{align*}
    &S_{t;J}^{(1)}\left(1 - \mu_{t;J}^{(1)}\right)\\
    &= \sqrt{\frac{1}{J(J-1)}\sum_{k=0}^{J-2} \sum_{l=k+1}^{J-1} \left[ \sigma\left( y_{t+l}^{(1)}\right) - \sigma\left( y_{t+k}^{(1)} \right) \right]^2} \left[1-\frac{1}{J}\sum_{\tau=t}^{t+J-1} \sigma\left(y_\tau^{(1)}\right) \right]\\
    &> \sqrt{\frac{1}{J(J-1)}\sum_{k=0}^{J-2} \sum_{l=k+1}^{J-1} \left[ \sigma\left( y_{t+l}^{(2)}\right) - \sigma\left( y_{t+k}^{(2)} \right) \right]^2} \left[1-\frac{1}{J}\sum_{\tau=t}^{t+J-1} \sigma\left(y_\tau^{(2)}\right) \right]\\
    &= S_{t;J}^{(2)}\left(1 - \mu_{t;J}^{(2)}\right).
\end{align*}
as for all $t$, $\sigma\left(y_t^{(2)}\right) > \sigma\left(y_t^{(1)}\right)$. Indeed, $T_{vm}\leq T_v$ holds, but is $T_{vm} < T_v$ true? To verify the question, we reshape the terms for a similar analysis upon $\mu$:
\begin{align}
\label{eq:var_comp_ineq}
    &S_{t;J}^{(1)}\left(1 - \mu_{t;J}^{(1)}\right)\\
    &= \sqrt{\frac{1}{J(J-1)}\sum_{k=0}^{J-2} \sum_{l=k+1}^{J-1} \left[ \sigma\left( y_{t+l}^{(1)}\right) - \sigma\left( y_{t+k}^{(1)} \right) \right]^2} \left[\frac{1}{J}\sum_{\tau=t}^{t+J-1} 1-\sigma\left(y_\tau^{(1)}\right) \right]\\
    &> \sqrt{\frac{1}{J(J-1)}\sum_{k=0}^{J-2} \sum_{l=k+1}^{J-1} \left[ \sigma\left( y_{t+l}^{(2)}\right) - \sigma\left( y_{t+k}^{(2)} \right) \right]^2} \left[\frac{1}{J}\sum_{\tau=t}^{t+J-1} 1-\sigma\left(y_\tau^{(2)}\right) \right]\\
    &= S_{t;J}^{(2)}\left(1 - \mu_{t;J}^{(2)}\right).
\end{align}

The intuition is now clear: for any time before $T_v$, we know that the variance of $\vx_1$ is smaller than that of $\vx_2$, if the ratio corresponding to $1-\sigma(y)$ is large, the factors could be canceled out and the inequality still holds. If this case is possible, definitely $T_{vm}<T_v$.

Now let us analyze the ratio of $1-\sigma\left( y_t^{(i)} \right)$.
\begin{lemma}
\label{lem:ratio_comp_y}
    $\gamma_M(t) \coloneq \frac{1-\sigma\left(y_t^{(1)}\right)}{1 - \sigma\left(y_t^{(2)}\right)}$ increases to $+\infty$.
\end{lemma}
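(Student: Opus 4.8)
The plan is to prove $\gamma_M(t) := \frac{1-\sigma(y_t^{(1)})}{1-\sigma(y_t^{(2)})} \to +\infty$ monotonically, and then combine this with the earlier variance-ratio result to force $T_{vm} < T_v$ strictly. First I would rewrite the ratio in a tractable form. Using $1-\sigma(z) = \frac{e^{-z}}{1+e^{-z}} = \frac{1}{1+e^{z}}$, I get
\begin{equation*}
    \gamma_M(t) = \frac{1+e^{y_t^{(2)}}}{1+e^{y_t^{(1)}}}.
\end{equation*}
Since by \Cref{lem:dyt_diverge} we have $\Delta y_t = y_t^{(2)} - y_t^{(1)} \geq 0$ and strictly increasing to $+\infty$, and since both $y_t^{(1)}, y_t^{(2)} \to +\infty$, the dominant behavior is $\gamma_M(t) \approx e^{y_t^{(2)} - y_t^{(1)}} = e^{\Delta y_t}$, which diverges. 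To make this rigorous I would bound $\gamma_M(t) \geq \frac{e^{y_t^{(2)}}}{1 + e^{y_t^{(1)}}} \geq \frac{1}{2} e^{y_t^{(2)} - y_t^{(1)}} = \frac{1}{2}e^{\Delta y_t}$ for $t$ large enough that $e^{y_t^{(1)}} \geq 1$, and then invoke $\Delta y_t \to \infty$ from \Cref{lem:dyt_diverge} to conclude divergence.

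For monotonicity, I would show $\gamma_M(t+1) > \gamma_M(t)$ directly. Writing $\gamma_M(t) = \frac{1+e^{y_t^{(2)}}}{1+e^{y_t^{(1)}}}$, the increment depends on how fast $y_t^{(2)}$ grows relative to $y_t^{(1)}$. The cleanest route is to compute the discrete analog of the logarithmic derivative: $\gamma_M$ increases iff $\frac{e^{y_{t+1}^{(2)}}+1}{e^{y_t^{(2)}}+1} > \frac{e^{y_{t+1}^{(1)}}+1}{e^{y_t^{(1)}}+1}$, \ie{} the multiplicative growth of $1+e^{y^{(2)}}$ exceeds that of $1+e^{y^{(1)}}$. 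Since $y^{(2)}$ starts ahead and $\Delta y_t$ is strictly increasing (\Cref{lem:dyt_diverge}), the larger coordinate pulls further ahead in the exponential scale; I would formalize this by noting that for the map $z \mapsto \log(1+e^z)$, which is increasing and convex, a larger base value together with a larger increment $y_{t+1}^{(i)} - y_t^{(i)}$ yields a larger change in $\log(1+e^z)$. The increments satisfy $y_{t+1}^{(2)} - y_t^{(2)} > y_{t+1}^{(1)} - y_t^{(1)}$ for $t \geq 1$ (this follows from \Cref{eq:synth_output_update_rule} together with \Cref{eq:synth_assump}, since $\vx_2$ has the larger norm), so both factors push in the same direction.

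The final assembly is the real payoff. From \Cref{lem:ratio_sig_y} there is a unique $T_v$ where the pure variance ordering flips. At $t = T_v - 1$ we still have $V_{t;J}^{(1)} \leq V_{t;J}^{(2)}$, but I want to show $V_{t;J}^{(1)}(1-\mu_{t;J}^{(1)}) > V_{t;J}^{(2)}(1-\mu_{t;J}^{(2)})$ already holds there, which gives $T_{vm} \leq T_v - 1 < T_v$. The mechanism, as sketched in the discussion around \Cref{eq:var_comp_ineq}, is that the factor $1-\mu_{t;J}^{(1)}$ is strictly larger than $1-\mu_{t;J}^{(2)}$ by a ratio that is \emph{growing} (that is exactly $\gamma_M$-type behavior, averaged over the window), and this surplus can overcompensate a variance deficit that is vanishing as $t$ approaches $T_v$. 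Quantitatively, near $T_v$ the two variances are nearly equal (the ratio $\gamma_V \to 1$ from below), while the mean-deficit ratio $\frac{1-\mu_{t;J}^{(1)}}{1-\mu_{t;J}^{(2)}}$ is bounded strictly above $1$ and increasing; so the product inequality flips strictly earlier.

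I expect the main obstacle to be the window-averaging step: $\gamma_M(t)$ controls the pointwise ratio $\frac{1-\sigma(y_t^{(1)})}{1-\sigma(y_t^{(2)})}$, but the statement needs the ratio of \emph{windowed means} $\frac{1-\mu_{t;J}^{(1)}}{1-\mu_{t;J}^{(2)}}$, and one must show this windowed ratio is also $>1$ and large near $T_v$. Because $1-\sigma(y_\tau^{(i)})$ is monotone in $\tau$ within the window, the windowed ratio is squeezed between the pointwise ratios at the window endpoints, so $\frac{1-\mu_{t;J}^{(1)}}{1-\mu_{t;J}^{(2)}} \geq \min_{\tau} \gamma_M(\tau) > 1$ on the window; combining this lower bound with the fact that $\gamma_V(T_v - 1)$ is only marginally below $1$ is what delivers the strict gap, and the delicate part is ensuring (for $\eta$ sufficiently small, so the trajectory tracks the continuous $\hat{\vw}\log t$ dynamics tightly) that the mean-surplus strictly dominates at the epoch just before $T_v$ rather than merely at $T_v$ itself.
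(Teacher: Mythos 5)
Your proof of the lemma itself is correct, and it departs from the paper's only in the monotonicity step. Both arguments begin from the same rewriting $\gamma_M(t) = \frac{1+e^{y_t^{(2)}}}{1+e^{y_t^{(1)}}}$ and both extract divergence from the factor $e^{\Delta y_t}$ (your bound $\gamma_M(t)\geq \tfrac12 e^{\Delta y_t}$, valid since $y_t^{(1)}\geq 0$ for all $t$, is essentially the paper's lower bound $e^{\Delta y_t}\sigma(y_t^{(1)})\geq \tfrac12 e^{\Delta y_t}$). For monotonicity, the paper uses the exact algebraic identity $\gamma_M(t) = \left(e^{\Delta y_t}-1\right)\sigma\left(y_t^{(1)}\right)+1$, so that increase follows immediately from $\Delta y_t$ and $y_t^{(1)}$ both being increasing (a product of two nonnegative increasing factors). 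You instead compare the multiplicative growth rates of $1+e^{y_t^{(i)}}$ via convexity of $z\mapsto\log(1+e^z)$: the window $[y_t^{(2)},y_{t+1}^{(2)}]$ is both longer than $[y_t^{(1)},y_{t+1}^{(1)}]$ (this is exactly the positivity of $\Delta y_{t+1}-\Delta y_t$ computed in \Cref{lem:dyt_diverge}) and lies to its right (this is \Cref{lem:yt2_too_large}, which you should cite explicitly --- ``$y^{(2)}$ starts ahead'' is not quite enough since $y_0^{(1)}=y_0^{(2)}=0$), so the integral of the increasing derivative $\sigma$ over the former dominates. Your route is sound but needs those two auxiliary facts to be airtight, whereas the paper's identity is self-contained and shorter; on the other hand, your argument makes the mechanism (larger base, larger step, convex potential) more transparent. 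Note also that your last two paragraphs, on window-averaging and on deducing $T_{vm}<T_v$, belong to the proof of \Cref{prop:smaller_time} rather than to this lemma, which asserts only the pointwise statement about $\gamma_M$.
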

\begin{proof}
\begin{align*}
    \gamma_M(t) &= \frac{1+e^{y_t^{(2)}}}{1+e^{y_t^{(1)}}}\\
    &= e^{\Delta y_t} - \frac{e^{\Delta y_t}-1}{1+e^{y_t^{(1)}}}\\
    &\geq e^{\Delta y_t} - \frac{e^{\Delta y_t}}{1+e^{y_t^{(1)}}}\\
    &= e^{\Delta y_t} \sigma\left( y_t^{(1)} \right).
\end{align*}
The quantity in the last line indeed diverges to infinity. We now show that $\gamma_M(t)$ is increasing.
\begin{align*}
    \gamma_M(t) &= e^{\Delta y_t} - \frac{e^{\Delta y_t}}{1+e^{y_t^{(1)}}} + \frac{1}{1+e^{y_t^{(1)}}}\\
    &= e^{\Delta y_t} \sigma\left( y_t^{(1)} \right) + 1-\sigma\left( y_t^{(1)}\right)\\
    &= \left( e^{\Delta y_t} -1 \right) \sigma\left( y_t^{(1)} \right) + 1\\
    &< \left( e^{\Delta y_{t+1}} -1 \right) \sigma\left( y_{t+1}^{(1)} \right) + 1 = \gamma_M(t+1).
\end{align*}
\end{proof}

    Notice that, for $a>c>0, b>d>0, \frac{a-c}{b-d}< \frac{a}{b} \Leftrightarrow \frac{a}{b} < \frac{c}{d}$. Recall from \Cref{lem:ratio_sig_y} that $\gamma_V(t) = \frac{1-\sigma\left(y_t^{(1)}\right) - \left[ 1-\sigma\left(y_{t+1}^{(1)}\right)\right]}{1-\sigma\left(y_t^{(2)}\right) - \left[ 1-\sigma\left(y_{t+1}^{(2)}\right)\right]}$, hence $\gamma_V(t) < \gamma_M(t)$. Moreover,
\begin{align*}
    \gamma_V(t) &\leq \frac{y_{t+1}^{(1)} - y_t^{(1)}}{y_{t+1}^{(2)} - y_t^{(2)}} \frac{\sigma'\left(y_{t}^{(1)} \right)}{\sigma'\left(y_{t+1}^{(2)} \right)}\\
    &= \frac{y_{t+1}^{(1)} - y_t^{(1)}}{y_{t+1}^{(2)} - y_t^{(2)}} e^{y_{t+1}^{(2)} - y_t^{(1)}} \left( \frac{1+e^{-y_{t+1}^{(2)}}}{1+e^{-y_{t}^{(1)}}} \right)^2\\
    &\leq \frac{y_{t+1}^{(1)} - y_t^{(1)}}{y_{t+1}^{(2)} - y_t^{(2)}} e^{y_{t+1}^{(2)} - y_t^{(1)}} \left( \frac{1+e^{-y_{t+1}^{(2)}}}{1+e^{-y_{t}^{(1)}}} \right)
    \hspace{1em} \because \left( \frac{1+e^{-y_{t+1}^{(2)}}}{1+e^{-y_{t}^{(1)}}} \right) \in (0, 1].\\
    &= \frac{y_{t+1}^{(1)} - y_t^{(1)}}{y_{t+1}^{(2)} - y_t^{(2)}} e^{y_{t+1}^{(2)} - y_t^{(2)}} e^{\Delta y_t} \left( \frac{1+e^{-y_{t+1}^{(2)}}}{1+e^{-y_{t}^{(1)}}} \right)\\
    &\leq \frac{y_{t+1}^{(1)} - y_t^{(1)}}{y_{t+1}^{(2)} - y_t^{(2)}} e^{y_{t+1}^{(2)} - y_t^{(2)}} e^{\Delta y_t} \left( \frac{1+e^{-y_{t}^{(2)}}}{1+e^{-y_{t}^{(1)}}} \right)\\
    &\leq Re^{y_1^{(2)} - y_0^{(2)}}\gamma_M(t).
\end{align*}

Now we revisit \Cref{eq:var_comp_ineq}.
\begin{equation}
\begin{aligned}
\label{eq:var_mean_comp_ineq}
    \sqrt{\frac{1}{J(J-1)}\sum_{k=0}^{J-2} \sum_{l=k+1}^{J-1} \left[ \sigma\left( y_{t+l}^{(1)}\right) - \sigma\left( y_{t+k}^{(1)} \right) \right]^2} \left[\frac{1}{J}\sum_{\tau=t}^{t+J-1} 1-\sigma\left(y_\tau^{(1)}\right) \right]\\
    > \sqrt{\frac{1}{J(J-1)}\sum_{k=0}^{J-2} \sum_{l=k+1}^{J-1} \left[ \sigma\left( y_{t+l}^{(2)}\right) - \sigma\left( y_{t+k}^{(2)} \right) \right]^2} \left[\frac{1}{J}\sum_{\tau=t}^{t+J-1} 1-\sigma\left(y_\tau^{(2)}\right) \right]
\end{aligned}
\end{equation}
Assume, for the moment, that for some constant $C>1$, $\sigma\left(y_{t+1}^{(1)}\right) - \sigma\left(y_t^{(1)}\right) > C^{-1}\left[ \sigma\left(y_{t+1}^{(2)}\right) - \sigma\left(y_t^{(2)}\right) \right]$ but $1 - \sigma\left(y_t^{(1)}\right) > C^{2}\left[ 1 - \sigma\left(y_t^{(2)}\right) \right]$ for all large $t$. Then the ratio of the first term of the left-hand side of \cref{eq:var_mean_comp_ineq} to the first term of the right-hand side is greater than $C^{-2}$. Also, the ratio of the second term of the left-hand side of \cref{eq:var_mean_comp_ineq} to the second term of the right-hand side is greater than $C^2$. If so, we observe that 1) the inequality in \cref{eq:var_mean_comp_ineq} holds, 2) as the condition $\gamma_V(t) \geq 1$ for $T_v$ now changed to $\gamma_V(t) \geq C^{-1}$ for $T_{vm}$, hence $T_{vm} < T_v$ is guaranteed. It remains to find the constant $C$. Recall that, for all $t$,
\begin{equation*}
    \gamma_V(t) \leq Re^{y_1^{(2)} - y_0^{(2)}}\gamma_M(t).
\end{equation*}
If we set $Re^{y_1^{(2)} - y_0^{(2)}} = C^{-3}$, when $\gamma_V(t)$ becomes at least $C^{-1}$, we have $\gamma_M(t) \geq C^2$, satisfying the condition for $T_{vm}$.
If the learning rate is sufficiently small, then $\gamma_V(t)$ cannot significantly increase in one step, allowing $\gamma_V(t)$ to fall between $C^{-1}$ and $1$. 
Refer to \cref{fig:gamma_v} to observe that the graph of $\gamma_V(t)$ resembles that of a continuously increasing function.

\subsubsection{Monotonicity of \texorpdfstring{$\gamma_V(t)$}{Lemma 1.5}}
\label{sec:gamma_v_inc}

Recall that:
\begin{align*}
    \gamma_V(t) &\coloneq \frac{\sigma\left(y_{t+1}^{(1)}\right) - \sigma\left(y_t^{(1)}\right)}{\sigma\left(y_{t+1}^{(2)}\right) - \sigma\left(y_t^{(2)}\right)}\\
    &= \frac{y_{t+1}^{(1)} - y_t^{(1)}}{y_{t+1}^{(2)} - y_t^{(2)}}\frac{\sigma'\left(\zeta_t^{(1)} \right)}{\sigma'\left(\zeta_t^{(2)}\right)}
\end{align*}
for some $\zeta_t^{(1)}\in\left(y_t^{(1)}, y_{t+1}^{(1)}\right), \zeta_t^{(2)} \in \left(y_t^{(2)}, y_{t+1}^{(2)}\right)$ by the mean value theorem. The first term is shown to be increasing (to $R$). $\gamma_V(t)$ is increasing if the second term is also increasing in $t$.

Let $\Delta \zeta_t \coloneq \zeta_t^{(2)} - \zeta_t^{(1)}$. By \Cref{lem:yt2_too_large}, $\Delta \zeta_t > 0$.
\begin{align*}
    \frac{\sigma'\left(\zeta_t^{(1)} \right)}{\sigma'\left(\zeta_t^{(2)}\right)} &= \frac{e^{-\zeta_t^{(1)}}}{e^{-\zeta_t^{(2)}}}\left( \frac{1+e^{-\zeta_t^{(2)}}}{1+e^{-\zeta_t^{(1)}}} \right)^2\\
    &= e^{\Delta \zeta_t}\left( \frac{1+e^{-\zeta_t^{(1)}-\Delta\zeta_t}}{1+e^{-\zeta_t^{(1)}}} \right)^2.
\end{align*}
Define $g(x, y) \coloneq e^x\left(\frac{1+e^{-y-x}}{1+e^{-y}}\right)^2$. The partial derivatives satisfy:
\begin{equation*}
\begin{cases}
    \nabla_x g = \frac{\left( e^y-e^{-x} \right) \left( e^{x+y}+1 \right)}{\left( 1+e^{y} \right)^2}>0 \text{ for } x>0 \text{ if } y>0\\
    \nabla_y g = \frac{2e^{y-x}\left( e^x-1 \right) \left( e^{x+y}+1 \right)}{\left( 1+e^{y} \right)^3}>0, \forall y \text{ if } x>0.
\end{cases}
\end{equation*}
Notice that $\frac{\sigma'\left(\zeta_t^{(1)} \right)}{\sigma'\left(\zeta_t^{(2)}\right)} = g(\Delta \zeta_t, \zeta_t^{(1)})$. Since $\zeta_t^{(1)}\in\left(y_t^{(1)}, y_{t+1}^{(1)}\right)$ is (strictly) increasing and positive, if we show that $\Delta \zeta_t$ is increasing in $t$, we are done. Our result is that, if $y_{t+1}^{(i)} - y_{t}^{(i)}$ is small for $i=1,2$, $\zeta_t^{(i)} \approx \left( y_t^{(i)} + y_{t+1}^{(i)} \right)/2$ so $\Delta \zeta_t \approx \left( \Delta y_t + \Delta y_{t+1} \right)/2$, which is indeed increasing.

In particular, if (, assume for now) for all $t$,
\begin{align}
\label{eq:zeta_close_middle}
    \zeta_t^{(i)} &\in \left( \frac{2y_t^{(i)}+y_{t+1}^{(i)}}{3}, \frac{y_t^{(i)}+2y_{t+1}^{(i)}}{3} \right)\\ \nonumber
    \Rightarrow \; \Delta \zeta_t &\in \left( \frac{\Delta y_t + \Delta y_{t+1}}{3} + \frac{y_t^{(2)}-y_{t+1}^{(1)}}{3}, \frac{\Delta y_t + \Delta y_{t+1}}{3} +\frac{y_{t+1}^{(2)}-y_{t}^{(1)}}{3} \right)\\ \nonumber
    \Rightarrow \; \Delta \zeta_t &< \frac{\Delta y_t + \Delta y_{t+1}}{3} +\frac{y_{t+1}^{(2)}-y_{t}^{(1)}}{3} \\\nonumber
    &<\frac{\Delta y_{t+1} + \Delta y_{t+2}}{3} + \frac{y_{t+1}^{(2)}-y_{t+2}^{(1)}}{3} &(\dag)\\\nonumber
    &< \Delta \zeta_{t+1}
\end{align}
$(\dag)$ holds by \Cref{eq:synth_assump}:
\begin{align*}
    (\dag) \Leftrightarrow \;&\Delta y_{t+2} - \Delta y_{t} > y_{t+2}^{(1)} - y_{t}^{(1)}, \forall t\\
    \Leftarrow \;&\Delta y_{t+1} - \Delta y_{t} > y_{t+1}^{(1)} - y_{t}^{(1)}, \forall t\\
    \Leftrightarrow \;&\eta \left[ e^{-y_{t}^{(1)}}\left( \langle \vx_1, \vx_2 \rangle - \lVert \vx_1 \rVert^2 \right) + e^{-y_{t}^{(2)}}\left( \lVert \vx_2 \rVert^2 - \langle \vx_1, \vx_2 \rangle \right) \right] > \\ &\eta \left[ e^{-y_t^{(1)}} \lVert \vx_1 \rVert^2 + e^{-y_t^{(2)}}\langle \vx_1, \vx_2 \rangle \right], \forall t.
\end{align*}
It remains to show \Cref{eq:zeta_close_middle}. To this end, we use \Cref{lem:small_step_MVT}.
\begin{lemma}
\label{lem:small_step_MVT}
    Let $z_2>z_1(\geq0)$ be reals and $\zeta \in \left(z_1, z_2\right)$ be a number that satisfies the following: $\sigma\left(z_2\right)-\sigma\left(z_1\right) = \left(z_2-z_1\right)\sigma'(\zeta)$. Denote the midpoint of $\left(z_1, z_2\right)$ as $m \coloneq \left(z_1 + z_2\right)/2$. For $(1 \gg)\epsilon > 0$, if $z_2-z_1 < \mathcal{O} \left( \sqrt{\epsilon} \right)$ then $\lvert \zeta - m \rvert < \epsilon$.
\end{lemma}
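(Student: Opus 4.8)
The defining relation $\sigma(z_2) - \sigma(z_1) = (z_2 - z_1)\sigma'(\zeta)$ is exactly the statement that $\zeta$ is a mean value point: $\sigma'(\zeta)$ equals the secant slope $S \coloneq \frac{\sigma(z_2) - \sigma(z_1)}{z_2 - z_1}$. The plan is to show that this secant slope is extremely close to $\sigma'(m)$ — agreeing to second order in the window width — and then to invert the monotone map $\sigma'$ so as to transfer this closeness back to $\zeta$ being close to $m$. First I would set $h \coloneq z_2 - z_1$ and $s \coloneq h/2$, so that $z_1 = m - s$ and $z_2 = m + s$, and record that $\sigma$ is $C^\infty$ with all derivatives uniformly bounded on $\mathbb{R}$ (e.g. $\lVert\sigma'\rVert_\infty = 1/4$), which makes every Taylor remainder below uniform in $m$.

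The key estimate comes from expanding around the midpoint. Because $z_1, z_2$ are placed symmetrically about $m$, the even-order terms cancel in the difference while the odd-order terms survive, giving
\begin{equation*}
    S = \frac{\sigma(m+s) - \sigma(m-s)}{2s} = \sigma'(m) + \frac{\sigma'''(m)}{24}h^2 + \mathcal{O}(h^4).
\end{equation*}
Thus $S - \sigma'(m) = \mathcal{O}(h^2)$: the first-order contribution has cancelled, which is the entire source of the $\sqrt{\epsilon}$ (rather than $\epsilon$) threshold. Next I would use that on $[0,\infty)$ one has $\sigma''(z) = \sigma'(z)(1 - 2\sigma(z)) \le 0$, so $\sigma'$ is strictly decreasing on $[z_1, z_2] \subseteq [0,\infty)$ and $\zeta$ is the \emph{unique} solution of $\sigma'(\zeta) = S$. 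Writing $\sigma'(\zeta) - \sigma'(m) = \sigma''(\xi)(\zeta - m)$ for some $\xi$ between $\zeta$ and $m$, I obtain $|\zeta - m| = |S - \sigma'(m)| / |\sigma''(\xi)| = \mathcal{O}(h^2)/|\sigma''(\xi)|$. Wherever $|\sigma''|$ is bounded below by a positive constant this yields $|\zeta - m| = \mathcal{O}(h^2)$, and choosing the implied constant in $h < \mathcal{O}(\sqrt{\epsilon})$ appropriately forces $|\zeta - m| < \epsilon$.

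The main obstacle is the inflection point $z = 0$, where $\sigma''$ vanishes and the inversion above degenerates. I would handle it by a two-regime split. For $m \ge \delta$ (some fixed $\delta > 0$), $|\sigma''|$ is bounded below and the $\mathcal{O}(h^2)$ bound applies directly. For $0 \le m < \delta$, I would instead expand $\sigma'$ to second order about $m$ and exploit $\sigma'''(0) = -\tfrac{1}{8} \neq 0$: matching $\sigma''(m)(\zeta - m) + \tfrac{1}{2}\sigma'''(m)(\zeta - m)^2 + \mathcal{O}(h^3)$ against $S - \sigma'(m) = \tfrac{\sigma'''(m)}{24}h^2 + \mathcal{O}(h^4)$ produces a local quadratic in $\zeta - m$ whose admissible root satisfies $|\zeta - m| = \mathcal{O}(h)$ with a small explicit constant (at $z_1 = 0$ it evaluates to roughly $0.08\,h$). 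The delicate bookkeeping is that near the inflection the linear and quadratic terms in $(\zeta - m)$ are genuinely of the same order, so neither may be dropped; carrying both is what keeps the constant small enough that the conclusion — and in particular the middle-third containment invoked in \cref{eq:zeta_close_middle} — still holds for all sufficiently small $h$. I expect reconciling this $\mathcal{O}(h)$ behaviour at the inflection with the cleaner $\mathcal{O}(h^2)$ behaviour elsewhere to be the step demanding the most care.
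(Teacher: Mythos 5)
Your proof follows the same core route as the paper's: Taylor-expand $\sigma$ about the midpoint $m$ so that the symmetric placement of $z_1,z_2$ cancels the even-order terms, giving $\sigma'(\zeta)=\sigma'(m)+\tfrac{1}{24}\sigma'''(m)(z_2-z_1)^2+\mathcal{O}((z_2-z_1)^4)$, and then invert through the expansion of $\sigma'$ about $m$ to extract $\zeta-m$. The genuine difference is your explicit two-regime treatment of the inflection point, and it is a real improvement. The paper solves the quadratic $24\sigma''(m)(\zeta-m)+12\sigma'''(m)(\zeta-m)^2=\sigma'''(m)(z_2-z_1)^2+\mathcal{O}((z_2-z_1)^3)$ and reports $|\zeta-m|=\Theta((z_2-z_1)^2)$, but the leading coefficient it produces is $|\sigma'''(m)|/(24\,|\sigma''(m)|)$, which blows up as $m\to0^+$; the $\Theta(h^2)$ claim is therefore not uniform in $z_1$, and the degenerate case genuinely occurs in the application (at $t=0$ one has $y_0^{(i)}=0$, so $m=h/2$). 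Your near-inflection analysis, which keeps both the linear and quadratic terms in $\zeta-m$, correctly shows the bound there degrades to $|\zeta-m|\approx 0.08\,h$ --- linear in $h$, not quadratic. Note, however, that this means neither your argument nor the paper's establishes the lemma literally as stated in that regime: $h<\mathcal{O}(\sqrt{\epsilon})$ does not force $0.08\,h<\epsilon$ once $\epsilon$ is small. What does survive uniformly, and what \cref{eq:zeta_close_middle} actually requires, is the middle-third containment $|\zeta-m|<(z_2-z_1)/6$, which your constant $0.08<1/6$ delivers. So your write-up is, if anything, more faithful to what can be proved than the paper's; just be explicit that in the near-inflection regime the conclusion you obtain is the bound $|\zeta-m|\le c\,h$ with $c<1/6$, rather than the literal $\epsilon$-versus-$\sqrt{\epsilon}$ statement of the lemma.
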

\begin{proof}
    Expand the Taylor series of $\sigma$ at $m$ for $z_i$:
    \begin{equation*}
        \sigma\left(z_i\right) = \sigma(m) + \sigma'(m)\left(z_i-m\right) + \frac{1}{2!}\sigma''(m)\left(z_i-m\right)^2 + \frac{1}{3!}\sigma'''(m)\left(z_i-m\right)^3 + \mathcal{O}\left( \lvert z_i-m \rvert^4 \right)
    \end{equation*}
    We have:
    \begin{equation*}
        \sigma\left(z_2\right) - \sigma\left(z_1\right) = \sigma'(m)\left(z_2-z_1\right) + \frac{1}{24}\sigma'''(m)\left(z_2-z_1\right)^3 + \mathcal{O}\left( \left(z_2-z_1\right)^5 \right)
    \end{equation*}
    \begin{equation*}
        \sigma'\left(\zeta\right) = \sigma'(m) + \frac{1}{24}\sigma'''(m)\left(z_2-z_1\right)^2 + \mathcal{O}\left( \left(z_2-z_1\right)^4 \right)
    \end{equation*}

    Now, expand the Taylor series of $\sigma'$ at $m$ for $\zeta$:
    \begin{equation*}
        \sigma'\left(\zeta\right) = \sigma'(m) + \sigma''(m)\left(\zeta -m\right) + \frac{1}{2!}\sigma'''(m)\left(\zeta -m\right)^2 + \mathcal{O}\left( \lvert \zeta-m \rvert^3 \right)
    \end{equation*}

    Comparing the above two lines gives
    \begin{equation*}
        24 \sigma^{\prime \prime}(m)(\zeta-m)+12 \sigma^{\prime \prime \prime}(m)(\zeta-m)^2=\sigma^{\prime \prime \prime}(m)\left(z_2-z_1\right)^2+\mathcal{O}\left(\left(z_2-z_1\right)^3\right)
    \end{equation*}

    If $\sigma'''(m) = 0$ then $|\zeta - m| = \mathcal{O}\left(\left(z_2-z_1\right)^3\right)$, so $z_2-z_1 = \mathcal{O}\left( \sqrt{\epsilon} \right)$ is sufficient.
    
    Otherwise, we can solve the above for $\zeta-m$ from the fact that $\sigma''(z) < 0$ for $z > 0$:
    \begin{align*}
        12\sigma'''(m)(\zeta-m)&=-12 \sigma^{\prime \prime}(m)-\sqrt{\left(12 \sigma^{\prime \prime}(m)\right)^2+12 \sigma^{\prime \prime \prime}(m)\left[\sigma^{\prime \prime \prime}(m)\left(z_2-z_1\right)^2+\mathcal{O}\left(\left(z_2-z_1\right)^3\right)\right]}\\
        &=\frac{12\sigma'''(m)^2(z_2-z_1)^2}{24\sigma''(m)} + \mathcal{O}\left(\left(z_2-z_1\right)^3\right)
    \end{align*}
    The last equality is from the Taylor series $\sqrt{1+\frac{a}{x^2}}-1 = \frac{a}{2x^2} + \mathcal{O}\left( a^2x^{-4} \right)$, or $\sqrt{x^2+a}-x = \frac{a}{2x} + \mathcal{O}\left( a^2x^{-3} \right)$.
    We have $\lvert \zeta-m \rvert = \Theta\left( (z_2-z_1)^2 \right)$.
\end{proof}

For $\lvert \zeta_t^{(i)} - \left( y_t^{(i)} + y_{t+1}^{(i)} \right)/2 \rvert < \left( y_{t+1}^{(i)} - y_{t}^{(i)} \right)/6$, it suffices to have $y_{t+1}^{(i)} - y_{t}^{(i)} < \mathcal{O}\left( \sqrt{\left( y_{t+1}^{(i)} - y_{t}^{(i)} \right)/6} \right)$. 
This generally holds for sufficiently small $\eta$.

\vspace{1em}
\subsection{Experimental Results under Synthetic Setting}
\label{sec:syn_expt}
This section displays the figures plotted from the experiments on the synthetic dataset. We choose $\gX = \sR^2$ and $\gD = \left\{\ ((0.1, 0.1), 1), ((10, 5), 1) \right\}$. We fix $J=10$ and $\eta=0.01$ (unless specified). The total time of training $T$ is specified for each figure for neat visualization.

\begin{figure}[ht]
    \centering
    \label{fig:syn_weight_evolve}
    \includegraphics[width=0.65\linewidth]{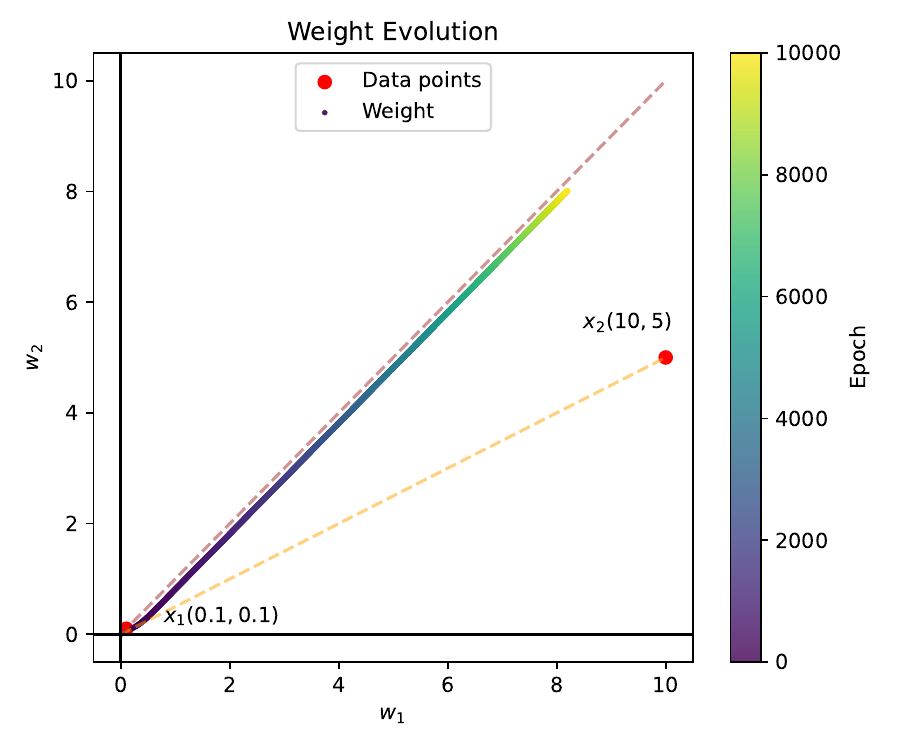}
    \caption{Illustration of the evolution of the weight as the model learns from the two-point dataset. Observe that the weight learns $\vx_2$ first (closer to the orange dashed line), but gradually moves towards $\vx_1$ (closer to the brown dashed line). Here $T=10,000$.}
\end{figure}

We also empirically validate our statements of \cref{sec:gamma_v_inc}. \cref{fig:empirical_valid_inc} shows that $\gamma_V(t)$ and $\Delta\zeta_t$ are indeed increasing functions. \cref{fig:empirical_valid_midpoint} shows that $\zeta_t^{(i)}$ is sufficiently close to the midpoint of the interval it lies in, $\left(y_t^{(i)}, y_{t+1}^{(i)}\right)$.

\begin{figure}[htbp] 
    \centering
    \begin{subfigure}{0.4\textwidth}
        \centering
        \includegraphics[width=\textwidth]{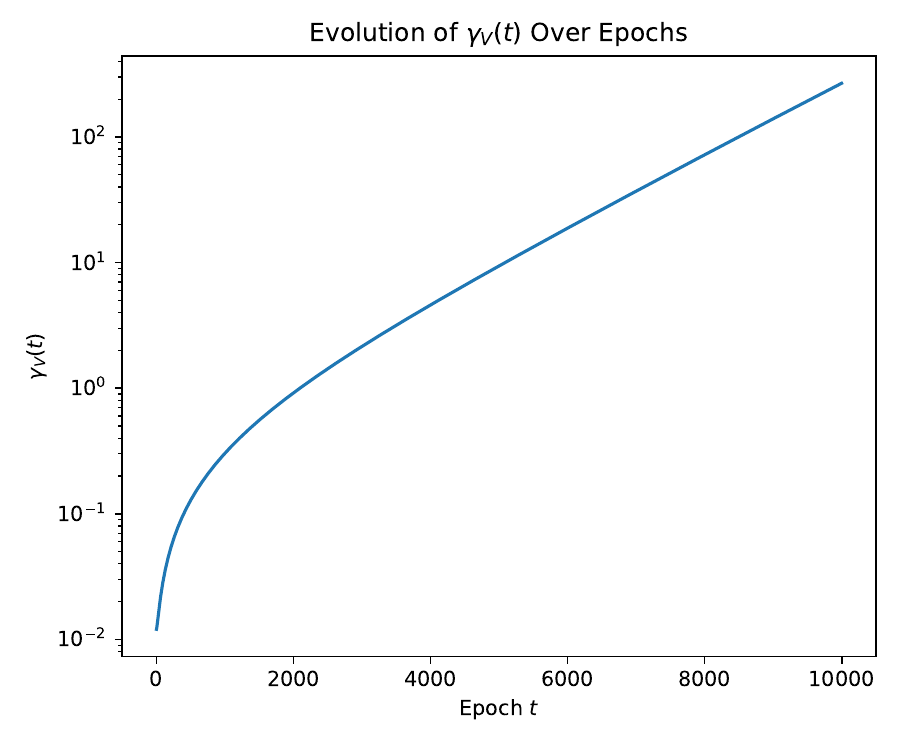}
        \caption{$\gamma_V(t)$ in log scale.}
        \label{fig:gamma_v}
    \end{subfigure}
    \hfill
    \begin{subfigure}{0.4\textwidth}
        \centering
        \includegraphics[width=\textwidth]{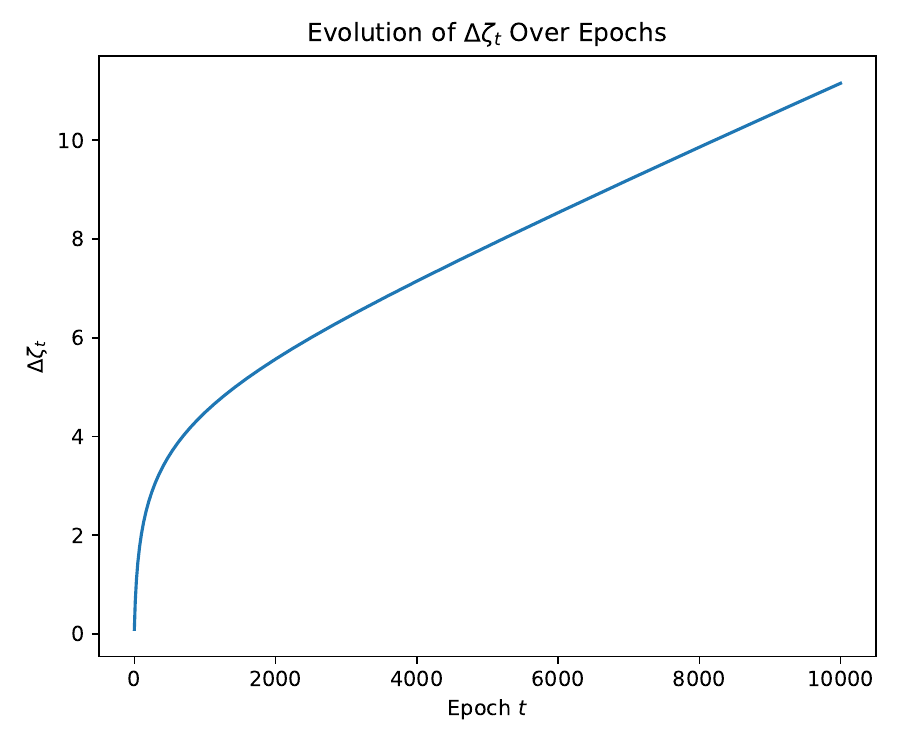} 
        \caption{$\Delta\zeta_t$}
        \label{fig:delta_zetat}
    \end{subfigure}
    \caption{Empirical validations of the critical statements in \cref{sec:gamma_v_inc}. We ran experiments and plot the results that both $\gamma_V(t)$ (left---in log scale) and $\Delta \zeta_t$ (right) are an increasing sequence in terms of $t$. Here, we set $\eta=0.0005$. The reason is that if the learning rate is larger, $\sigma(y_t^{(2)})$ quickly saturates to 1, leading to a possibility of division by zero in $\gamma_V(t)$ and degradation in numerical stability of $\Delta \zeta_t$. Moreover, notice that the graph of $\gamma_V(t)$ in the log scale closely resembles that of $\Delta\zeta_t$ in the original scale.}
    \label{fig:empirical_valid_inc}
\end{figure}

\begin{figure}[htbp] 
    \centering
    \begin{subfigure}{0.4\textwidth}
        \centering
        \includegraphics[width=\textwidth]{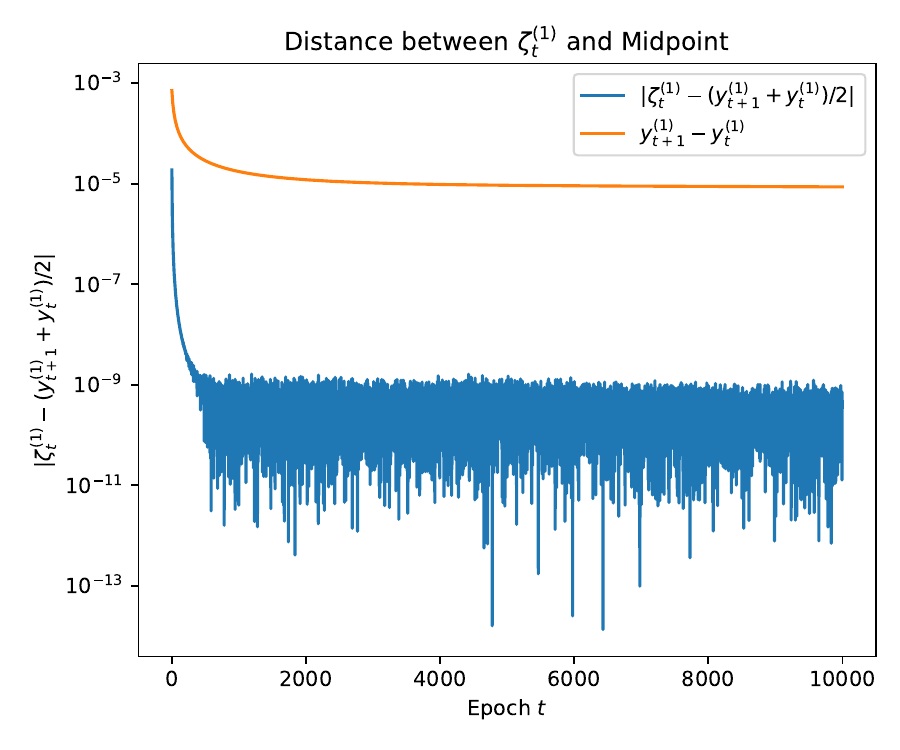}
        \caption{$|\zeta_t^{(1)} - (y_t^{(1)} + y_{t+1}^{(1)})/2|$ in log scale.}
        \label{fig:zeta1}
    \end{subfigure}
    \hfill
    \begin{subfigure}{0.4\textwidth}
        \centering
        \includegraphics[width=\textwidth]{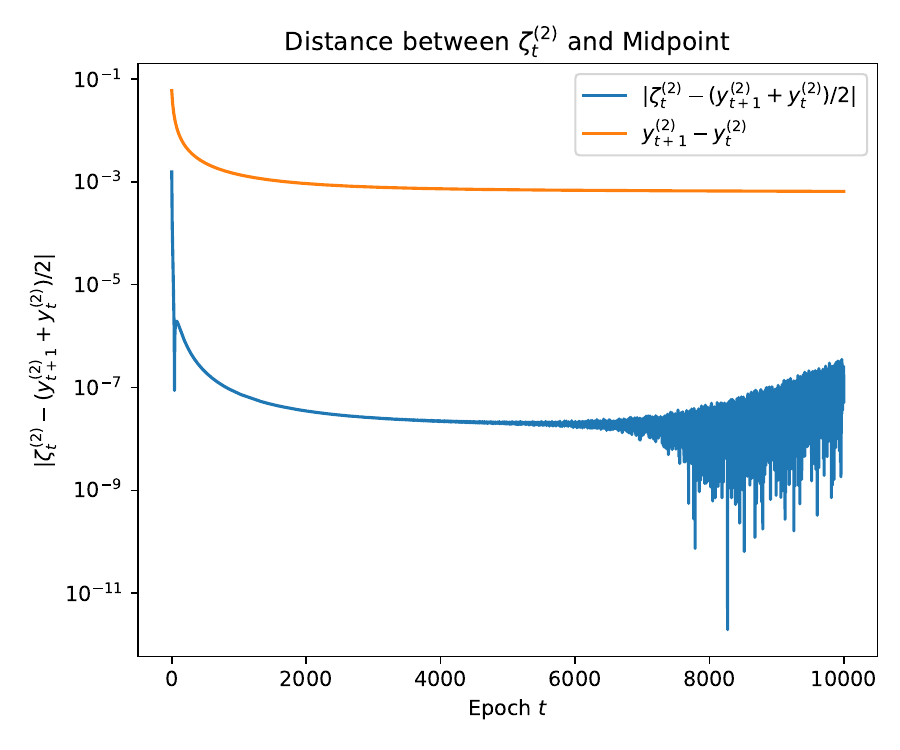} 
        \caption{$|\zeta_t^{(2)} - (y_t^{(2)} + y_{t+1}^{(2)})/2|$ in log scale.}
        \label{fig:zeta2}
    \end{subfigure}
    \caption{Empirical validations of the critical statements in \cref{sec:gamma_v_inc}. We ran experiments and plot the results that both $\zeta_t^{(1)}$ (left) and $\zeta_t^{(2)}$ (right) are extremely close to the midpoint $(y_t^{(1)} + y_{t+1}^{(1)})/2$ and $(y_t^{(2)} + y_{t+1}^{(2)})/2$, compared to the interval length, respectively. In both plots, the blue line is the true distance while the orange line is the interval length. Here, we set $\eta=0.0005$ for the same reasoning of \cref{fig:empirical_valid_inc}. Empirically, the noise introduced by MVT is too small to deny that $\Delta \zeta_t$ is an increasing sequence.}
    \label{fig:empirical_valid_midpoint}
\end{figure}

\pagebreak
We also show that we can observe the ``flow'' of the moon plot as in \cref{fig:Moon_plot} for the synthetic dataset.

\begin{figure}[ht]
    \centering
    
    \includegraphics[width=0.98\linewidth]{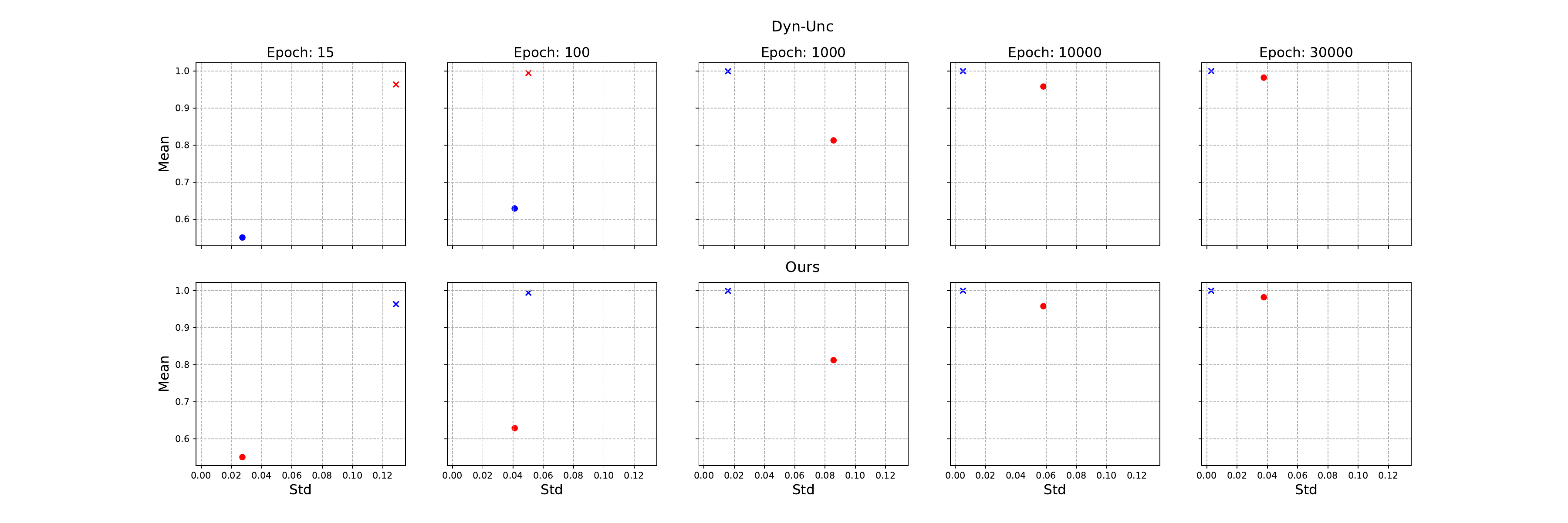}
    \caption{Evolution of $\vx_1, \vx_2$ by their mean and standard deviation in prediction probabilities at different epochs. The marker `o' and `x' stands for $\vx_1$ and $\vx_2$, respectively. The red color indicates the sample to be selected, and the blue color indicates the sample to be pruned. Observe that the path that each data point draws resembles is of moon-shape. Here $T=30,000$.}
    \label{fig:syn_moon}
\end{figure}

\end{document}